\documentclass{article}

\usepackage{microtype}
\usepackage{graphicx}
\usepackage{subcaption}
\usepackage{booktabs} % for professional tables

\usepackage{hyperref}
\usepackage{natbib}
\usepackage{xcolor}
\usepackage{url}
\usepackage{bm}
\usepackage{amsmath}
\usepackage{amssymb}
\usepackage{mathtools}
\usepackage{amsthm}
\usepackage{thmtools}
\usepackage{amsfonts}
\usepackage{xcolor}
\usepackage{graphicx}
\usepackage{caption}
\usepackage{makecell}
\usepackage{wrapfig}
\usepackage{tikz}
\usepackage{subcaption}
\usepackage{tabularx}
\usepackage{makecell}
\usepackage{multirow}
\usepackage{booktabs}
\usepackage{enumitem}
\usepackage{thm-restate}
\usepackage[table]{xcolor}
\usepackage{float}
\usepackage{enumitem}

\usepackage{algorithm}
\usepackage{algorithmic}

\renewcommand{\algorithmicrequire}{\textbf{Require:}}
\renewcommand{\algorithmicensure}{\textbf{Ensure:}}

\usepackage[accepted]{icml2026}

\usepackage{amsmath}
\usepackage{amssymb}
\usepackage{mathtools}
\usepackage{amsthm}

\usepackage[capitalize, noabbrev]{cleveref}

\theoremstyle{plain}
\newtheorem{theorem}{Theorem}[section]
\newtheorem{proposition}[theorem]{Proposition}
\newtheorem{lemma}[theorem]{Lemma}

\theoremstyle{definition}
\newtheorem{definition}[theorem]{Definition}
\newtheorem{assumption}[theorem]{Assumption}
\theoremstyle{remark}

\crefname{assumption}{Assumption}{Assumptions}

\usepackage[textsize=tiny]{todonotes}
\usepackage[T1]{fontenc}

\icmltitlerunning{Compositional Transduction with Latent Analogies for Offline GCRL}

\begin{document}

\twocolumn[
  \icmltitle{Compositional Transduction with Latent Analogies for Offline Goal-Conditioned Reinforcement Learning}

  \icmlsetsymbol{equal}{*}

  \begin{icmlauthorlist}
    \icmlauthor{Junseok Kim}{snueceasri}
    \icmlauthor{Dohyeong Kim}{indep}
    \icmlauthor{Mineui Hong}{cmu}
    \icmlauthor{Songhwai Oh}{snueceasri}
  \end{icmlauthorlist}

  \icmlaffiliation{snueceasri}{Department of Electrical and Computer Engineering and ASRI, Seoul National University}
  \icmlaffiliation{indep}{Independent researcher}
  \icmlaffiliation{cmu}{Robotics Institute, Carnegie Mellon University}

  \icmlcorrespondingauthor{Songhwai Oh}{songhwai@snu.ac.kr}

  % You may provide any keywords that you find helpful for describing your
  % paper; these are used to populate the "keywords" metadata in the PDF but
  % will not be shown in the document
  \icmlkeywords{Machine Learning, ICML}

  \vskip 0.3in
]

% this must go after the closing bracket ] following \twocolumn[ ...

% This command actually creates the footnote in the first column listing the
% affiliations and the copyright notice. The command takes one argument, which
% is text to display at the start of the footnote. The \icmlEqualContribution
% command is standard text for equal contribution. Remove it (just {}) if you
% do not need this facility.

% Use ONE of the following lines. DO NOT remove the command.
% If you have no special notice, KEEP empty braces:
\printAffiliationsAndNotice{}  % no special notice (required even if empty)
% Or, if applicable, use the standard equal contribution text:
% \printAffiliationsAndNotice{\icmlEqualContribution}

\begin{abstract}
Compositional generalization is essential for reaching unseen goals under novel contextual variations in offline goal-conditioned reinforcement learning (GCRL), where a generalist goal-reaching agent must be learned from limited data. Most prior approaches pursue this via trajectory stitching over temporally contiguous segments, which limits composing behaviors across varying contexts. To overcome this limitation, we formalize \emph{analogy transduction} as synthesizing new plans by composing task-endogenous analogies with given contexts and propose a novel analogy representation tailored for it. Grounded in our theory, this analogy representation captures what changes under optimal task execution, remains invariant to contextual variations, and is sufficient for optimal goal reaching. We further contend that generalization to unseen analogy-context pairs is a practical obstacle in analogy transduction, and introduce a new approach for offline GCRL that enables analogy transduction beyond seen pairs to unseen combinations. We empirically demonstrate the effectiveness of our approach on OGBench manipulation environments, substantially outperforming prior methods that do not perform analogy transduction. Project page: \url{https://rllab-snu.github.io/projects/CTA/}
\end{abstract}

\section{Introduction}\label{sec:intro}
Humans can readily reproduce previously learned behaviors even when the surrounding environment changes. 
For instance, after opening a drawer in a room with an open window, one can perform the same drawer-opening behavior when the window is closed. 
Such reuse and recombination of past behaviors to solve new instances is broadly referred to as \emph{compositional generalization}~\citep{wiedemer2023compositional, ghugare2024closing}, and it is widely regarded as a central challenge in training generalist robotic agents. 
The challenge is particularly acute in offline goal-conditioned reinforcement learning (RL), where a primary goal is to learn a generalist goal-reaching agent from reward-free data, without additional interaction to resolve missing behavior compositions~\citep{kaelbling1993learning, levine2020offline}.

\begin{figure}[t]
  \centering
  \makebox[\columnwidth][l]{%
    \hspace*{0.02\columnwidth}%
    \includegraphics[width=0.92\columnwidth]{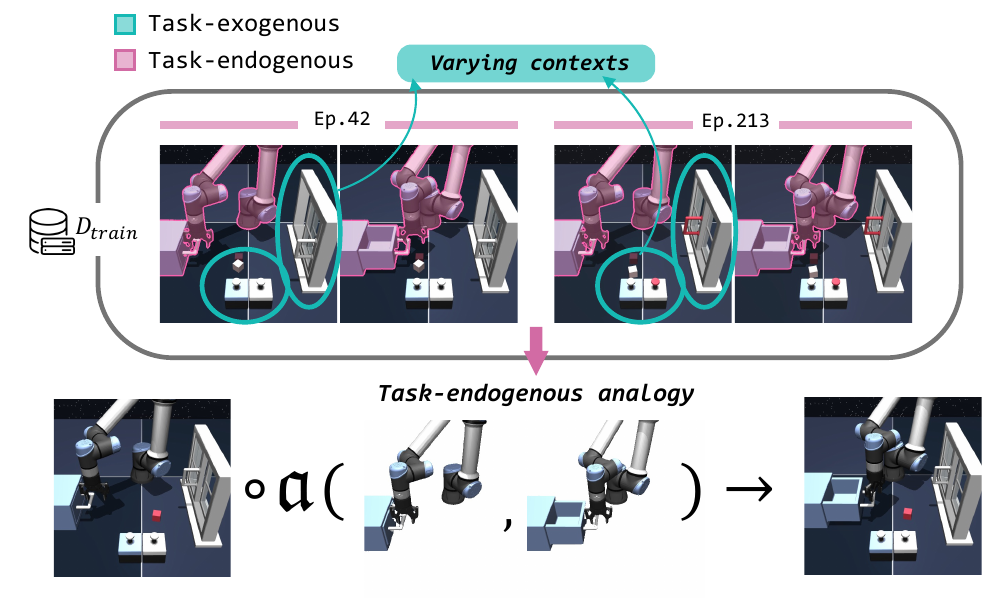}%
  }
  \captionsetup{type=figure, skip=-8pt, font=small, width=\columnwidth}
  \caption{\textbf{Analogy transduction.} Analogy transduction synthesizes new plans by composing task-endogenous analogies into the current context; for instance, an analogy $\mathfrak{a}$ captures \textit{drawer opening} from trajectories under diverse contexts, enabling the agent to open the drawer when the window is closed and unlocked, which may be an absent context from the data.}
  \label{fig:analogy_transduction}
  \vspace{-20pt}
\end{figure}

In offline GCRL, compositional generalization is commonly instantiated through \emph{trajectory stitching}~\citep{ghugare2024closing}, which synthesizes novel goal-reaching behaviors by connecting temporally adjacent transitions. While effective for composing temporally extended behaviors, trajectory stitching does not directly address a complementary regime of compositionality: reusing the same task-relevant transformation across varying task-irrelevant contexts. This limitation naturally motivates the following question: \textit{``Beyond trajectory stitching, can we leverage past behaviors collected under different contexts to infer the same underlying task in the current context?''}
We refer to this synthesis of goal-reaching behavior by transplanting task-endogenous analogies across contexts as \emph{analogy transduction}
%\footnote{\emph{Transductive} learning denotes making predictions for specific test instances by exploiting structure in the observed data, whereas \emph{inductive} learning aims to infer a general rule that applies to unseen instances~\citep{gammerman1998learning}.} (see \cref{fig:analogy_transduction}).
\footnote{\emph{Transduction} denotes predicting specific test instances by exploiting the structure of observed data~\citep{gammerman1998learning}.}.

To enable analogy transduction, we first introduce a novel representation of the task-endogenous analogy. Specifically, we define the analogy between a state $s$ and a goal $g$ as the difference between two optimal temporal distance fields over the entire state space, one anchored at $g$ and the other anchored at $s$.
The proposed analogy representation captures what needs to be changed to reach the goal, while ignoring irrelevant context differences. These analogies are theoretically grounded in a modification of the block controlled Markov process (BCMP) framework~\citep{du2019provably, efroni2022provably, lamb2023guaranteed}, under which task-irrelevant contexts are treated as noise and thus ignored.

We further contend that, under the practical data scarcity of offline GCRL, improving compositional generalization through analogy transduction hinges on the ability to compose unseen analogy--context combinations, which can be viewed as a case of out-of-combination (OOC) generalization~\citep{netanyahu2023learning} (see \cref{sec:prelim}). To this end, we propose \textbf{C}ompositional \textbf{T}ransduction with latent \textbf{A}nalogies (\textbf{CTA}), a new approach for offline GCRL that fully exploits analogy transduction to support goal-reaching across both in-distribution and OOC analogy--context combinations.

Our contributions are as follows. First, we formalize analogy transduction as synthesizing goal-reaching behaviors by composing task-endogenous analogies with task-exogenous contexts, broadening the scope of compositional generalization. Second, we introduce a novel task-endogenous analogy representation, which has useful properties grounded in theoretical analysis. Third, we propose the CTA, a practical approach for offline GCRL that is suitable for analogy transduction with both seen and unseen analogy--context combinations. Finally, we empirically demonstrate the effectiveness of CTA on OGBench~\citep{park2025ogbench} manipulation environments, improving average performance over the strongest baseline by about 42\%.

\section{Related Work}
Offline goal-conditioned RL aims to learn a generalist agent that reaches arbitrary goals in the fewest timesteps possible from unlabeled, reward-free data. Prior work has studied how to estimate minimal timestep-to-go via contrastive learning~\citep{eysenbach2022contrastive, myers2024learning}, by learning a quasimetric~\citep{wang2023optimal, myers2025offline, zheng2026multistep}, through value learning~\citep{park2023hiql, lee2025temporal, giammarino2025physics}, via occupancy matching~\citep{ma2022far, sikchi2023score} or by projecting onto geometric structures~\citep{park2024foundation}, often yielding reusable  representations~\citep{ma2022vip,park2026dual}. In this paper, we propose a novel analogy representation that captures task semantics and enables reuse across contexts in offline GCRL.

Analogies, structure-preserving correspondences across entities, are a common concept in representation learning~\citep{carbonell1983learning}. We focus on explicit vector-space analogies, in contrast to latent analogies used for content-preserving transformations in vision~\citep{karras2019style, radford2021learning} and control~\citep{ghosh2018learning, jang2018grasp2vec, chen2023multi}. 
Classic word embeddings learn approximately linear offsets that encode relations~\citep{mikolov2013efficient, pennington2014glove}, enabling arithmetic transfer, e.g., $\phi(\text{France}) + \big(\phi(\text{Berlin}) - \phi(\text{Germany})\big) \approx \phi(\text{Paris})$. 
In sequential decision making, trajectory analogies support compositional planning~\citep{devin2019plan}, and our closest connection is goal-conditioned bisimulation analogies~\citep{hansen2022bisimulation}. 
Unlike goal-conditioned bisimulation, our temporal distance difference analogies are defined via the optimal temporal distance $d^*$ and avoid reward matching and bootstrapping, making them better suited for offline GCRL
(see \cref{appendix:extended_preliminaries:gcb}).

\section{Preliminaries} \label{sec:prelim}
\paragraph{Notation.} For any set $\mathcal{X}$, $\Delta(\mathcal{X})$ denotes the set of probability distributions over $\mathcal{X}$. For any $q\in\Delta(\mathcal{X})$, $\mathrm{supp}(q)$ denotes its support. %For a function $f$, $\mathrm{dom}(f)$ denotes its domain.
\vspace{-10pt}
\paragraph{Goal-conditioned reinforcement learning.}
A \emph{controlled Markov process} (CMP) is defined by a tuple $\mathcal{M}=(\mathcal{S},\mathcal{A},\mathcal{P})$, where $\mathcal{S}$ is a state space, $\mathcal{A}$ is an action space, and the transition dynamics is a mapping
$\mathcal{P}:\mathcal{S}\times\mathcal{A}\to\Delta(\mathcal{S})$.
Given a goal $g\in\mathcal{S}$,
a goal-conditioned policy $\pi:\mathcal{S}\times\mathcal{S}\to\Delta(\mathcal{A})$,
a goal-conditioned reward $r:\mathcal{S}\times\mathcal{S}\to\mathbb{R}$
and a discount factor $\gamma\in(0,1)$, 
we define a value function as
\begin{equation}
    V^\pi(s,g) \coloneqq \mathbb{E}^\pi\!\left[\sum_{t=0}^{\infty}\gamma^t\,r(s_t,g)\ \Big|\ s_0=s\right],
\end{equation}
where the expectation is taken over trajectories induced by $a_t\sim\pi(\cdot\mid s_t,g)$ and $s_{t+1}\sim\mathcal{P}(\cdot\mid s_t,a_t)$, with $s_0=s$.
We adopt the standard goal-reaching convention $r(s,g)=\mathbf{1}_{\{s=g\}}$ with $g$ absorbing, \textit{i.e.,} once $g$ is reached, the process remains at $g$ and the reward is collected only once.
The goal of goal-conditioned RL (GCRL) is to learn a generalist policy that, for any state--goal pair $(s,g)$, maximizes the expected discounted return; equivalent to reaching $g$ from $s$ in as few steps as possible.
Accordingly, we define the optimal value function as $V^*(s,g)\coloneqq \sup_{\pi} V^\pi(s,g)$. 

We also define the \emph{on-policy} temporal distance as $d^\pi(s,g) \coloneqq \log_\gamma V^\pi(s,g),$
and the \emph{optimal} temporal distance as
\begin{equation}
    d^*(s,g)\coloneqq \log_\gamma V^*(s,g),
\end{equation}
which reduces to the shortest path length from $s$ to $g$ in deterministic environments.
Throughout this paper, we use the term “temporal distance” to refer to the optimal temporal distance $d^*$ unless stated otherwise.

\paragraph{Out-of-combination generalization.}
Consider a problem of estimating a target function $h:\mathcal{X}\to\mathcal{Y}$ at a query input $x\in\mathcal{X}$, where $\mathcal{X}$ is an input space and $\mathcal{Y}$ is a target space. 
Assume  $\mathcal{X}$ is group-structured and admits a displacement mapping
$\delta:\mathcal{X}\times\mathcal{X}\to\delta\mathcal{X}$ together with an apply operator
$\odot:\mathcal{X}\times\delta\mathcal{X}\to\mathcal{X}$ such that $\hat{x}\odot\delta(x,\hat{x})=x$, for all $x, \hat x\in\mathcal X$.
This induces a transductive factorization of a query $x$ into an \emph{anchor} $\hat{x}$ and a \emph{displacement} $\delta(x,\hat{x})$, and yields the reparameterization
$h(x)=\hat{h}\bigl(\hat{x},\delta(x,\hat{x})\bigr)$
with a deterministic map $\hat{h}:\mathcal{X}\times\delta\mathcal{X}\to\mathcal{Y}$.

Let $\bar P_{\mathrm{train}},\bar P_{\mathrm{test}}\in
\Delta(\mathcal X\times\delta\mathcal X)$ denote the train and test
distributions over pairs $(\hat{x},\delta)$, and let
$\bar P_{\cdot,\hat{x}}$ and $\bar P_{\cdot,\delta}$ denote their marginals.
\emph{Out-of-combination} (OOC) corresponds to the regime where anchors and displacements are each
individually in-support, while they are jointly out-of-support:
{\setlength{\abovedisplayskip}{6pt}
 \setlength{\belowdisplayskip}{6pt}
 \setlength{\abovedisplayshortskip}{6pt}
 \setlength{\belowdisplayshortskip}{6pt}
\begin{equation}\nonumber
\begin{aligned}
\mathrm{supp}(\bar P_{\mathrm{test},\hat{x}})&\subseteq
\mathrm{supp}(\bar P_{\mathrm{train},\hat{x}}),\\
\mathrm{supp}(\bar P_{\mathrm{test},\delta})&\subseteq
\mathrm{supp}(\bar P_{\mathrm{train},\delta}),\\
\mathrm{supp}(\bar P_{\mathrm{test}})&\nsubseteq
\mathrm{supp}(\bar P_{\mathrm{train}}).
\end{aligned}
\end{equation}
}

\emph{Extrapolation} then refers to estimating an out-of-support query $x$ by
generalizing to its induced OOC anchor--displacement pair $(\hat{x},\delta)$
at test time.
To extrapolate to an OOC query $x$ and reliably estimate $h(x)$, \citet{netanyahu2023learning} propose \emph{bilinear transduction},
a transductive approach that approximates $\hat{h}$ with the bilinear form
{\setlength{\abovedisplayskip}{6pt}
 \setlength{\belowdisplayskip}{6pt}
 \setlength{\abovedisplayshortskip}{6pt}
 \setlength{\belowdisplayshortskip}{6pt}
\[
\hat{h}\bigl(\hat{x},\delta(x,\hat{x})\bigr)\simeq h_1(\hat{x}) \boldsymbol{\cdot} h_2\bigl(\delta(x,\hat{x})\bigr).
\]
}Under standard assumptions, this approximation admits a guaranteed error bound when estimating a target function on OOC queries due to the low-rank property of the embeddings $h_1$ and $h_2$ (see \cref{appendix:extended_preliminaries:bt}).

In this paper, we adopt this transductive factorization for analogy transduction, using the initial state as an \emph{anchor} and the task-endogenous analogy as a \emph{displacement}.
We apply it to our goal-conditioned value and policy parameterizations to extrapolate to unseen OOC analogy--context compositions, enabling task execution under novel contexts.

\section{Distance Difference Fields as Analogies}
Our goal is to learn a generalist goal-reaching agent with strong compositional generalization by composing diverse task-endogenous analogies and contexts. To this end, we need an effective analogy representation that fully captures task-relevant information and transfers across contexts.
How can we extract such analogies from reward-free data?

\begin{figure*}[t]
  \centering
  %\hspace*{0.05\textwidth}
  \includegraphics[width=0.90\textwidth]{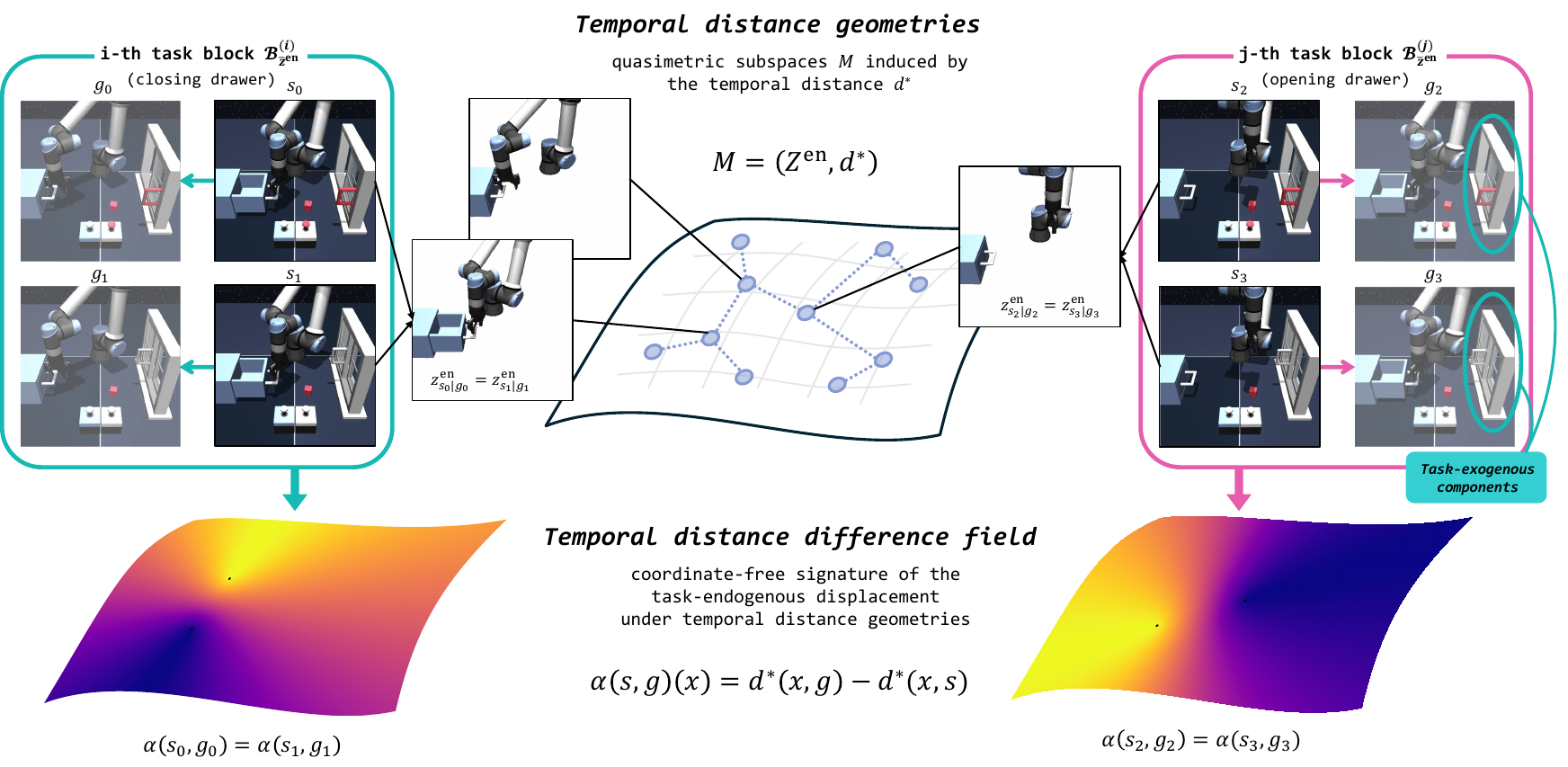}
  \captionsetup{skip=8pt, font=small, width=\textwidth}
  \caption{\textbf{Temporal distance geometry and temporal distance difference field.} \emph{Temporal distance geometry} is the quasimetric space $(\mathcal Z^{\mathrm{en}}, d^*)$ induced by the optimal temporal distance over task-endogenous states, and is invariant to variations in task-exogenous contexts. Here, latent states whose task-endogenous components involve the drawer and the robot arm form a shared geometry across different tasks (e.g., opening vs.\ closing the drawer), independent of task-exogenous contextual factors such as the window state. Each task can be characterized by the \emph{temporal distance difference field}, a coordinate-free signature of the task-endogenous state--goal displacement within the temporal distance geometry. We use this field as an analogy representation, shared within each task block.
  }
  \label{fig:temporal_distance_geometries}
\end{figure*}

\subsection{Key Insights for Analogy Extraction}\label{subsec:insight}
We take an analogous view at the task level and say that a collection of state--goal pairs shares the same \emph{task} if they admit the same optimal execution pattern for reaching a goal from a state. Assume that each state (or goal) can be decomposed into \emph{task-endogenous components} that must change to accomplish the task, and \emph{task-exogenous components} that need not change during the optimal task execution.

In this view, an ideal analogy representation should satisfy two desirable properties. First, it should be invariant to variations in task-exogenous components. Second, it should encode the task-endogenous state--goal displacement in a sufficiently informative manner without degenerate collapse, retaining the information needed for goal-reaching. These properties allow analogies to be reused across diverse contexts as task-level semantics, which is an instance of functional equivariance~\citep{hansen2022bisimulation}.

We recall that the optimal temporal distance $d^*(s,g)$ defines a quasimetric\footnote{A quasimetric is a mathematical metric without the symmetry requirement ($d(x,y)\neq d(y,x)$ in general).} on the state space, and $(\mathcal S,d^*)$ constitutes a quasimetric space~\citep{wang2022learning}, which we refer to as the \emph{temporal distance geometry}.
Our first insight is that the temporal distance geometry is invariant to variations in the task-exogenous components. This intuition can be understood by noting that the temporal distance is determined by which components must be changed under optimal control: whenever two state--goal pairs require the same change in the task-endogenous components, they induce the same relative temporal distance relationships, even in different contexts (see~\cref{fig:temporal_distance_geometries}).
However, a single temporal distance $d^*(s,g)$ can be degenerate, as distinct tasks may collapse to the same value, motivating a richer representation beyond a single scalar.

To obtain such a representation, we draw inspiration from distance-difference representations in geometry.
Prior work~\citep{lassas2019determination, ivanov2020distance} represents a manifold equipped with a geodesic distance $d$ in a coordinate-free manner by embedding each point $x$ into its distance difference function $D_x(y,z)=d(x,y)-d(x,z)$ over an observation set, under which $x$ can be identified from $D_x$ and the underlying metric structure is preserved~\citep{ivanov2020distance}.
Here, our second insight is that a temporal distance difference field yields a sufficiently informative description of the state--goal displacement under the temporal distance geometry. Accordingly,
we represent the analogy of a state--goal pair $(s,g)$ by the temporal distance difference field $\alpha(s,g):\mathcal{S}\to\mathbb{R}$:
\begin{equation} \label{eq:temporal_distance_difference_field}
    \alpha(s,g)(x)=d^*(x,g)-d^*(x,s),
\end{equation}
which can be viewed as evaluating $D_x(g,s)$ under the temporal distance $d^*$. Intuitively, $\alpha(s,g)(x)$ compares the difficulty of reaching $g$ versus $s$ from the probe state $x$. Aggregating this comparison over all states, the field $\alpha(s,g)$ provides an informative signature of the state--goal displacement, mitigating degenerate collapse (see \cref{fig:temporal_distance_geometries}).

In the next section, we formalize these insights within a variant of block CMP (BCMP)~\citep{du2019provably} (see \cref{appendix:extended_preliminaries:exbcmp}) and show that the temporal distance difference field in \eqref{eq:temporal_distance_difference_field} is invariant to variations in task-exogenous components, encodes task-endogenous displacement, and is sufficient for goal-reaching.

\subsection{Goal-Conditioned Endogenous BCMP}\label{subsec:GCE-BCMP}
To formalize our task-endogenous/exogenous decomposition in the previous section, we introduce a goal-conditioned endogenous BCMP. For readability, we provide a brief version here; see~\cref{appendix:GCE-BCMP} for the full definition.
\begin{definition}[GCE-BCMP] \label{def:GCE-BCMP-brief}
A \emph{goal-conditioned endogenous block controlled Markov process} (GCE-BCMP) is specified by a tuple
$(\bar{\mathcal{S}},\bar{\mathcal{Z}},\mathcal{A},\mathcal{P},f^e)$, where
$\bar{\mathcal{S}}:=\mathcal{S}\times\mathcal{S}$ is a product observation space,
$\bar{\mathcal{Z}}:=\mathcal{Z}\times\mathcal{Z}$ is a product latent state space,
$\mathcal{A}$ is an action space,
$\mathcal{P}:\bar{\mathcal{Z}}\times\mathcal{A}\to\Delta(\bar{\mathcal{Z}})$ is a latent transition kernel,
and $f^e:\bar{\mathcal{Z}}\to\Delta(\bar{\mathcal{S}})$ is an emission function.
Let $\mathrm{supp}(f^e):=\bigcup_{\bar z\in\bar{\mathcal Z}}\mathrm{supp}\big(f^e(\cdot\mid \bar z)\big)\subseteq \bar{\mathcal S}$.
The GCE-BCMP makes the following assumptions.

\textbf{(\emph{Block assumption})}
The emission supports are disjoint:
%{\setlength{\abovedisplayskip}{6pt}
% \setlength{\belowdisplayskip}{6pt}
% \setlength{\abovedisplayshortskip}{6pt}
% \setlength{\belowdisplayshortskip}{6pt}
\[
\mathrm{supp}\big(f^e(\cdot\mid \bar z_i)\big) \ \cap \ \mathrm{supp}\big(f^e(\cdot\mid \bar z_j)\big)=\emptyset,
\ \
\forall\, \bar z_i\neq \bar z_j\in\bar{\mathcal Z}.
\]
Thus, there exists a deterministic mapping $f^\ell:\mathrm{supp}(f^e)\to\bar{\mathcal{Z}}$ and two deterministic families
$\{f^\ell_g:\mathcal S\to\mathcal Z\}_{g\in\mathcal S}$ and
$\{f^\ell_s:\mathcal S\to\mathcal Z\}_{s\in\mathcal S}$ such that
$f^\ell(s,g)=\big(f^\ell_g(s),\,f^\ell_s(g)\big):=(z_{s\mid g}, z_{g\mid s})$ for all $(s,g)\in\mathrm{supp}(f^e)$.
Each latent component admits a factorization $\mathcal Z=\mathcal Z^{\mathrm{en}}\times\mathcal Z^{\mathrm{ex}}$, so that
\[
z_{s\mid g}:=\big(z_{s\mid g}^{\mathrm{en}},z_{s\mid g}^{\mathrm{ex}}\big),
\quad
z_{g\mid s}:=\big(z_{g\mid s}^{\mathrm{en}},z_{g\mid s}^{\mathrm{ex}}\big),
\]
For a state--goal pair $(s,g)$, the corresponding $\mathcal Z^{\mathrm{en}}$- and $\mathcal Z^{\mathrm{ex}}$-components are collected to define
\begin{equation}
\begin{aligned}
\bar z^{\mathrm{en}}_{(s,g)}\coloneqq \big(z_{s\mid g}^{\mathrm{en}},\,z_{g\mid s}^{\mathrm{en}}\big)\in\mathcal Z^{\mathrm{en}}\times\mathcal Z^{\mathrm{en}},\nonumber\\
\bar z^{\mathrm{ex}}_{(s,g)}\coloneqq \big(z_{s\mid g}^{\mathrm{ex}},\,z_{g\mid s}^{\mathrm{ex}}\big)\in\mathcal Z^{\mathrm{ex}}\times\mathcal Z^{\mathrm{ex}}.\nonumber
\end{aligned}
\end{equation}
\textbf{(\emph{Task-endogenous abstraction})}
We assume that there exists a Markov kernel
$\mathcal P^{\mathrm{en}}:(\mathcal Z^{\mathrm{en}}\times\mathcal Z^{\mathrm{en}})\times\mathcal A
\to \Delta(\mathcal Z^{\mathrm{en}}\times\mathcal Z^{\mathrm{en}})$
such that $\forall (s,g)\in \mathrm{supp}(f^e)$ and
$\forall \ \mathbf a\in\mathcal A$,
\[
\bar{\mathbf z}' \sim
\mathcal P(\cdot\mid
(\bar z^{\mathrm{en}}_{(s,g)},\bar z^{\mathrm{ex}}_{(s,g)}),\mathbf a)
\ \Longrightarrow\
\bar{\mathbf z}'^{\mathrm{en}}
\sim
\mathcal P^{\mathrm{en}}(\cdot\mid \bar z^{\mathrm{en}}_{(s,g)},\mathbf a),
\]
where $\bar{\mathbf z}'^{\mathrm{en}}$ denotes the
$\mathcal Z^{\mathrm{en}}\times\mathcal Z^{\mathrm{en}}$ component of
$\bar{\mathbf z}'$. Equivalently, the marginal transition of the
task-endogenous component is determined only by the current
task-endogenous component and the action, and is independent of the
task-exogenous context.

This implies that
$\bar z^{\mathrm{en}}_{(s,g)}=(z_{s\mid g}^{\mathrm{en}},\,z_{g\mid s}^{\mathrm{en}})$
encodes the information that determines the task-relevant Bellman
dynamics, while $\bar z^{\mathrm{ex}}_{(s,g)}$ encodes context that does not affect the marginal task-endogenous transition.
In this sense, we refer to $z_{s\mid g}^{\mathrm{en}}$ and
$z_{s\mid g}^{\mathrm{ex}}$ as the
\textbf{\emph{task-endogenous state}} and
\textbf{\emph{task-exogenous context}} of $s$ relative to $g$,
respectively, and analogously to $g$ relative to $s$.
Accordingly, $\bar z^{\mathrm{en}}_{(s,g)}$ is referred to as the
\textbf{\emph{task}} associated with the state--goal pair $(s,g)$, and
the corresponding \textbf{\emph{task block}} is defined as
\[
\mathcal B_{\bar z^{\mathrm{en}}}
\;\coloneqq\;
\big\{(s,g)\in \mathrm{supp}(f^e)\,:\,
\bar z^{\mathrm{en}}_{(s,g)}=\bar z^{\mathrm{en}}\big\}.
\]
\end{definition}

The block assumption is widely used to model rich observations with deterministic latent-state recovery~\citep{du2019provably, zhang2021learning, efroni2022provably, park2026dual} (see \cref{appendix:extended_preliminaries:exbcmp}).
It defines the task-endogenous state and task-exogenous context relative to the paired goal. For the same state $s$, the latent components $z_{s\mid g}^{\mathrm{en}}$ and $z_{s\mid g}^{\mathrm{ex}}$ may vary with $g$, and analogously $z_{g\mid s}^{\mathrm{en}}$ and $z_{g\mid s}^{\mathrm{ex}}$ may vary with $s$, making it more flexible than assuming a consistent task-endogenous partition~\citep{efroni2022provably, ziarko2025contrastive}.
The task-endogenous abstraction assumption is also admissible: assigning distinguishable features to the two subspaces is no more demanding than structured assumptions in prior work~\citep{efroni2022provably, levine2025learning}, and task-endogenous components are often intuitively distinguishable from the contexts that need not change.

Along with the GCE-BCMP, we define a modified reward on state--goal pairs as
%\begin{equation}\label{modified_reward}
$r^\ell(s,g)\coloneqq \mathbf 1_{\{z^\mathrm{en}_{s\mid g}=z^\mathrm{en}_{g\mid s}\}},$
%\end{equation}
so that a state--goal pair receives reward $1$ if and only if the current state matches the goal at the task-endogenous abstraction level.
The induced temporal distance $d^*(s,g)$ can be defined analogously under the modified reward.

We now formalize the two desirable properties of the ideal analogy representation we discussed in \cref{subsec:insight}---invariance to task-exogenous contexts and task-endogenous displacement encoding with enough sufficiency---by introducing the notion of a task-endogenous analogy.

\begin{definition}[Task-endogenous analogy]\label{def:task_endogenous_analogy}
Given a GCE-BCMP, let $\delta\mathcal Z^{\mathrm{en}}$ be a displacement space and let
$\delta:\mathcal Z^{\mathrm{en}}\times\mathcal Z^{\mathrm{en}}\to \delta\mathcal Z^{\mathrm{en}}$ be a well-defined displacement mapping.
A mapping $\mathfrak a:\mathcal S\times\mathcal S\to \delta\mathcal Z^{\mathrm{en}}$ is called a \textbf{\emph{task-endogenous analogy}} if it satisfies the following two conditions:

(\emph{Task-endogenous displacement})
\ For all $(s,g)\in\mathrm{supp}(f^e)$,
\[
\mathfrak{a}(s,g)=\delta\big(z_{s\mid g}^{\mathrm{en}},\,z_{g\mid s}^{\mathrm{en}}\big).
\]
(\emph{Sufficient for optimal goal-reaching})
\ There exists a deterministic policy $\tilde\pi\!:\!\mathcal S\times \delta\mathcal Z^{\mathrm{en}}\!\to\!\mathcal A$ such that, for all $(s,g)\in\mathrm{supp}(f^e)$,
\[
V^{\tilde\pi}(s,g)=V^*(s,g).
\]
Equivalently, the optimal action for $(s,g)$ can be inferred from $(s,\mathfrak a(s,g))$.
\end{definition}

Within GCE-BCMP, the following assumption and proposition show that the temporal distance difference field $\alpha(s,g)$ induces a legitimate task-endogenous analogy and thus a useful analogy representation. The detailed discussion of the assumption and proofs for the proposition are provided in \cref{appendix:theoretical_results}.
\begin{assumption}[Task-block endogenous abstraction consistency]
\label{assump:task_block_coordinate_consistency}
For each task block
$\mathcal B_{\bar z}$,
for every $(s,g)\in\mathcal B_{\bar z}$ and every
probe state $x\in\mathcal S$ for which
$(x,s),(x,g)\in\mathrm{supp}(f^e)$,
\[
z^{\mathrm{en}}_{x\mid s}=z^{\mathrm{en}}_{x\mid g},
\quad
z^{\mathrm{en}}_{s\mid x}=z^{\mathrm{en}}_{s\mid g}, 
\quad
z^{\mathrm{en}}_{g\mid x}=z^{\mathrm{en}}_{g\mid s}.
\]
\end{assumption}
Intuitively, this assumption states that within a task block, task-endogenous states are
interpreted through the same task-endogenous components
(\textit{e.g.}, the drawer and robot arm in the $i$-th task block in
\cref{fig:temporal_distance_geometries}), and remains unchanged under
different comparison endpoints. Hence, $\alpha(s,g)$ isolates the
task-endogenous displacement, while its full distance-difference field still
contains sufficient information for optimal goal-reaching.
\begin{proposition}[Temporal distance difference field is a task-endogenous analogy]
\label{prop:temporal_distance_difference_is_analogy}
Given a GCE-BCMP, let a temporal distance difference function $\alpha:\mathcal{S}\times\mathcal{S}\to(\mathcal{S}\to\mathbb{R})$ be defined as
\begin{equation}\label{eq:global_analogy}
    \alpha(s,g)(x) = d^*(x,g) - d^*(x,s),
\end{equation}
$\forall s,g,x\in\mathcal S.$ Under \cref{assump:task_block_coordinate_consistency}, the field $\alpha(s,g):\mathcal S\!\to\!\mathbb R$ is a task-endogenous analogy. That is, $\alpha(s,g)$ encodes the task-endogenous displacement and is sufficient for optimal goal-reaching.
\end{proposition}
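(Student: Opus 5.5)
We verify the two conditions of \cref{def:task_endogenous_analogy} separately, taking the displacement space to be the image of $\alpha$ itself, i.e.\ $\delta\mathcal Z^{\mathrm{en}}\subseteq\mathbb R^{\mathcal S}$.

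\textbf{Step 1: $d^*$ is endogenous.} Under the modified reward $r^\ell$, $d^*(x,g)$ depends only on the task-endogenous latents $(z^{\mathrm{en}}_{x\mid g},z^{\mathrm{en}}_{g\mid x})$. The reason is that the reward event $\{z^{\mathrm{en}}_{x_t\mid g}=z^{\mathrm{en}}_{g\mid x_t}\}$ is measurable with respect to the endogenous pair. By the task-endogenous abstraction assumption, the marginal law of this pair under any action sequence is governed by $\mathcal P^{\mathrm{en}}$ alone. Hence the optimal hitting-time problem projects onto the endogenous chain on $\mathcal Z^{\mathrm{en}}\times\mathcal Z^{\mathrm{en}}$. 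Formally, the Bellman optimality equation for $V^*(\cdot,g)$ has a unique fixed point that is a function of $\bar z^{\mathrm{en}}$ only, and we check this by showing the operator preserves that function class. We therefore write $d^*(x,g)=D^{\mathrm{en}}(z^{\mathrm{en}}_{x\mid g},z^{\mathrm{en}}_{g\mid x})$ for a function $D^{\mathrm{en}}$ defined on $\mathcal Z^{\mathrm{en}}\times\mathcal Z^{\mathrm{en}}$.

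\textbf{Step 2: task-endogenous displacement.} Fix $(s,g)\in\mathcal B_{\bar z}$ and a probe $x$ with $(x,s),(x,g)\in\mathrm{supp}(f^e)$. By \cref{assump:task_block_coordinate_consistency}, $z^{\mathrm{en}}_{x\mid g}=z^{\mathrm{en}}_{x\mid s}=:u$, $z^{\mathrm{en}}_{g\mid x}=z^{\mathrm{en}}_{g\mid s}$ and $z^{\mathrm{en}}_{s\mid x}=z^{\mathrm{en}}_{s\mid g}$. Substituting into Step 1 gives
\[
\alpha(s,g)(x)=D^{\mathrm{en}}(u,z^{\mathrm{en}}_{g\mid s})-D^{\mathrm{en}}(u,z^{\mathrm{en}}_{s\mid g}).
\]
The only dependence on $(s,g)$ here is through $(z^{\mathrm{en}}_{s\mid g},z^{\mathrm{en}}_{g\mid s})$. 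We then define
\[
\delta(z_1,z_2)(x):=D^{\mathrm{en}}(u_x,z_2)-D^{\mathrm{en}}(u_x,z_1),
\]
where $u_x$ is the common probe latent supplied by the assumption. This gives $\alpha(s,g)=\delta(z^{\mathrm{en}}_{s\mid g},z^{\mathrm{en}}_{g\mid s})$. Probes outside the support are handled by a fixed convention, e.g.\ value $0$.

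\textbf{Main obstacle: showing $\delta$ is well-defined.} The probe latent $u_x$ must not depend on which pair in the block is used. I would handle this by noting that two pairs $(s,g),(s',g')$ in the same block share the same $\bar z^{\mathrm{en}}$. Applying the assumption to each pair then forces the probe latents to coincide with a block-level quantity, which makes the map a function of the task alone. If this is not strong enough, the fallback is to define $\delta$ only on the image of each block.

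\textbf{Step 3: sufficiency.} Take $x=s$. Then $\alpha(s,g)(s)=d^*(s,g)-d^*(s,s)=d^*(s,g)$. For each action $a$ and next state $s'\sim\mathcal P(\cdot\mid s,a)$, the next endogenous pair is $(z^{\mathrm{en}}_{s'\mid g},z^{\mathrm{en}}_{g\mid s'})$, which by the consistency assumption equals $(z^{\mathrm{en}}_{s'\mid s},z^{\mathrm{en}}_{g\mid s})$. Its distance to $g$ is $d^*(s',g)=\alpha(s,g)(s')+d^*(s',s)$. The term $d^*(s',s)$ depends only on $s'$ and $s$.

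We define the deterministic policy
\[
\tilde\pi(s,\mathfrak a)\in\arg\max_a\ \mathbb E_{s'\sim\mathcal P(\cdot\mid s,a)}\big[\gamma^{\,\mathfrak a(s')+d^*(s',s)}\big].
\]
This is exactly the greedy action $\arg\max_a Q^*(s,g,a)$ computed from $(s,\alpha(s,g))$ alone, using $V^*=\gamma^{d^*}$. Greedy action selection with respect to $V^*$ is optimal, which yields $V^{\tilde\pi}(s,g)=V^*(s,g)$.

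One further care point is that the consistency assumption may not cover successor states $s'$ whose pairs $(s',s)$ leave the support. There I would use Step 1 to express $d^*(s',g)$ through $\mathcal P^{\mathrm{en}}$ and the field values directly.
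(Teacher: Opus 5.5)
Your overall route is the paper's: an endogenous Bellman-closure lemma giving $d^*=D^*_{\mathrm{en}}(\bar z^{\mathrm{en}})$, then substituting the consistency relations to write $\alpha(s,g)(x)$ through $(z^{\mathrm{en}}_{s\mid g},z^{\mathrm{en}}_{g\mid s})$, then a greedy policy $\tilde\pi(s,F)\in\arg\max_a\mathbb E[\gamma^{F(s')}\gamma^{d^*(s',s)}]$ for sufficiency. The genuine gap is exactly the point you flag, and your proposed resolution does not work. The assumption as stated in the main text only ties together the readings of a probe $x$ relative to the two endpoints of a \emph{single} pair, $z^{\mathrm{en}}_{x\mid s}=z^{\mathrm{en}}_{x\mid g}$. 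For another pair $(s',g')$ in the same block it gives $z^{\mathrm{en}}_{x\mid s'}=z^{\mathrm{en}}_{x\mid g'}$, but nothing relates $z^{\mathrm{en}}_{x\mid s}$ to $z^{\mathrm{en}}_{x\mid s'}$. Sharing $\bar z^{\mathrm{en}}=(z_1,z_2)$ pins down how the endpoints are read relative to $x$ ($z^{\mathrm{en}}_{s\mid x}=z^{\mathrm{en}}_{s'\mid x}=z_1$, $z^{\mathrm{en}}_{g\mid x}=z^{\mathrm{en}}_{g'\mid x}=z_2$), not how $x$ itself is read. Suppose $u=z^{\mathrm{en}}_{x\mid s}$ and $u'=z^{\mathrm{en}}_{x\mid s'}$ differ. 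Then $D^{\mathrm{en}}(u,z_2)-D^{\mathrm{en}}(u,z_1)$ and $D^{\mathrm{en}}(u',z_2)-D^{\mathrm{en}}(u',z_1)$ can differ: with $D^{\mathrm{en}}(u,z)=|u-z|$, $z_1=0$, $z_2=1$, $u=0$, $u'=2$ they are $+1$ and $-1$. So $\alpha$ need not be a function of the block. Your fallback of defining $\delta$ ``on the image of each block'' does not help, since $\delta$ is already indexed by the block label. What is missing is a block-level probe reading, and the stated assumption does not supply it. The paper supplies it by fiat: in its appendix restatement, the consistency assumption posits a map $\rho_{\bar z}:\mathcal S\to\mathcal Z^{\mathrm{en}}$ with $z^{\mathrm{en}}_{x\mid s}=z^{\mathrm{en}}_{x\mid g}=\rho_{\bar z}(x)$ for all $(s,g)\in\mathcal B_{\bar z}$. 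It then sets $\delta(z_1,z_2)(x):=D^*_{\mathrm{en}}(\rho_{(z_1,z_2)}(x),z_2)-D^*_{\mathrm{en}}(\rho_{(z_1,z_2)}(x),z_1)$. With that hypothesis, your Step 2 goes through verbatim.

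There are two smaller points. First, your convention of setting the field to $0$ at probes with $(x,s)$ or $(x,g)$ outside the support does not establish $\alpha(s,g)=\delta(\cdot)$. Either $\alpha(s,g)(x)$ is a genuine number there, which the convention overwrites, or it is undefined there, in which case the domain of the field depends on $(s,g)$ rather than on the block. The paper avoids this by additionally assuming that every probe pair $(x,s),(x,g)$ lies in $\mathrm{supp}(f^e)$ and that all these temporal distances are finite. Finiteness also keeps $d^*(x,g)-d^*(x,s)$ from being of the form $\infty-\infty$, and keeps $\gamma^{\alpha(s,g)(s')}\gamma^{d^*(s',s)}$ meaningful in Step 3. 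Second, in Step 3 the identity $d^*(s',g)=\alpha(s,g)(s')+d^*(s',s)$ is definitional once distances are finite. So neither the consistency assumption nor your care point about successors leaving the support is needed there, and the paper's sufficiency lemma uses neither.
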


With the notion of a task-endogenous analogy, we provide a formal definition of \emph{analogy transduction}, trajectory synthesis by transplanting analogies into a certain context.
\begin{definition}[Analogy transduction]\label{def:analogy_transduction}
Given a GCE-BCMP, let $\tau_s^g$ denote a trajectory from $s$ to $g$ and let $\mathcal D_{\mathrm{train}}=\{\tau_{(i)}\}_{i=1}^D$ be a training set.
We say that $\tau_{(i)}=\tau_{s_i}^{g_i}$ is \emph{analogously transducible} to $(s,g)$ if $\mathfrak{a}(s_i,g_i)=\mathfrak{a}(s,g)$.
Assume there exists a transduction operator $\mathfrak T:\mathcal S\times \delta\mathcal Z^{\mathrm{en}}\to \Delta(\Gamma(\cdot))$, where $\Gamma(s)$ denotes the set of feasible trajectories starting from $s$.
Then, \emph{analogy transduction} for $(s,g)$ refers to the construction of a new trajectory by choosing any analogously transducible trajectory $\tau_{s_i}^{g_i}\in\mathcal D_{\mathrm{train}}$ and sampling a new trajectory $\boldsymbol{\hat\tau}\in\Gamma(s)$ as
{\setlength{\abovedisplayskip}{6pt}
 \setlength{\belowdisplayskip}{6pt}
 \setlength{\abovedisplayshortskip}{6pt}
 \setlength{\belowdisplayshortskip}{6pt}
\[
\boldsymbol{\hat\tau} \sim \mathfrak T\big(s,\,\mathfrak{a}(s_i,g_i)\big).
\]}
\end{definition}
We emphasize that analogy transduction can encounter both in-distribution and out-of-combination (OOC) analogy--context pairs ($s,\mathfrak{a}$). The in-distribution regime is straightforward: when a start state $s$ and analogy $\mathfrak{a}$ co-occur in the dataset, analogy transduction reduces to retrieving the matching trajectory.
In contrast, in the OOC regime, where $s$ and $\mathfrak{a}$ are both present in the data but never jointly, successful analogy transduction hinges on the OOC extrapolation capability of $\mathfrak T$. In \cref{sec:CTA}, we introduce a practical realization of $\mathfrak T$ that enables such extrapolation. 

Since $\alpha(s,g)$ is sufficient for optimal goal-reaching, using $\alpha$ for analogy transduction preserves optimality while providing a favorable \emph{transductive} bias for generalizing to OOC compositions. Moreover, $\alpha(s,g)$ can improve the stability of analogy transduction in offline settings. Unlike prior on-policy analogies~\citep{hansen2022bisimulation}, it is constructed from the optimal temporal distance $d^*$ and is therefore less susceptible to variability induced by suboptimal data. This comparison is shown in \cref{appendix:additional_experiments:additional_benchmark_results}.

\subsection{Practical Instantiation of the Analogies}\label{subsec:practical_instantiation}
While the temporal distance difference field $\alpha(s,g)$ provides a sufficient task-endogenous analogy, representing the entire field $x\mapsto \alpha(s,g)(x)$ over all $x\in\mathcal S$ is impractical in realistic environments with large or continuous state spaces. To obtain a practical analogy representation, we follow prior work that learns a temporal distance function over all $x\in\mathcal S$ for goal representation~\citep{park2026dual} and approximate the temporal distance as
%{\setlength{\abovedisplayskip}{8pt}
% \setlength{\belowdisplayskip}{8pt}
% \setlength{\abovedisplayshortskip}{8pt}
% \setlength{\belowdisplayshortskip}{8pt}
\begin{equation}\label{eq:d_varphi}
    d^*(s,g)=f\!\left(\phi(s),\varphi(g)\right),
\end{equation}
where $\phi,\varphi:\mathcal{S}\!\to\!\mathbb{R}^d$ are learnable encoders, and $f$ is an arbitrary aggregation function. 
We set $f$ to the inner product, as this choice admits a simple linear decomposition aligned with the difference structure in \eqref{eq:global_analogy} and provides universal approximation with distinct $\phi$ and $\varphi$~\citep{park2023metra}:
%{\setlength{\abovedisplayskip}{8pt}
% \setlength{\belowdisplayskip}{8pt}
% \setlength{\abovedisplayshortskip}{8pt}
% \setlength{\belowdisplayshortskip}{8pt}
\begin{equation}\label{eq:temporal_distance_parameterization}
    f\!\left(\phi(s),\varphi(g)\right)=\phi(s)^\top\varphi(g).
\end{equation}
As a result, the temporal distance difference field can be approximated by the following representation:
%{\setlength{\abovedisplayskip}{8pt}
% \setlength{\belowdisplayskip}{8pt}
% \setlength{\abovedisplayshortskip}{8pt}
% \setlength{\belowdisplayshortskip}{8pt}
\begin{align}
    \alpha(s,g)(x) &= d^*(x,g)-d^*(x,s)\nonumber\\
    &= \phi(x)^\top\varphi(g) - \phi(x)^\top\varphi(s) \nonumber\\
    &= \phi(x)^\top(\varphi(g) - \varphi(s)). \label{eq:practical_parameterizaiton}
\end{align}
We use the term $\varphi(g)-\varphi(s)$ as a practical representation of the temporal distance difference field $\alpha(s,g)$.
Since the parameterization in \eqref{eq:practical_parameterizaiton} is universal for representing $\alpha(s,g)(x)$, $\varphi(g)-\varphi(s)$ is enough to summarize the temporal distance difference relations with all probe states $x\in\mathcal S$ while being independent of $x$.
We denote this quantity by
%{\setlength{\abovedisplayskip}{8pt}
% \setlength{\belowdisplayskip}{8pt}
% \setlength{\abovedisplayshortskip}{8pt}
% \setlength{\belowdisplayshortskip}{8pt}
\begin{equation}\label{eq:dual_analogy}
    \alpha^\vee(s,g) = \varphi(g) - \varphi(s),
\end{equation}
and refer to it as the \textbf{\emph{dual analogy}} following the terminology of prior work~\citep{park2026dual}, emphasizing that it is a geometric signal defined through temporal distance difference relations to all other states.

To extract the dual analogy, we require the parameterized function $f\!\left(\phi(s),\varphi(g)\right)=\phi(s)^\top\varphi(g)$ to approximate the temporal distance between $s$ and $g$.
Among existing temporal distance learning approaches, we adopt a value-learning formulation and instantiate it with goal-conditioned IQL~\citep{kostrikov2022offline}, given its strong empirical performance. Concretely, we use a modified reward $\tilde r(s,g)=-\mathbf{1}_{\{s\neq g\}}$ following~\citep{park2024foundation, giammarino2025physics} and jointly optimize $\phi,\varphi$, and the $Q$-function by minimizing the following losses:
%{\setlength{\abovedisplayskip}{8pt}
% \setlength{\belowdisplayskip}{8pt}
% \setlength{\abovedisplayshortskip}{8pt}
% \setlength{\belowdisplayshortskip}{8pt}
\begin{equation}
\begin{aligned}
    & \quad \ \ \mathcal{L}(\phi,\varphi)=\mathbb{E}_{(s,a,g)}\big[ \ell^\iota_2( \phi(s)^\top\varphi(g)\!-\!\bar Q(s,a,g)) \big],\\
    &\mathcal{L}(Q)=\mathbb{E}_{(s,a,s',g)}\big[ ( Q(s,a,g) \!+ \mathbf{1}_{\{s\neq g\}}\!-\!\gamma \bar \phi(s')^\top \bar\varphi(g))^2 \big],
\end{aligned}
\end{equation}
where $\ell^\iota_2(x)=|\iota-\mathbf{1}_{\{x<0\}}|x^2$ is the expectile loss~\citep{newey1987asymmetric} with $\iota\in(0,1)$, and $\bar{\cdot}$ denotes the target network. After training, we obtain the dual analogy defined in \eqref{eq:dual_analogy} with learned $\varphi$.
While the learned dual analogy admits a broad range of potential applications, the next section presents a generalist goal-reaching agent built via dual analogy transduction as one concrete application.

\section{Compositional Transduction with Analogies}\label{sec:CTA}
In this section, we present a practical method for training a generalist goal-reaching agent in offline GCRL by instantiating the analogy transduction operator $\mathfrak T$ in \cref{def:analogy_transduction}. As discussed in \cref{subsec:GCE-BCMP}, successful analogy transduction hinges on the OOC extrapolation capability of $\mathfrak T$. We therefore propose \textbf{C}ompositional \textbf{T}ransduction with latent \textbf{A}nalogies (\textbf{CTA}), a hierarchical approach that fully leverages analogy transduction by enabling goal-reaching under both in-distribution and OOC analogy--context compositions. Although we instantiate CTA with the practical dual analogy $\alpha^\vee(s,g): \mathcal{S}\times\mathcal{S}\to\mathbb{R}^d$, note that it can be paired with any task-endogenous analogy in \cref{def:task_endogenous_analogy}.

To learn a generalist goal-reaching agent, we follow the hierarchical IQL principle~\citep{park2023hiql} of extracting two hierarchical policies from one shared value function $V:\mathcal{S}\times\mathbb{R}^d\to\mathbb R$ that estimates the temporal distance from a state to a goal.
The high-level policy $\pi_h: \mathcal{S}\times \mathbb{R}^d\to\mathbb{R}^d$ produces the next $k$-step analogy $\alpha^\vee(s_t,s_{t+k})$ treating it as an action, while the low-level policy $\pi_\ell: \mathcal{S}\times \mathbb{R}^d\to\Delta(\mathcal{A})$ outputs the primitive action $a_t$ to realize it.

To enable the value function and policies to extrapolate to OOC analogy--context compositions, we adopt bilinear transduction~\citep{netanyahu2023learning, song2024compositional} (see \cref{sec:prelim}, \cref{appendix:extended_preliminaries:bt}) in the value function and each policy.
Our key intuition is that, given a goal $g$, goal-reaching admits an anchor--displacement view, where the current state $\boldsymbol s$ serves as the \emph{anchor} and the analogy $\boldsymbol{\alpha^\vee(s,g)}$ serves as the \emph{displacement}. This is enabled by $\alpha^\vee(s,g)$ being a well-defined displacement in \cref{def:task_endogenous_analogy}, which satisfies the requirement of bilinear transduction.
Importantly, instead of reusing $f(\phi(s),\varphi(g))$ in \eqref{eq:d_varphi}, we need to learn a separate value function $V$ that enforces the low-rank structure required for bilinear transduction by embedding the anchor $s$ and displacement $\alpha^\vee(s,g)$ into a $b$-dimensional bottleneck.
Concretely, we parameterize $V$ as
{\setlength{\abovedisplayskip}{8pt}
 \setlength{\belowdisplayskip}{8pt}
 \setlength{\abovedisplayshortskip}{8pt}
 \setlength{\belowdisplayshortskip}{8pt}
\begin{equation}
V(s,g)
\;=\;
\Omega_1(s) \boldsymbol{\cdot} \Omega_2(\alpha^\vee(s,g)),
\label{eq:bilinear-value}
\end{equation}
}where $\Omega_1:\mathcal S\!\to\!\mathbb R^{b}$ and $\Omega_2:\mathbb R^{d}\!\to\!\mathbb R^{b}$ are learnable anchor and displacement encoders, respectively, with $b\ll d$.

Both policies are also parameterized via bilinear transduction to support extrapolation to OOC compositions.
Specifically, we model them as Gaussian actors with fixed covariance $\Sigma_h$ and $\Sigma_\ell$, respectively: for all $s,g\in \mathcal S$,
$
\pi_h(\,\cdot\mid s,\alpha^\vee(s,g))
=
\mathcal N\!\big(\mu_h(s,\alpha^\vee(s,g)),\Sigma_h\big),
\ 
\pi_\ell(\,\cdot\mid s,\alpha^\vee(s,g))
=
\mathcal N\!\big(\mu_\ell(s,\alpha^\vee(s,g)),\Sigma_\ell\big),
$
such that
\begin{align}
    \mu_h(s,\alpha^\vee(s,g))&=\omega_{h1}(s) \boldsymbol{\cdot}  \omega_{h2}(\alpha^\vee(s,g)),\label{eq:bilinear-policy-h} \\
    \mu_\ell(s,\alpha^\vee(s,g))&=\omega_{\ell1}(s) \boldsymbol{\cdot} \omega_{\ell2}(\alpha^\vee(s,g)),\label{eq:bilinear-policy-l}
\end{align}
where $\omega_{h1}:\mathcal S\to\mathbb R^{b\times d},
\omega_{\ell1}:\mathcal S\to\mathbb R^{b\times \dim(\mathcal A)}$ are the learnable anchor encoders and $\omega_{h2}:\mathbb R^{d}\to\mathbb R^{b\times d}, \omega_{\ell2}:\mathbb R^{d}\to\mathbb R^{b\times \dim(\mathcal A)}$ are the learnable displacement encoders.

We train $V$ by minimizing the following action-free variant of IQL loss~\citep{kostrikov2022offline, park2023hiql, ghosh2023reinforcement, giammarino2025physics} to effectively mitigate out-of-distribution value estimation:
%{\setlength{\abovedisplayskip}{8pt}
% \setlength{\belowdisplayskip}{8pt}
% \setlength{\abovedisplayshortskip}{8pt}
% \setlength{\belowdisplayshortskip}{8pt}
\begin{equation}
    \mathcal{L}(\Omega_1, \Omega_2)=\mathbb E_{(s,s',g)}\big[ \ell^\kappa_2(-\mathbf{1}_{\{s\neq g\}} + \gamma \bar V(s',g) - V(s,g)) \big],
\end{equation}
where $\kappa\in(0,1)$, and $\bar{\cdot}$ denotes the target network. Both the high- and low-level policies are trained by maximizing the following advantage-weighted regression objectives, respectively~\citep{peng2019advantage, park2025ogbench}:
%{\setlength{\abovedisplayskip}{8pt}
% \setlength{\belowdisplayskip}{8pt}
% \setlength{\abovedisplayshortskip}{8pt}
% \setlength{\belowdisplayshortskip}{8pt}
\begin{align}
&\mathcal{L}(\omega_{h1}, \omega_{h2})=\mathbb E_{({s}_t,s_{t+k},g)}
\Big[
\exp\!\big(\beta_h A(s_t, s_{t+k},g)\big)\nonumber\\
& \ \ \ \ \ \ \qquad\qquad\log \pi_h(\alpha^\vee(s_t,s_{t+k})\mid s_t,\alpha^\vee(s_t,g))
\Big],\\
&\mathcal{L}(\omega_{\ell1},\omega_{\ell2})=\mathbb E_{({s}_t, a_t,s_{t+1}, s_{t+k})}
\Big[
\exp\!\big(\beta_\ell A(s_t, s_{t+1},s_{t+k})\big) \nonumber\\
& \ \ \ \ \ \qquad\qquad\qquad\quad \ \log \pi_\ell(a_t\mid s_t,\alpha^\vee(s_t,s_{t+k}))
\Big],
\end{align}
where $A(s,s',g):=V(s',g)-V(s,g)$ is an advantage function and $\beta_h, \beta_\ell$ are temperature parameters that adjust the relative weight of behavior cloning. Additional details for training CTA are provided in \cref{appendix:algorithm_detail:CTA}.

During inference, given a goal $g$, the high-level policy $\pi_h$ samples a next $k$-step analogy $\alpha^\vee(s_t,s_{t+k}) \sim \pi_h(\cdot \mid s_t,\alpha^\vee(s_t,g))$ and the low-level policy $\pi_\ell$ executes a primitive action $a_t \sim \pi_\ell(\cdot \mid s_t,\alpha^\vee(s_t,s_{t+k}))$ at each timestep.
Under bilinear transduction, $\pi_h$ can extrapolate to novel anchor--goal pairs $(s_t,g)$ to propose a meaningful analogy.
Likewise, $\pi_\ell$ can extrapolate to unseen anchor--analogy pairs $(s_t,\alpha^\vee(s_t,s_{t+k}))$ to recover primitive controls that reflect task semantics transferred from past experiences.

The hierarchical structure improves compositional generalization by making analogy transduction more effective and stable. Since long-horizon analogies are sparse in offline datasets~\citep{hong2023diffused, myers2025horizon}, we decompose behavior into $k$-step analogies, expanding the pool of reusable ones. Conditioning the low-level policy on proposed analogies stabilizes transduction while avoiding out-of-distribution analogy queries beyond the intended OOC regime. We validate these effects in \cref{appendix:additional_experiments:hierarchical_structure}.

%Note that the CTA implements an implicit form of analogy transduction: instead of explicitly transplanting actions from other trajectories, it infers the value of executing an analogy under the current context and derives actions accordingly.
%In particular, the bilinearly parameterized value enables OOC inference over context--analogy compositions, and assigns low value to infeasible analogies rather than forcing blind action transfer, which stabilizes the transduction.

\begin{table*}[t]
\centering
\captionsetup{skip=3pt, font=small, width=\textwidth}
\caption{\textbf{Results in OGBench manipulation environments (8 seeds).}
Top-3 methods are highlighted with color gradation where darker indicates higher rank; methods within 95\% of a higher-ranked score share the same color. \textbf{Bold} indicates the best score for the average.}
\label{tab:benchmark}
\renewcommand{\arraystretch}{1.1}
\scriptsize
\begin{tabularx}{\textwidth}{ll *{12}{>{\centering\arraybackslash}X}}
\toprule
\multicolumn{2}{c}{} &
\multicolumn{6}{c}{without representations} &
\multicolumn{4}{c}{dual goal representations} &
\multicolumn{2}{c}{dual analogies} \\
\cmidrule(lr){3-8}\cmidrule(lr){9-12}\cmidrule(lr){13-14}
 & \textbf{Dataset} &
\textbf{GCBC} & \textbf{QRL} & \textbf{CRL} & \textbf{GCIVL} & \textbf{GCIQL} & \textbf{HIQL} &
\textbf{CRL}$^\vee$ & \textbf{GCIVL}$^\vee$ & \textbf{GCIQL}$^\vee$ & \textbf{HIQL}$^\vee$ &
\textbf{HIQL}$^\vee_{+\alpha^\vee}$ & \textbf{CTA} \\
\midrule

\multirow[c]{1}{*}{\texttt{scene}}
 & \texttt{play}
   & $5{\scalebox{0.7}{$\pm 1$}}$
   & $5{\scalebox{0.7}{$\pm 1$}}$
   & $19{\scalebox{0.7}{$\pm 2$}}$
   & $42{\scalebox{0.7}{$\pm 4$}}$
   & $51{\scalebox{0.7}{$\pm 4$}}$
   & $38{\scalebox{0.7}{$\pm 3$}}$
   & $44{\scalebox{0.7}{$\pm 5$}}$
   & {\cellcolor{juncolorblue!18}$72{\scalebox{0.7}{$\pm 6$}}$}
   & $53{\scalebox{0.7}{$\pm 3$}}$
   & {\cellcolor{juncolorblue!90}$87{\scalebox{0.7}{$\pm 4$}}$}
   & $80{\scalebox{0.7}{$\pm 5$}}$
   & {\cellcolor{juncolorblue!90}$90{\scalebox{0.7}{$\pm 4$}}$} \\
\midrule

\multirow[c]{3}{*}{\texttt{cube}}
 & \texttt{single-play}
   & $6{\scalebox{0.7}{$\pm 2$}}$
   & $5{\scalebox{0.7}{$\pm 1$}}$
   & $19{\scalebox{0.7}{$\pm 2$}}$
   & $53{\scalebox{0.7}{$\pm 4$}}$
   & $68{\scalebox{0.7}{$\pm 6$}}$
   & $15{\scalebox{0.7}{$\pm 3$}}$
   & $60{\scalebox{0.7}{$\pm 1$}}$
   & {\cellcolor{juncolorblue!90}$89{\scalebox{0.7}{$\pm 3$}}$}
   & {\cellcolor{juncolorblue!90}$87{\scalebox{0.7}{$\pm 2$}}$}
   & $69{\scalebox{0.7}{$\pm 3$}}$
   & $74{\scalebox{0.7}{$\pm 4$}}$
   & {\cellcolor{juncolorblue!90}$86{\scalebox{0.7}{$\pm 3$}}$} \\
 & \texttt{double-play}
   & $1{\scalebox{0.7}{$\pm 1$}}$
   & $1{\scalebox{0.7}{$\pm 0$}}$
   & $10{\scalebox{0.7}{$\pm 2$}}$
   & $36{\scalebox{0.7}{$\pm 3$}}$
   & {\cellcolor{juncolorblue!18}$40{\scalebox{0.7}{$\pm 5$}}$} % <-- (추가) GCIQL도 3등 동률이라 색칠
   & $6{\scalebox{0.7}{$\pm 2$}}$
   & $24{\scalebox{0.7}{$\pm 5$}}$
   & {\cellcolor{juncolorblue!90}$60{\scalebox{0.7}{$\pm 4$}}$}
   & {\cellcolor{juncolorblue!18}$40{\scalebox{0.7}{$\pm 5$}}$}
   & $38{\scalebox{0.7}{$\pm 8$}}$
   & $30{\scalebox{0.7}{$\pm 3$}}$
   & {\cellcolor{juncolorblue!55}$50{\scalebox{0.7}{$\pm 5$}}$} \\
 & \texttt{triple-play}
   & $1{\scalebox{0.7}{$\pm 1$}}$
   & $0{\scalebox{0.7}{$\pm 0$}}$
   & $4{\scalebox{0.7}{$\pm 1$}}$
   & $1{\scalebox{0.7}{$\pm 0$}}$
   & $3{\scalebox{0.7}{$\pm 1$}}$
   & $3{\scalebox{0.7}{$\pm 1$}}$
   & $8{\scalebox{0.7}{$\pm 1$}}$
   & $2{\scalebox{0.7}{$\pm 0$}}$
   & $1{\scalebox{0.7}{$\pm 0$}}$
   & {\cellcolor{juncolorblue!90}$18{\scalebox{0.7}{$\pm 1$}}$}
   & {\cellcolor{juncolorblue!18}$11{\scalebox{0.7}{$\pm 2$}}$}
   & {\cellcolor{juncolorblue!55}$17{\scalebox{0.7}{$\pm 1$}}$}\\
\midrule

\multirow[c]{4}{*}{\texttt{puzzle}}
 & \texttt{3x3-play}
   & $2{\scalebox{0.7}{$\pm 0$}}$
   & $1{\scalebox{0.7}{$\pm 0$}}$
   & $3{\scalebox{0.7}{$\pm 1$}}$
   & $6{\scalebox{0.7}{$\pm 1$}}$
   & {\cellcolor{juncolorblue!90}$95{\scalebox{0.7}{$\pm 1$}}$}
   & $12{\scalebox{0.7}{$\pm 2$}}$
   & $6{\scalebox{0.7}{$\pm 1$}}$
   & $5{\scalebox{0.7}{$\pm 1$}}$
   & $42{\scalebox{0.7}{$\pm 1$}}$
   & {\cellcolor{juncolorblue!18}$79{\scalebox{0.7}{$\pm 12$}}$}
   & $72{\scalebox{0.7}{$\pm 9$}}$
   & {\cellcolor{juncolorblue!90}$94{\scalebox{0.7}{$\pm 11$}}$} \\
 & \texttt{4x4-play}
   & $0{\scalebox{0.7}{$\pm 0$}}$
   & $0{\scalebox{0.7}{$\pm 0$}}$
   & $0{\scalebox{0.7}{$\pm 0$}}$
   & $13{\scalebox{0.7}{$\pm 2$}}$
   & $26{\scalebox{0.7}{$\pm 3$}}$
   & $7{\scalebox{0.7}{$\pm 2$}}$
   & $2{\scalebox{0.7}{$\pm 0$}}$
   & $23{\scalebox{0.7}{$\pm 3$}}$
   & {\cellcolor{juncolorblue!18}$34{\scalebox{0.7}{$\pm 2$}}$}
   & $16{\scalebox{0.7}{$\pm 4$}}$
   & {\cellcolor{juncolorblue!55}$50{\scalebox{0.7}{$\pm 5$}}$}
   & {\cellcolor{juncolorblue!90}$84{\scalebox{0.7}{$\pm 3$}}$} \\
 & \texttt{4x5-play}
   & $0{\scalebox{0.7}{$\pm 0$}}$
   & $0{\scalebox{0.7}{$\pm 0$}}$
   & $1{\scalebox{0.7}{$\pm 0$}}$
   & $7{\scalebox{0.7}{$\pm 1$}}$
   & {\cellcolor{juncolorblue!55}$14{\scalebox{0.7}{$\pm 1$}}$}
   & $4{\scalebox{0.7}{$\pm 1$}}$
   & $0{\scalebox{0.7}{$\pm 0$}}$
   & $5{\scalebox{0.7}{$\pm 1$}}$
   & {\cellcolor{juncolorblue!18}$10{\scalebox{0.7}{$\pm 1$}}$}
   & $5{\scalebox{0.7}{$\pm 1$}}$
   & $0{\scalebox{0.7}{$\pm 0$}}$
   & {\cellcolor{juncolorblue!90}$17{\scalebox{0.7}{$\pm 1$}}$} \\
 & \texttt{4x6-play}
   & $0{\scalebox{0.7}{$\pm 0$}}$
   & $0{\scalebox{0.7}{$\pm 0$}}$
   & $4{\scalebox{0.7}{$\pm 0$}}$
   & {\cellcolor{juncolorblue!18}$10{\scalebox{0.7}{$\pm 1$}}$}
   & {\cellcolor{juncolorblue!90}$12{\scalebox{0.7}{$\pm 1$}}$}
   & $3{\scalebox{0.7}{$\pm 1$}}$
   & $0{\scalebox{0.7}{$\pm 0$}}$
   & $2{\scalebox{0.7}{$\pm 1$}}$
   & $6{\scalebox{0.7}{$\pm 1$}}$
   & $2{\scalebox{0.7}{$\pm 1$}}$
   & $0{\scalebox{0.7}{$\pm 0$}}$
   & {\cellcolor{juncolorblue!90}$12{\scalebox{0.7}{$\pm 2$}}$} \\
\midrule

\multicolumn{2}{c}{Average}
 & $1.9$ & $1.5$ & $7.5$ & $21.0$ & $38.6$ & $11.0$
 & $18.0$ & $32.2$ & $34.1$ & $39.3$ & $39.6$ & $\mathbf{56.3}$ \\
\bottomrule
\end{tabularx}
\vspace{-5pt}
\end{table*}

\section{Experiments}\label{sec:main_experiments}
In this section, we empirically validate the effectiveness of analogy transduction and examine how efficiently and robustly CTA leverages it.
Our experiments are designed to answer the following questions.
(1) Does CTA with dual analogies achieve competitive generalization performance?
(2) Are CTA's compositional generalization gains genuinely driven by OOC extrapolation?
(3) Do our dual analogies indeed capture task-endogenous displacements?

\subsection{Experimental Setup}
\paragraph{Environments and datasets.}
We conduct our main experiments on eight manipulation environments from OGBench, an offline GCRL benchmark~\citep{park2025ogbench} where compositional generalization is essential due to task-exogenous variations.
The environments fall into \texttt{scene}, \texttt{cube}, and \texttt{puzzle}: the agent controls a robotic arm to match a target goal state, with \texttt{cube} requiring block rearrangement, \texttt{scene} requiring sequential object interactions, and \texttt{puzzle} requiring combinatorial button pressing.
We use the standard \texttt{play} datasets provided by OGBench, which comprise diverse, reward-free interaction trajectories collected without task-specific reward engineering. Full experimental details are provided in \cref{appendix:experimental_detail:environments}.

\paragraph{Baselines.}
We compare against prior baselines that have been evaluated in OGBench manipulation environments to ensure a relevant and fair assessment.
Baselines are grouped into methods without explicit state--goal representations and those using \textbf{dual goal representations}~\citep{park2026dual}, which encode goals via relative distance relationships to other states and are closely related to our temporal distance difference fields in terms of practical implementation.
The first group includes \textbf{GCBC}~\citep{ghosh2021learning}, a goal-conditioned behavior cloning method;
\textbf{QRL}~\citep{wang2023optimal}, a quasimetric learning approach;
\textbf{CRL}~\citep{eysenbach2022contrastive}, a contrastive RL;
\textbf{GCIVL}~\citep{kostrikov2022offline, park2025ogbench} and \textbf{GCIQL}~\citep{kostrikov2022offline}, TD-based offline value and Q-learning methods, respectively;
and \textbf{HIQL}~\citep{park2023hiql}, a hierarchical IQL.
The second group includes \textbf{CRL}$\boldsymbol{^\vee}$, \textbf{GCIVL}$\boldsymbol{^\vee}$, \textbf{GCIQL}$\boldsymbol{^\vee}$, and \textbf{HIQL}$\boldsymbol{^\vee}$,
which augment their corresponding base algorithms with the dual goal representation. 
Notably, HIQL$^\vee$ adopts a goal-conditioned value function and a hierarchical policy structure similar to ours but does not use the analogy representation or analogy transduction.
\textbf{HIQL}$\boldsymbol{^\vee\!\!_{+\!\alpha^\vee}}$ replaces the dual goal representation $\varphi(g)$ in the value function and hierarchical policies of HIQL$^\vee$ with our dual analogy $\alpha^\vee(s,g)$, thus inheriting the representational expressivity of dual analogies while remaining incapable of OOC analogy--context composition.
Further implementation details of the baselines can be found in \cref{appendix:experimental_detail:baselines}.

\subsection{Results in OGBench Manipulation Suite}\label{subsec:benchmark_result}
\cref{tab:benchmark} reports the performance of CTA and baseline methods across eight manipulation environments in OGBench.
CTA achieves the best or near-best results on six of the eight tasks and improves the overall average performance by about $42\%$ over the strongest baseline.
Remarkably, the gains are most pronounced on \texttt{puzzle} environments, where the exponentially large state space makes compositional generalization critical~\citep{park2025ogbench}. 
CTA improves the average performance over the four \texttt{puzzle} environments by about $40\%$ compared to the strongest baseline on this subset, and achieves about a $2.5\times$ higher score than the best-performing baseline on the \texttt{$4\times4$} environment.
Moreover, CTA consistently outperforms baselines with dual goal representations, indicating that its advantage stems from robust generalization via analogy transduction rather than from dual goal representations.

\begin{figure}[t]
    \centering
    \captionsetup{font=small, skip=0pt, width=\columnwidth}
    \includegraphics[width=\columnwidth]{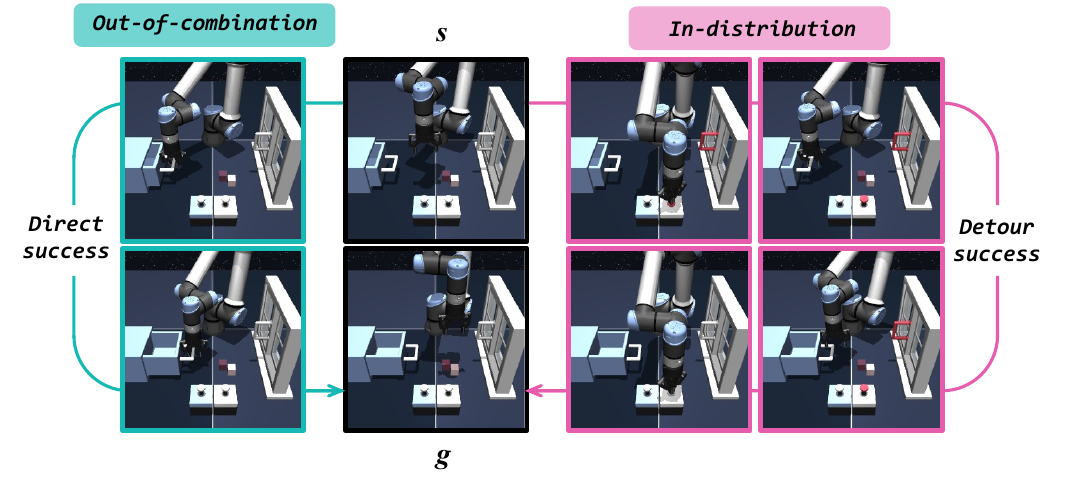}
    \caption{\textbf{Example of direct OOC case study.}
    We remove all direct drawer-opening trajectories when the drawer and window are closed and both are unlocked.
    The agent can achieve \emph{direct success} by extrapolating to this OOC context--task pair, or \emph{detour success} via an in-distribution sequence: lock the window, open the drawer, then unlock the window.}
    \label{fig:ooc_example}
    \vspace{-15pt}
\end{figure}

\subsection{Analogy Transduction Requires Extrapolation}
To verify whether CTA's strong generalization arises from its OOC extrapolation capability when performing analogy transduction, we compare CTA with HIQL$^\vee$ and HIQL$^\vee\!\!_{+\!\alpha^\vee}$.
As shown in the last three columns of \cref{tab:benchmark}, and consistent with the theoretical sufficiency of the dual analogy in \cref{prop:temporal_distance_difference_is_analogy}, HIQL$^\vee$ and HIQL$^\vee\!\!_{+\!\alpha^\vee}$ attain comparable average performance; in contrast, CTA substantially outperforms both by explicitly performing OOC extrapolation via bilinear transduction.
These results suggest that the generalization gains are driven by the OOC extrapolation capability, and empirically underscore the importance of OOC analogy--context inference in analogy transduction.
Although CTA adopts a bilinear transductive parameterization and thus differs architecturally from HIQL$^\vee\!\!_{+\!\alpha^\vee}$, it has about $20\%$ fewer parameters, suggesting that the gains are unlikely to be explained by the model capacity.

\begin{table}[t]
\centering
\caption{\textbf{Direct OOC case study results on \texttt{scene-play-v0} and \texttt{puzzle-4x4-play-v0} (4 seeds)}. Each entry is reported as direct success rate (success rate).}
\label{tab:ooc_case_study_overall}
\setlength{\tabcolsep}{4pt}
\scriptsize
\begin{tabular}{lccccc}
\toprule
Dataset & \textbf{HIQL} & \textbf{GCIQL$\boldsymbol{^\vee}$} & \textbf{HIQL$\boldsymbol{^\vee}$} & \textbf{HIQL}$^\vee_{+\alpha^\vee}$ & \textbf{CTA} \\
\midrule
\texttt{scene}
& \begin{tabular}[c]{@{}c@{}}$19{\scalebox{0.7}{$\pm$}}10$\\($42{\scalebox{0.7}{$\pm$}}12$)\end{tabular}
& \begin{tabular}[c]{@{}c@{}}$51{\scalebox{0.7}{$\pm$}}10$\\($63{\scalebox{0.7}{$\pm$}}11$)\end{tabular}
& \begin{tabular}[c]{@{}c@{}}$45{\scalebox{0.7}{$\pm$}}11$\\($87{\scalebox{0.7}{$\pm$}}7$)\end{tabular}
& \begin{tabular}[c]{@{}c@{}}$48{\scalebox{0.7}{$\pm$}}14$\\($86{\scalebox{0.7}{$\pm$}}6$)\end{tabular}
& \begin{tabular}[c]{@{}c@{}}$\mathbf{73}{\scalebox{0.7}{$\pm$}}\mathbf{9}$\\($\mathbf{94}{\scalebox{0.7}{$\pm$}}\mathbf{4}$)\end{tabular} \\
\midrule
\texttt{puzzle-4x4}
& \begin{tabular}[c]{@{}c@{}}$37{\scalebox{0.7}{$\pm$}}11$\\($69{\scalebox{0.7}{$\pm$}}9$)\end{tabular}
& \begin{tabular}[c]{@{}c@{}}$44{\scalebox{0.7}{$\pm$}}11$\\($55{\scalebox{0.7}{$\pm$}}12$)\end{tabular}
& \begin{tabular}[c]{@{}c@{}}$35{\scalebox{0.7}{$\pm$}}17$\\($62{\scalebox{0.7}{$\pm$}}13$)\end{tabular}
& \begin{tabular}[c]{@{}c@{}}$66{\scalebox{0.7}{$\pm$}}11$\\($95{\scalebox{0.7}{$\pm$}}4$)\end{tabular}
& \begin{tabular}[c]{@{}c@{}}$\mathbf{80}{\scalebox{0.7}{$\pm$}}\mathbf{8}$\\($\mathbf{100}{\scalebox{0.7}{$\pm$}}\mathbf{1}$)\end{tabular} \\
\bottomrule
\end{tabular}
\end{table}

To test whether CTA indeed extrapolates to OOC context--task pairs, we construct  direct OOC case studies on \texttt{scene} and \texttt{puzzle-4x4} by holding out selected pairs from training and evaluating them at inference. We remove three pairs from \texttt{scene} and five from \texttt{puzzle-4x4} from the original datasets. For example, one held-out \texttt{scene} pair requires opening the drawer when the window is closed and unlocked and the drawer is closed. Since the goal can still be achieved through an indirect in-distribution sequence---e.g., lock the window, open the drawer, and unlock the window again (see \cref{fig:ooc_example})---we report a \emph{direct success rate}, which counts only trajectories that solve the intended held-out task directly. 
\cref{tab:ooc_case_study_overall} aggregates results over all held-out pairs. CTA achieves the highest direct success and success rates in both environments, suggesting that baselines often prefer indirect in-distribution executions, whereas CTA directly solves unseen context--task combinations more reliably through analogy transduction. Additional experimental details are provided in \cref{appendix:main_exp_details}.

\subsection{Dual Analogies Encode the Task-Endogenous Displacement}

To qualitatively verify that the dual analogies capture task-endogenous displacements, we sample 20{,}000 state--goal pairs $(s,g)$ from the re-collected validation split of \texttt{scene-play} and visualize dual analogies $\alpha^\vee(s,g)$ using a 2D t-SNE projection~\citep{maaten2008visualizing} (see \cref{fig:nearest_analogies}). We further visualize three representative examples of the nearest analogies in the analogy space, together with their corresponding locations in the t-SNE plot, which cluster by intuitive task semantics (e.g., opening/closing the drawer and placing the cube into the drawer). For example, $\alpha^\vee(s_{7824}, s_{7839})$ and $\alpha^\vee(s_{16977}, s_{16987})$ both correspond to closing the drawer despite differing task-exogenous factors (e.g., window or button states).

\begin{figure}[t]
  \centering
  \includegraphics[width=0.94\columnwidth]{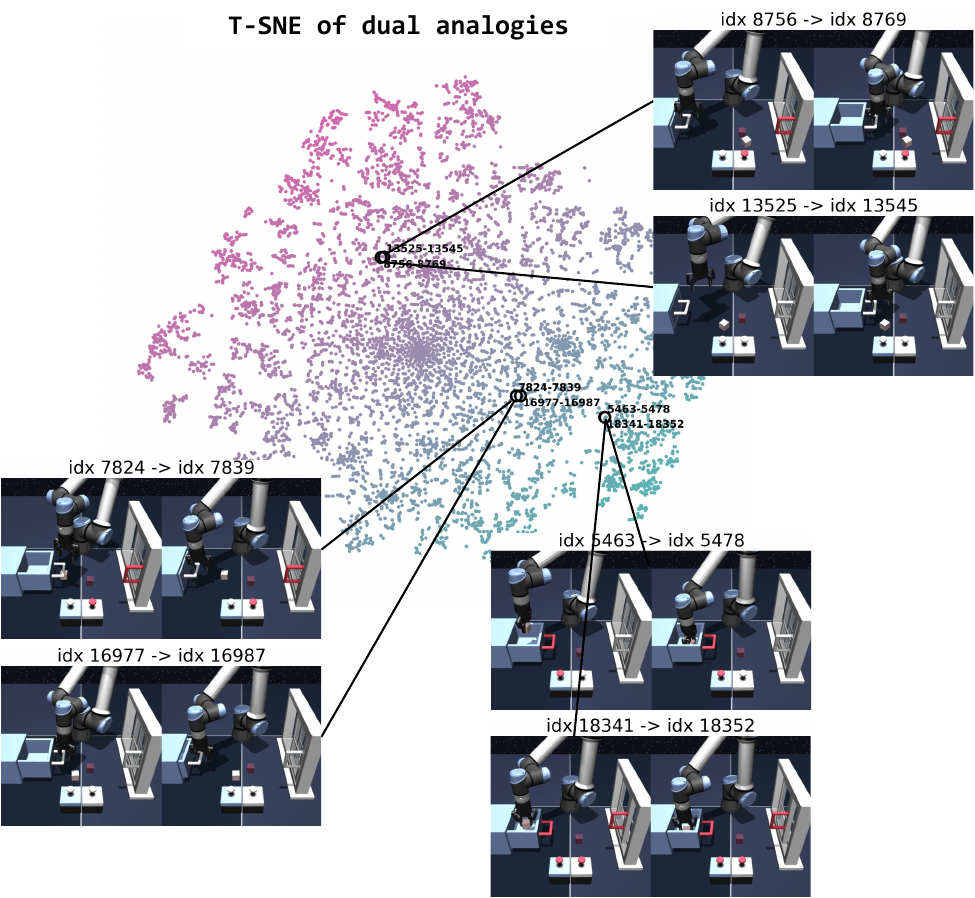}
  \captionsetup{type=figure, skip=10pt, font=small, width=\columnwidth}
  \caption{\textbf{t-SNE visualization of nearest analogies.} Each point represents $\alpha^\vee(s,g)$ for a state--goal pair from the \texttt{scene-play-v0} dataset; we visualize three nearest-neighbor analogy pairs.}
  \label{fig:nearest_analogies}
\end{figure}

\section{Discussion}\label{sec:discussion}
Despite the strong compositional generalization enabled by analogy transduction, dual analogies and CTA have several limitations.
First, \cref{assump:task_block_coordinate_consistency} implies that, as discussed in \cref{subsec:insight}, only task-endogenous components affect each task-endogenous state.
This assumption connects the latent semantics of GCE-BCMPs to an intuitive task--context decomposition, but may be violated in realistic environments. We further discuss the assumption in \cref{appendix:theoretical_results}.
Second, even if the temporal distance difference field is a valid task-endogenous analogy, a gap remains when implementing it through its practical surrogate, the dual analogy.
This gap makes it hard to guarantee that the dual analogy itself is invariant to the task-exogenous context. Our discussion of the universality of the parameterization in \eqref{eq:practical_parameterizaiton} was intended not only to explain why it can serve as a useful practical parameterization of the invariant distance-difference field, but also to motivate this choice in practice, without implying that the learned coordinate vector is itself minimal or identifiable. The gap between theory and practice is shared with prior work~\citep{park2026dual}, and more accurately approximating and implementing the dual field remains an important direction for future work.

CTA with dual analogy is designed to improve compositional generalization between task-endogenous analogies and task-exogenous contexts. Hence, it is most beneficial in environments where task-endogenous and task-exogenous components are intuitively separable and their combinations are diverse, as shown in \cref{subsec:benchmark_result}. Although its benefits may be more limited in environments with little or no task-exogenous variation, such as \texttt{maze} environments, CTA does not underperform other baselines in such settings in our experiments (see \cref{appendix:additional_experiments:additional_benchmark_results}).

\section{Conclusion}
In this paper, we formalize \emph{analogy transduction} by viewing offline GCRL as transductive inference, which predicts specific test instances by exploiting structure in the observed data~\citep{gammerman1998learning}.
In this view, goal-reaching trajectories are synthesized from observed relational patterns across entities, which we call analogies.
To enable analogy transduction, we propose a new analogy representation based on the temporal distance difference field, and theoretically show that it is invariant to task-exogenous variations, captures task-endogenous displacements, and is sufficient for optimal goal-reaching.
We further introduce its practical instantiation, the \emph{dual analogy}, and propose CTA, an offline GCRL method that enables OOC extrapolation through analogy transduction. Although dual analogies and CTA still entail important limitations (see \cref{sec:discussion}), we hope that each will be broadly useful for future work on compositional generalization, sequential decision making, and beyond.
%%%%%%%%%%%%%%%%%%%%%%%%%%%%%%%%%%%%%%%%%%%%%%%%%%%%%%%%%%%%%%%%%%%%%%%%%%%%%%%%

% Acknowledgements should only appear in the accepted version.
\section*{Acknowledgements}
This work was partly supported by the Institute of Information \& Communications Technology Planning \& Evaluation (IITP) grant funded by the Korea government (MSIT) (No. 2022-0-00480, Development of Training and Inference Methods for Goal-Oriented Artificial Intelligent Agents, 50\%, and No. 2019-0-01190, (SW Star Lab) Robot Learning: Efficient, Safe, and Socially-Acceptable Machine Learning, 50\%).

\section*{Impact Statements}
This paper presents work whose goal is to advance the field of Machine Learning. There are many potential societal consequences of our work, none of which we feel must be specifically highlighted here.

% In the unusual situation where you want a paper to appear in the
% references without citing it in the main text, use \nocite

\bibliography{mybib}
\bibliographystyle{icml2026}

%%%%%%%%%%%%%%%%%%%%%%%%%%%%%%%%%%%%%%%%%%%%%%%%%%%%%%%%%%%%%%%%%%%%%%%%%%%%%%%
%%%%%%%%%%%%%%%%%%%%%%%%%%%%%%%%%%%%%%%%%%%%%%%%%%%%%%%%%%%%%%%%%%%%%%%%%%%%%%%
% APPENDIX
%%%%%%%%%%%%%%%%%%%%%%%%%%%%%%%%%%%%%%%%%%%%%%%%%%%%%%%%%%%%%%%%%%%%%%%%%%%%%%%
%%%%%%%%%%%%%%%%%%%%%%%%%%%%%%%%%%%%%%%%%%%%%%%%%%%%%%%%%%%%%%%%%%%%%%%%%%%%%%%
\newpage
\appendix
\onecolumn
\crefalias{section}{appendix}
\crefname{appendix}{Appendix}{Appendices}
\Crefname{appendix}{Appendix}{Appendices}
\crefalias{subsection}{appendix}
\crefname{subsection}{Appendix}{Appendices}
\Crefname{subsection}{Appendix}{Appendices}

\newcommand{\sg}{\operatorname{sg}}

\section{Extended Related Work}
\paragraph{Compositional generalization in sequential decision making.}
In sequential decision making, compositional generalization is most commonly studied through trajectory stitching, which synthesizes new trajectories by connecting segments from different demonstrations. Trajectory stitching can emerge implicitly through dynamic programming in value or metric learning~\citep{mnih2013playing, haarnoja2018soft, fujimoto2021minimalist, kumar2020conservative, kostrikov2022offline, chen2023offline, park2023hiql, fujimoto2023sale, zheng2024contrastive}, or be enforced explicitly through architectural choices such as model-based~\citep{zhuang2024reinformer, wu2023elastic, zhou2023free} and sequence-modeling approaches~\citep{janner2022planning, kim2024stitching, li2024diffstitch, luo2025generative}. 
In this paper, we study analogy transduction as a new axis of compositional generalization, where task-endogenous analogies are transplanted across contexts beyond trajectory stitching.

\paragraph{Metric learning for sequential decision making.}
Metric learning for sequential decision making is also a well-established approach.
%The learned metrics quantify specific relations among states, actions and rewards, such as task similarity and the expected remaining steps toward a goal.
\citet{zhang2021learning,hansen2022bisimulation,wang2023efficient,calo2024bisimulation} learn bisimulation metrics to capture the behavioral similarity between state abstractions, and \citet{gleave2021quantifying, wulfe2022dynamics, skalse2023starc} develop metrics over canonicalized rewards that measure how task goals differ, while remaining invariant to reward shaping. Notably, temporal distance between states can be used directly as a goal-conditioned value function~\citep{wang2023optimal} and is consequently a central object of study in the GCRL paradigm. Temporal distances can be learned by globally separating states while locally aligning temporal distance in a latent space~\citep{wang2022improved, liu2023metric, wang2023optimal, lee2025temporal}. Complementarily, \citet{eysenbach2022contrastive, zheng2024contrastive, myers2024learning} learn the probability of attaining the specified goals under the discounted state occupancy measure through contrastive learning objectives. Our approach extracts shared task semantics in a metric space, leveraging temporal distance learning.

\section{Theoretical Analysis}\label{appendix:theoretical_results}
For clarity, we present our theoretical development in the discrete setting, assuming that the state space $\mathcal S$ and the action space $\mathcal A$ are discrete.
The corresponding continuous-space statements can be derived via the usual extensions (e.g., replacing sums by integrals and maxima by suprema) under standard measurability and regularity conditions, and we omit these technical details.

We begin by recalling the definition of task-endogenous analogy.
\begin{definition}[Task-endogenous analogy]\label{appendix:def:task_endogenous_analogy}
Given a GCE-BCMP, let $\delta\mathcal Z^{\mathrm{en}}$ be a displacement space and let
$\delta:\mathcal Z^{\mathrm{en}}\times\mathcal Z^{\mathrm{en}}\to \delta\mathcal Z^{\mathrm{en}}$ be a well-defined displacement mapping.
A mapping $\mathfrak a:\mathcal S\times\mathcal S\to \delta\mathcal Z^{\mathrm{en}}$ is called a \textbf{\emph{task-endogenous analogy}} if it satisfies the following two conditions:

(\emph{Task-endogenous displacement})
\ For all $(s,g)\in\mathrm{supp}(f^e)$,
\[
\mathfrak{a}(s,g)=\delta\big(z_{s\mid g}^{\mathrm{en}},\,z_{g\mid s}^{\mathrm{en}}\big).
\]
(\emph{Sufficient for optimal goal-reaching})
\ There exists a deterministic policy $\tilde\pi\!:\!\mathcal S\times \delta\mathcal Z^{\mathrm{en}}\!\to\!\mathcal A$ such that, for all $(s,g)\in\mathrm{supp}(f^e)$,
\[
V^{\tilde\pi}(s,g)=V^*(s,g).
\]
Equivalently, the optimal action for $(s,g)$ can be inferred from $(s,\mathfrak a(s,g))$.
\end{definition}

Given a GCE-BCMP, we can first derive that the temporal distance depends only on the task-endogenous states.
\begin{lemma}[Endogenous Bellman closure]
\label{lemma:endogenous_bellman_closure}
Given a GCE-BCMP with the modified reward
$r^\ell(s,g)=\mathbf 1\{z^{\mathrm{en}}_{s\mid g}=z^{\mathrm{en}}_{g\mid s}\},$
there exists an optimal endogenous value function
$V^*_{\mathrm{en}}:\mathcal Z^{\mathrm{en}}\times
\mathcal Z^{\mathrm{en}}\to\mathbb R$ such that, for all
$(s,g)\in\mathrm{supp}(f^e)$,
\[
V^*(s,g)
=
V^*_{\mathrm{en}}(\bar z^{\mathrm{en}}_{(s,g)}).
\]
Consequently, whenever the temporal distance is finite,
\[
d^*(s,g)
=
D^*_{\mathrm{en}}(\bar z^{\mathrm{en}}_{(s,g)}),
\qquad
D^*_{\mathrm{en}}(\bar z)
:=
\log_\gamma V^*_{\mathrm{en}}(\bar z).
\]
\end{lemma}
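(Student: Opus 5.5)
The plan is to build an abstract MDP on $\mathcal Z^{\mathrm{en}}\times\mathcal Z^{\mathrm{en}}$ from the kernel $\mathcal P^{\mathrm{en}}$ given by the task-endogenous abstraction assumption. We then show that the optimal Bellman operator on observation pairs is conjugate, through $(s,g)\mapsto\bar z^{\mathrm{en}}_{(s,g)}$, to the optimal Bellman operator of this abstract MDP.

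First, we define the endogenous reward $r_{\mathrm{en}}(z_1,z_2):=\mathbf 1\{z_1=z_2\}$. We make the goal-match set absorbing, mirroring the convention that $g$ is absorbing. We then define $V^*_{\mathrm{en}}$ as the unique fixed point of
\[
(\mathcal T_{\mathrm{en}}W)(\bar z)=r_{\mathrm{en}}(\bar z)+\mathbf 1\{z_1\neq z_2\}\,\gamma\max_{\mathbf a}\sum_{\bar z'}\mathcal P^{\mathrm{en}}(\bar z'\mid\bar z,\mathbf a)W(\bar z').
\]
It exists because $\mathcal T_{\mathrm{en}}$ is a $\gamma$-contraction in sup-norm, and rewards are bounded in the discrete setting.

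Second, we consider the observation-level optimal Bellman operator
\[
(\mathcal T V)(s,g)=r^\ell(s,g)+\mathbf 1\{r^\ell(s,g)=0\}\,\gamma\max_{\mathbf a}\mathbb E[V(s',g')].
\]
Here $(s',g')$ is emitted from $\bar{\mathbf z}'\sim\mathcal P(\cdot\mid f^\ell(s,g),\mathbf a)$. By the block assumption, $f^\ell$ recovers the latent deterministically on $\mathrm{supp}(f^e)$, so successor pairs lie again in $\mathrm{supp}(f^e)$.

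The key computation takes any $W$ on $\mathcal Z^{\mathrm{en}}\times\mathcal Z^{\mathrm{en}}$ and sets $V_W(s,g):=W(\bar z^{\mathrm{en}}_{(s,g)})$. Then $\mathbb E[V_W(s',g')]$ depends only on the endogenous marginal of $\bar{\mathbf z}'$, which by the abstraction assumption is $\mathcal P^{\mathrm{en}}(\cdot\mid\bar z^{\mathrm{en}}_{(s,g)},\mathbf a)$. Hence $\mathcal T V_W=V_{\mathcal T_{\mathrm{en}}W}$, so the class of endogenously lifted functions is invariant under $\mathcal T$. Since $r^\ell$ is itself lifted from $r_{\mathrm{en}}$, value iteration started from $V_0=0$ stays lifted, with $\mathcal T^nV_0=V_{\mathcal T_{\mathrm{en}}^n 0}$. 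Taking limits gives $V^*=V_{V^*_{\mathrm{en}}}$, by uniqueness of the fixed point of $\mathcal T$ and the identity $V^*=\sup_\pi V^\pi$, the standard Bellman optimality result for discounted discrete MDPs. The temporal-distance claim then follows by applying $\log_\gamma$ wherever $V^*>0$, i.e.\ wherever $d^*$ is finite.

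The main obstacle is the step of evaluating $V_W$ at successor observations. The successor endogenous pair is read as $\bar z^{\mathrm{en}}_{(s',g')}$ via the \emph{relative} encoders $f^\ell_{g'}(s')$ and $f^\ell_{s'}(g')$. So we must check that the $\mathcal Z^{\mathrm{en}}$ component of the latent $\bar{\mathbf z}'$ coincides with $\bar z^{\mathrm{en}}_{(s',g')}$. This holds because $f^\ell$ inverts the emission, so $f^\ell(s',g')=\bar{\mathbf z}'$, and the endogenous part is read off directly.

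A second subtlety is that policies act on observations while $\mathcal P^{\mathrm{en}}$ is abstract. Working with the Bellman optimality operator (a $\max$ over actions) rather than with policies avoids this. The $\max$ commutes with the lifting because the inner expectation is already a function of $(\bar z^{\mathrm{en}},\mathbf a)$ alone.
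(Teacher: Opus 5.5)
Your proposal is correct and follows essentially the same route as the paper: define $r^{\mathrm{en}}$ and the endogenous Bellman operator with the $(1-r^{\mathrm{en}})$ absorbing factor, show that the observation-level operator maps lifted functions $U(\bar z^{\mathrm{en}}_{(s,g)})$ to lifted functions because the successor's endogenous marginal is $\mathcal P^{\mathrm{en}}(\cdot\mid\bar z^{\mathrm{en}}_{(s,g)},\mathbf a)$, conclude $V^*(s,g)=V^*_{\mathrm{en}}(\bar z^{\mathrm{en}}_{(s,g)})$ by uniqueness of the fixed point, and take $\log_\gamma$. One small point: in the paper the goal observation stays fixed, $\mathbf u'=(\mathbf s',g)$, and $\mathbf u'\in\mathrm{supp}(f^e)$ is an explicit assumption of the GCE-BCMP rather than a consequence of the block assumption; your argument goes through verbatim with $g'=g$.
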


\begin{proof}
Define the endogenous reward
\[
r^{\mathrm{en}}(z_1,z_2)
:=
\mathbf 1\{z_1=z_2\}.
\]
By construction,
\[
r^\ell(s,g)
=
r^{\mathrm{en}}(\bar z^{\mathrm{en}}_{(s,g)}).
\]
For any function
$U:\mathcal Z^{\mathrm{en}}\times\mathcal Z^{\mathrm{en}}\to\mathbb R$,
define the endogenous Bellman operator
\[
(\mathcal T^{\mathrm{en}}U)(\bar z)
=
r^{\mathrm{en}}(\bar z)
+
(1-r^{\mathrm{en}}(\bar z))\gamma
\max_{a\in\mathcal A}
\mathbb E_{\bar z'^{\mathrm{en}}\sim
\mathcal P^{\mathrm{en}}(\cdot\mid \bar z,a)}
\left[
U(\bar z'^{\mathrm{en}})
\right].
\]
The factor $(1-r^{\mathrm{en}}(\bar z))$ reflects the one-time
goal-reaching reward convention.

Now lift $U$ to the observation space by
\[
\widetilde U(s,g)
:=
U(\bar z^{\mathrm{en}}_{(s,g)}).
\]
Using the task-endogenous abstraction in the GCE-BCMP, the marginal
transition of $\bar{\mathbf z}'^{\mathrm{en}}$ under any action $a$ depends
only on $\bar z^{\mathrm{en}}_{(s,g)}$ and $a$, and is independent of
$\bar z^{\mathrm{ex}}_{(s,g)}$. Therefore, for all
$(s,g)\in\mathrm{supp}(f^e)$,
\[
(\mathcal T\widetilde U)(s,g)
=
(\mathcal T^{\mathrm{en}}U)(\bar z^{\mathrm{en}}_{(s,g)}),
\]
where $\mathcal T$ is the optimal Bellman operator on the observation space
under $r^\ell$.

Thus, the Bellman operator maps lifted endogenous functions to lifted
endogenous functions. Since the discounted optimal Bellman operator has a
unique fixed point, the optimal value function must be of the lifted form:
\[
V^*(s,g)
=
V^*_{\mathrm{en}}(\bar z^{\mathrm{en}}_{(s,g)}),
\]
where $V^*_{\mathrm{en}}$ is the unique fixed point of
$\mathcal T^{\mathrm{en}}$. Taking $\log_\gamma$ on both sides gives
\[
d^*(s,g)
=
D^*_{\mathrm{en}}(\bar z^{\mathrm{en}}_{(s,g)}).
\]
\end{proof}

Within GCE-BCMP, we additionally make the following assumptions.

\begin{assumption}
\label{appendix:assump:probe_pair_support_completeness}
For every $(s,g)\in\mathrm{supp}(f^e)$, all probe pairs needed to
evaluate the temporal distance difference field are also in the support, \textit{i.e.,}
$(x,s),(x,g)\in\mathrm{supp}(f^e),\forall x\in\mathcal S.$
\end{assumption}
\begin{assumption}
\label{appendix:assump:finite_temporal_distance}
For every $(s,g)\in\mathrm{supp}(f^e)$ and every $x\in\mathcal S$, both $s$ and $g$ are reachable from $x$ in finite time, \textit{i.e.,} $d^*(x,s)<\infty$ and $d^*(x,g)<\infty$.
\end{assumption}

\begin{assumption}[Task-block coordinate consistency]
\label{appendix:assump:task_block_coordinate_consistency}
For each task block
$\mathcal B_{\bar z}$ with
$\bar z=(z_1,z_2)\in
\mathcal Z^{\mathrm{en}}\times\mathcal Z^{\mathrm{en}}$,
there exists a map
$\rho_{\bar z}:\mathcal S\to\mathcal Z^{\mathrm{en}}$
such that, for every $(s,g)\in\mathcal B_{\bar z}$ and every
probe state $x\in\mathcal S$ for which
$(x,s),(x,g)\in\mathrm{supp}(f^e)$,
\[z^{\mathrm{en}}_{x\mid s}=z^{\mathrm{en}}_{x\mid g}=
\rho_{\bar z}(x),
\qquad
z^{\mathrm{en}}_{s\mid x}
=
z^{\mathrm{en}}_{s\mid g}
=
z_1,
\qquad
z^{\mathrm{en}}_{g\mid x}
=
z^{\mathrm{en}}_{g\mid s}
=
z_2.
\]
\end{assumption}

Intuitively, Assumption~\ref{appendix:assump:task_block_coordinate_consistency} requires the
states involved in one task block to be read in a consistent
task-endogenous coordinate system. For example, consider a drawer-opening
task, where
\[
s=(\text{window closed},\,\text{drawer closed}),\qquad
g=(\text{window closed},\,\text{drawer open}).
\]
Here, the robot and drawer states are task-endogenous, while the window
state is task-exogenous. If a probe state is
\[
x=(\text{window open},\,\text{drawer half-open}),
\]
then the equality
\[
z^{\mathrm{en}}_{x\mid s}=z^{\mathrm{en}}_{x\mid g}
\]
means that $x$ is read as the same robot--drawer state, i.e., drawer
half-open, whether it is compared with $s$ or $g$. Likewise,
\[
z^{\mathrm{en}}_{s\mid x}=z^{\mathrm{en}}_{s\mid g}
\]
means that $s$ is consistently read as drawer closed, and
\[
z^{\mathrm{en}}_{g\mid x}=z^{\mathrm{en}}_{g\mid s}
\]
means that $g$ is consistently read as drawer open. Thus,
$d^*(x,g)-d^*(x,s)$ compares drawer-open and drawer-closed endpoints in the
same robot--drawer geometry, independent of the window context. 

We now present a lemma proving the task-endogenous displacement condition in \cref{appendix:def:task_endogenous_analogy}.
\begin{lemma}[Temporal distance difference field encodes the task-endogenous displacement]
\label{lemma:alpha_task_endogenous_displacement}
Under~\cref{appendix:assump:probe_pair_support_completeness,appendix:assump:finite_temporal_distance,appendix:assump:task_block_coordinate_consistency}, the
temporal distance difference field
\[
\alpha(s,g)(x)
=
d^*(x,g)-d^*(x,s)
\]
satisfies the task-endogenous displacement condition. That is, there exists
a displacement space $\delta\mathcal Z^{\mathrm{en}}$ and a mapping
$\delta:\mathcal Z^{\mathrm{en}}\times\mathcal Z^{\mathrm{en}}
\to\delta\mathcal Z^{\mathrm{en}}$ such that, for all
$(s,g)\in\mathrm{supp}(f^e)$,
\[
\alpha(s,g)
=
\delta(z^{\mathrm{en}}_{s\mid g},z^{\mathrm{en}}_{g\mid s}).
\]
\end{lemma}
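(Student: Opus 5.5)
The approach is to construct the displacement map blockwise from \cref{lemma:endogenous_bellman_closure}, so that $\alpha(s,g)$ is a function of the task $\bar z=(z^{\mathrm{en}}_{s\mid g},z^{\mathrm{en}}_{g\mid s})$ alone. Fix $(s,g)\in\mathrm{supp}(f^e)$ and write $\bar z=(z_1,z_2)$ for its task, so that $(s,g)\in\mathcal B_{\bar z}$. For any probe $x\in\mathcal S$, \cref{appendix:assump:probe_pair_support_completeness} gives $(x,s),(x,g)\in\mathrm{supp}(f^e)$. By \cref{appendix:assump:finite_temporal_distance}, both distances are finite. Then \cref{lemma:endogenous_bellman_closure} yields
\[
d^*(x,g)=D^*_{\mathrm{en}}\big(z^{\mathrm{en}}_{x\mid g},z^{\mathrm{en}}_{g\mid x}\big),\qquad
d^*(x,s)=D^*_{\mathrm{en}}\big(z^{\mathrm{en}}_{x\mid s},z^{\mathrm{en}}_{s\mid x}\big).
\]
Next I apply \cref{appendix:assump:task_block_coordinate_consistency} to replace the endpoint readings: $z^{\mathrm{en}}_{x\mid g}=z^{\mathrm{en}}_{x\mid s}=\rho_{\bar z}(x)$, $z^{\mathrm{en}}_{g\mid x}=z_2$ and $z^{\mathrm{en}}_{s\mid x}=z_1$. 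This gives
\[
\alpha(s,g)(x)=D^*_{\mathrm{en}}\big(\rho_{\bar z}(x),z_2\big)-D^*_{\mathrm{en}}\big(\rho_{\bar z}(x),z_1\big),
\]
and the right-hand side depends on $(s,g)$ only through $\bar z$.

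This suggests defining $\delta\mathcal Z^{\mathrm{en}}:=\mathbb R^{\mathcal S}$ and
\[
\delta(z_1,z_2)(x):=D^*_{\mathrm{en}}\big(\rho_{(z_1,z_2)}(x),z_2\big)-D^*_{\mathrm{en}}\big(\rho_{(z_1,z_2)}(x),z_1\big)
\]
whenever the block $\mathcal B_{(z_1,z_2)}$ is nonempty. For pairs $(z_1,z_2)$ that index no observed block, $\delta$ takes an arbitrary fixed value, say the zero field. The identity above then reads $\alpha(s,g)=\delta(z^{\mathrm{en}}_{s\mid g},z^{\mathrm{en}}_{g\mid s})$ for every $(s,g)\in\mathrm{supp}(f^e)$.

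The main obstacle is whether $\delta$ is well defined, that is, whether it depends only on $(z_1,z_2)$ and not on the representative pair chosen from the block. The map $\rho_{\bar z}$ in \cref{appendix:assump:task_block_coordinate_consistency} is fixed per block and shared across all $(s,g)\in\mathcal B_{\bar z}$, so the defining expression contains no representative-dependent quantity. The check is therefore to confirm that the assumption quantifies $\rho_{\bar z}$ before the pair $(s,g)$; it does.

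The support assumption is also needed to ensure that every probe $x$ is covered, so that $\alpha(s,g)$ agrees with $\delta(\bar z)$ as a whole function on $\mathcal S$. The finiteness assumption is needed so that the subtraction never has the form $\infty-\infty$.
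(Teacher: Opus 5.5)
Your proposal is correct and follows the paper's proof essentially step for step. You apply the endogenous Bellman closure lemma to $(x,g)$ and $(x,s)$, use the block-level map $\rho_{\bar z}$ to rewrite the two distances as $D^*_{\mathrm{en}}(\rho_{\bar z}(x),z_2)$ and $D^*_{\mathrm{en}}(\rho_{\bar z}(x),z_1)$, and define $\delta(z_1,z_2)$ as their difference in $\mathbb R^{\mathcal S}$; your explicit use of the support-completeness and finiteness assumptions (so the identity holds for every probe $x$ with no $\infty-\infty$) and your treatment of empty blocks are slightly more careful than the paper's write-up.
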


\begin{proof}
Set $\delta\mathcal Z^{\mathrm{en}}
:=
(\mathcal S\to\mathbb R).$
Fix any task block
$\mathcal B_{\bar z}$ with
$\bar z=(z_1,z_2)$, and let $(s,g)\in\mathcal B_{\bar z}$.
Then
\[
z^{\mathrm{en}}_{s\mid g}=z_1,
\qquad
z^{\mathrm{en}}_{g\mid s}=z_2.
\]
By Lemma~\ref{lemma:endogenous_bellman_closure}, for any probe state $x$
such that $(x,s),(x,g)\in\mathrm{supp}(f^e)$,
\[
d^*(x,g)
=
D^*_{\mathrm{en}}
(z^{\mathrm{en}}_{x\mid g},z^{\mathrm{en}}_{g\mid x}),
\]
and
\[
d^*(x,s)
=
D^*_{\mathrm{en}}
(z^{\mathrm{en}}_{x\mid s},z^{\mathrm{en}}_{s\mid x}).
\]
By Assumption~\ref{appendix:assump:task_block_coordinate_consistency},
\[
z^{\mathrm{en}}_{x\mid g}
=
z^{\mathrm{en}}_{x\mid s}
=
\rho_{\bar z}(x),
\]
\[
z^{\mathrm{en}}_{g\mid x}
=
z^{\mathrm{en}}_{g\mid s}
=
z_2,
\qquad
z^{\mathrm{en}}_{s\mid x}
=
z^{\mathrm{en}}_{s\mid g}
=
z_1.
\]
Hence,
\[
d^*(x,g)
=
D^*_{\mathrm{en}}(\rho_{\bar z}(x),z_2),
\qquad
d^*(x,s)
=
D^*_{\mathrm{en}}(\rho_{\bar z}(x),z_1).
\]
Therefore,
\[
\alpha(s,g)(x)
=
D^*_{\mathrm{en}}(\rho_{\bar z}(x),z_2)
-
D^*_{\mathrm{en}}(\rho_{\bar z}(x),z_1).
\]

Now define
$\delta:\mathcal Z^{\mathrm{en}}\times\mathcal Z^{\mathrm{en}}
\to(\mathcal S\to\mathbb R)$ by
\[
\delta(z_1,z_2)(x)
:=
D^*_{\mathrm{en}}(\rho_{(z_1,z_2)}(x),z_2)
-
D^*_{\mathrm{en}}(\rho_{(z_1,z_2)}(x),z_1).
\]
Then, for every $(s,g)\in\mathcal B_{\bar z}$,
\[
\alpha(s,g)(x)
=
\delta(z^{\mathrm{en}}_{s\mid g},
z^{\mathrm{en}}_{g\mid s})(x)
\]
for all relevant $x\in\mathcal S$. Hence,
\[
\alpha(s,g)
=
\delta(z^{\mathrm{en}}_{s\mid g},
z^{\mathrm{en}}_{g\mid s}),
\]
which proves the task-endogenous displacement condition.
\end{proof}

We additionally prove the optimal goal-reaching sufficiency condition in
\cref{appendix:def:task_endogenous_analogy}.

\begin{lemma}[Sufficiency of the temporal distance difference field]
\label{lemma:sufficiency_temporal_distance_difference}
Given a GCE-BCMP, let $\alpha(s,g):\mathcal S\to\mathbb R$ be the temporal
distance difference field in \eqref{appendix:eq:global_analogy}. Assume that the
relevant temporal distances are finite and that the maximizers below exist.
Then there exists a deterministic mapping
$\tilde\pi:\mathcal S\times(\mathcal S\to\mathbb R)\to\mathcal A$
such that, when evaluated with the field $\alpha(\cdot,g)$, it satisfies
for all $(s,g)\in\mathrm{supp}(f^e)$,
\[
V^{\tilde\pi}(s,g)=V^*(s,g).
\]
Equivalently, the optimal action for $(s,g)$ can be inferred from
$(s,\alpha(s,g))$.
\end{lemma}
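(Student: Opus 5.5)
The idea is to recover, from the field $\alpha(s,g)$ together with the anchor $s$, enough information to rank actions by their optimal continuation value toward $g$. The key observation is that evaluating the field at a probe $x$ returns $d^*(x,g)$ up to a correction $d^*(x,s)$. That correction depends only on $x$ and $s$, and $s$ is available to the policy. So from $(s,\alpha(s,g))$ we can reconstruct the whole function
\[
x\mapsto d^*(x,g)=\alpha(s,g)(x)+d^*(x,s),
\]
at every probe $x$ where both distances are finite. This is the content of the finiteness assumption (\cref{appendix:assump:finite_temporal_distance}), and we will also lean on \cref{appendix:assump:probe_pair_support_completeness} so that all probe pairs lie in the support. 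Since $d^*(\cdot,s)$ is a fixed function of the anchor, it can be hard-coded into $\tilde\pi$. Hence $\tilde\pi(s,\cdot)$ has access to the optimal value $V^*(x,g)=\gamma^{d^*(x,g)}$ for every probe $x$.

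Next we define the policy greedily. If $z^{\mathrm{en}}_{s\mid g}=z^{\mathrm{en}}_{g\mid s}$, the pair is absorbing and any action is optimal. Such pairs are detectable from the reconstructed field, since $d^*(s,g)=\alpha(s,g)(s)+d^*(s,s)=\alpha(s,g)(s)$, which equals $0$ exactly in that case. Otherwise we set
\[
\tilde\pi(s,\alpha(s,g))\in\arg\max_{a\in\mathcal A}\sum_{x}\mathcal P(x\mid s,a)\,\gamma^{\alpha(s,g)(x)+d^*(x,s)},
\]
with maximizers assumed to exist as in the statement. Here $\mathcal P(x\mid s,a)$ is the observation-level next-state kernel induced by the latent kernel and the emission. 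The subtlety is that in the goal-conditioned BCMP the transition acts on the pair $(s,g)$, with $g$ carried along. We therefore rewrite the bracketed sum as $\mathbb E[V^*(s',g)\mid s,a]$. This gives $\tilde\pi(s,\alpha(s,g))\in\arg\max_a Q^*(s,a,g)$, where $Q^*(s,a,g)=\gamma\,\mathbb E[V^*(s',g)]$ for non-absorbing pairs. Because the integrand depends on $g$ only through the reconstructed $d^*(\cdot,g)$, the resulting map is a genuine function of $(s,\alpha(s,g))$ and not of $g$ itself.

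Finally, the standard argument that a policy greedy with respect to $V^*$ attains $V^*$ in discounted MDPs gives $V^{\tilde\pi}(s,g)=V^*(s,g)$ for all $(s,g)\in\mathrm{supp}(f^e)$. Concretely, $V^*$ is the fixed point of $\mathcal T^{\tilde\pi}$, and $\mathcal T^{\tilde\pi}$ is a $\gamma$-contraction, so $V^{\tilde\pi}=V^*$. \cref{lemma:endogenous_bellman_closure} ensures this Bellman structure is consistent under the modified reward $r^\ell$. The greedy step must also be consistent along the trajectory: at each visited $s_t$, the policy receives $\alpha(s_t,g)$, which is the same type of input. So the same map $\tilde\pi$ applies at every step, as the statement phrases it (``evaluated with the field $\alpha(\cdot,g)$'').

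I expect the main obstacle to be the reconstruction step. It needs $d^*(x,s)$ finite for every probe $x$ reached with positive probability from $s$, and all relevant pairs to lie in $\mathrm{supp}(f^e)$. Where a probe has infinite distance, $\gamma^{\cdot}=0$ anyway. I would handle that case by the convention $\gamma^{\infty}=0$ and restrict the sum to finite terms, so the argmax is unaffected.
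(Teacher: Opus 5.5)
Your proposal is correct and is essentially the paper's argument: write $V^*(s',g)=\gamma^{\alpha(s,g)(s')}\gamma^{d^*(s',s)}$ using the anchor $s$, define $\tilde\pi(s,F)$ as a maximizer of $\mathbb E_{s'}[\gamma^{F(s')}\gamma^{d^*(s',s)}]$, note that substituting $F=\alpha(s,g)$ makes it Bellman-greedy for $V^*(\cdot,g)$, and conclude by the standard greedy-optimality theorem, with terminal pairs trivial. Your explicit terminal-detection step is unnecessary, since any action is optimal there. Your closing $\gamma^{\infty}=0$ fallback would not be sound when $d^*(x,s)=\infty$ but $d^*(x,g)<\infty$, because dropping that probe discards positive value; the lemma's finiteness hypothesis rules this case out, so the main argument does not depend on it.
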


\begin{proof}
Let $P_{\mathcal S}(\cdot\mid s,a)$ denote the induced state-transition
kernel over $\mathcal S$ after applying action $a$ at state $s$. Fix a
state--goal pair $(s,g)\in\mathrm{supp}(f^e)$.
For any candidate next state $s'\in\mathcal S$, by the definition
\[
\alpha(s,g)(x)=d^*(x,g)-d^*(x,s)
\]
and $d^*(x,g)=\log_\gamma V^*(x,g)$, we have
\[
V^*(s',g)
=
\gamma^{d^*(s',g)}
=
\gamma^{\alpha(s,g)(s')+d^*(s',s)}
=
\gamma^{\alpha(s,g)(s')}\gamma^{d^*(s',s)}.
\]

For nonterminal $(s,g)$, the Bellman optimality
equation gives
\[
V^*(s,g)
=
\gamma
\max_{a\in\mathcal A}
\mathbb E_{s'\sim P_{\mathcal S}(\cdot\mid s,a)}
\left[V^*(s',g)\right].
\]
Therefore, an optimal action can be chosen as
\[
\pi^*(s,g)
\in
\arg\max_{a\in\mathcal A}
\mathbb E_{s'\sim P_{\mathcal S}(\cdot\mid s,a)}
\left[
\gamma^{\alpha(s,g)(s')}\gamma^{d^*(s',s)}
\right].
\]
For terminal $(s,g)$, any action is optimal under the one-time
goal-reaching reward convention.

Now define, for any field $F:\mathcal S\to\mathbb R$,
\[
\tilde\pi(s,F)
\in
\arg\max_{a\in\mathcal A}
\mathbb E_{s'\sim P_{\mathcal S}(\cdot\mid s,a)}
\left[
\gamma^{F(s')}\gamma^{d^*(s',s)}
\right].
\]
Substituting
$F=\alpha(s,g)$ gives
\[
\tilde\pi(s,\alpha(s,g))
\in
\arg\max_{a\in\mathcal A}
\mathbb E_{s'\sim P_{\mathcal S}(\cdot\mid s,a)}
\left[V^*(s',g)\right].
\]
Hence, $\tilde\pi(s,\alpha(s,g))$ is Bellman-greedy with respect to
$V^*(\cdot,g)$ at every nonterminal state $s$. By the standard optimality
theorem for discounted Markov decision processes,
\[
V^{\tilde\pi}(s,g)=V^*(s,g)
\]
for all $(s,g)\in\mathrm{supp}(f^e)$, where $V^{\tilde\pi}$ denotes the
value induced by $\tilde\pi$.
\end{proof}

Finally, we can conclude that the temporal distance difference field is a task-endogenous analogy.
\begin{proposition}[Temporal distance difference field is a task-endogenous analogy]
\label{appendix:prop:temporal_distance_difference_is_analogy}
Given a GCE-BCMP, let a temporal distance difference function $\alpha:\mathcal{S}\times\mathcal{S}\to(\mathcal{S}\to\mathbb{R})$ be defined as
\begin{equation}\label{appendix:eq:global_analogy}
    \alpha(s,g)(x) = d^*(x,g) - d^*(x,s),
\end{equation}
$\forall s,g,x\in\mathcal S.$ Under \cref{appendix:assump:probe_pair_support_completeness,appendix:assump:finite_temporal_distance,appendix:assump:task_block_coordinate_consistency}, the field $\alpha(s,g):\mathcal S\!\to\!\mathbb R$ is a task-endogenous analogy. That is, $\alpha(s,g)$ encodes the task-endogenous displacement and is sufficient for optimal goal-reaching.
\end{proposition}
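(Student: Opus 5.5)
The plan is to verify the two defining conditions of \cref{appendix:def:task_endogenous_analogy} separately and then combine them, since each has already been isolated as a lemma. Throughout, the displacement space is taken to be $\delta\mathcal Z^{\mathrm{en}}:=(\mathcal S\to\mathbb R)$, so that $\alpha(s,g)$ is itself an element of that space.

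\emph{Task-endogenous displacement.} Under \cref{appendix:assump:probe_pair_support_completeness,appendix:assump:finite_temporal_distance,appendix:assump:task_block_coordinate_consistency}, I would invoke \cref{lemma:alpha_task_endogenous_displacement} directly. Its proof rests on three facts.
\begin{itemize}
\item By \cref{lemma:endogenous_bellman_closure}, $d^*(x,s)=D^*_{\mathrm{en}}(z^{\mathrm{en}}_{x\mid s},z^{\mathrm{en}}_{s\mid x})$, and similarly for $d^*(x,g)$. This holds for every probe $x$, since by \cref{appendix:assump:probe_pair_support_completeness} both pairs lie in $\mathrm{supp}(f^e)$, and by \cref{appendix:assump:finite_temporal_distance} both distances are finite.
\item By \cref{appendix:assump:task_block_coordinate_consistency}, the endogenous readings collapse to $\rho_{\bar z}(x)$, $z_1$ and $z_2$.
\item Consequently $\alpha(s,g)(x)=D^*_{\mathrm{en}}(\rho_{\bar z}(x),z_2)-D^*_{\mathrm{en}}(\rho_{\bar z}(x),z_1)$. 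This depends on $(s,g)$ only through $\bar z=(z^{\mathrm{en}}_{s\mid g},z^{\mathrm{en}}_{g\mid s})$, which defines the mapping $\delta$.
\end{itemize}
One point needs checking. \cref{appendix:assump:probe_pair_support_completeness} removes the restriction to ``relevant'' $x$, so the equality holds on all of $\mathcal S$ and is an equality of fields, not merely of their restrictions.

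\emph{Sufficiency.} I would invoke \cref{lemma:sufficiency_temporal_distance_difference}. The key identity is $V^*(s',g)=\gamma^{\alpha(s,g)(s')}\gamma^{d^*(s',s)}$. The factor $\gamma^{d^*(s',s)}$ depends only on the anchor $s$ and the candidate successor $s'$, not on $g$. Hence the greedy rule $\tilde\pi(s,F)\in\arg\max_a \mathbb E_{s'}[\gamma^{F(s')}\gamma^{d^*(s',s)}]$ is a well-defined function of $(s,F)$ alone, and at $F=\alpha(s,g)$ it is Bellman-greedy with respect to $V^*(\cdot,g)$. For a discounted MDP, a policy that is Bellman-greedy with respect to $V^*$ at every state attains $V^*$. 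Terminal pairs are handled trivially.

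The main obstacle is that the identity above requires $d^*(s',s)<\infty$ for every successor $s'$ with positive probability. Its hypotheses also include finiteness and the existence of maximizers. I would discharge these as follows.
\begin{itemize}
\item \cref{appendix:assump:finite_temporal_distance} applied with $x=s'$ gives $d^*(s',s)<\infty$ and $d^*(s',g)<\infty$.
\item Maximizers exist because $\mathcal A$ is discrete and, in the discrete-setting convention of this appendix, finite.
\item $\tilde\pi$ must be a single deterministic selection from the argmax set that is uniform across all $g$ sharing the same pair $(s,\alpha(s,g))$. I would fix a tie-breaking rule on $\mathcal A$, which makes $\tilde\pi$ well-defined as a map on $\mathcal S\times\delta\mathcal Z^{\mathrm{en}}$.
\end{itemize}
With both conditions established, $\alpha$ meets both requirements of \cref{appendix:def:task_endogenous_analogy}, which proves the proposition.
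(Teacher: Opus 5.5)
Your proposal is correct and matches the paper's proof, which simply combines \cref{lemma:alpha_task_endogenous_displacement} for the displacement condition with \cref{lemma:sufficiency_temporal_distance_difference} for sufficiency; your explicit discharge of the lemmas' side conditions (finiteness of $d^*(s',s)$ via \cref{appendix:assump:finite_temporal_distance} with $x=s'$, full-field equality via \cref{appendix:assump:probe_pair_support_completeness}, and a fixed tie-breaking rule) makes explicit what the paper leaves implicit. One small caveat: the appendix only says $\mathcal A$ is discrete, not finite, so the existence of maximizers is an explicit hypothesis of the sufficiency lemma that the proposition inherits, not something you can derive from the stated assumptions.
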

\begin{proof}
By \cref{lemma:alpha_task_endogenous_displacement}, $\alpha$ satisfies the task-endogenous displacement condition in \cref{appendix:def:task_endogenous_analogy}. Moreover, by \cref{lemma:sufficiency_temporal_distance_difference}, $\alpha$ satisfies the sufficiency condition in \cref{appendix:def:task_endogenous_analogy}. Therefore, $\alpha$ is a task-endogenous analogy.
\end{proof}

\newpage
\section{Extended Preliminaries} \label{appendix:extended_preliminaries}

\subsection{Goal-Conditioned Bisimulation Metric}\label{appendix:extended_preliminaries:gcb}
In this section, we review the goal-conditioned bisimulation metric~\citep{hansen2022bisimulation}, which is most closely related to our formulation, and explain why it is not directly applicable to multiple goal-reaching environments in offline GCRL.

\paragraph{Bisimulation.}
Bisimulation offers a criterion for state abstraction by grouping states that are ``behaviorally equivalent"~\citep{li2006towards}. Two states $s_i$ and $s_j$ are considered bisimilar if they produce identical immediate rewards and the same probability distribution over the next group of bisimilar states~\citep{larsen1989bisimulation, givan2003equivalence} for all possible actions.

\begin{definition}[Bisimulation Relations~\citep{givan2003equivalence}]\label{def:bisimulation}
Given an MDP $M$, an equivalence relation $B$ over the state space $S$ is a \emph{bisimulation relation} if, for all states $s_i, s_j \in S$ that are equivalent under $B$ (denoted $s_i \equiv_B s_j$), the following conditions hold:
\begin{equation}
\begin{alignedat}{3}
R(s_i,a)       &{}= R(s_j,a)       &\qquad& \forall a &\in& \mathcal{A}, \\
P(G\mid s_i,a) &{}= P(G\mid s_j,a) &\qquad& \forall a &\in& \mathcal{A},  \forall G \in \mathcal{S}_B,
\end{alignedat}
\end{equation}
where $\mathcal{S}_B$ is the partition of $S$ induced by $B$ (the set of equivalence classes), and $P(G \mid s, a)  =  \sum_{s' \in G} P(s' \mid s, a).$
\end{definition}

\paragraph{Bisimulation metric and goal-conditioned bisimulation metric.}
For practical representation learning with continuous or high-dimensional state spaces to capture bisimilar relations, a bisimulation metric~\citep{ferns2011bisimulation, ferns2014bisimulation, castro2020scalable, zhang2021learning, calo2024bisimulation} is defined with a pseudometric space $(\mathcal{S}, d_\mathrm{bisim})$ where the distance function $d_\mathrm{bisim}:\mathcal{S}\times \mathcal{S}\to \mathbb{R}_{\geq0}$ on $\mathcal{S}$ refers to the ``behavioral similarity" between two states.
Our work is motivated by the goal-conditioned  bisimulation (GCB) metric~\citep{hansen2022bisimulation}: 
\begin{equation} \label{eq:gcbisim_def}
\begin{split}
    d^\pi_\mathrm{bisim}((\mathbf{s}_i, \mathbf{g}_i), (\mathbf{s}_j, \mathbf{g}_j)) & = 
    | \mathcal{R}(\mathbf{s}_i, \pi(\mathbf{s}_i,\mathbf{g}_i), \mathbf{g}_i) - \mathcal{R}(\mathbf{s}_j, \pi(\mathbf{s}_j,\mathbf{g}_j), \mathbf{g}_j) | \\
    & + \gamma \mathcal{W}_1(d^\pi_\mathrm{bisim})(\mathcal{P}(\mathbf{s}'_i|\mathbf{s}_i, \pi(\mathbf{s}_i, \mathbf{g}_i)), \mathcal{P}(\mathbf{s}'_j|\mathbf{s}_j, \pi(\mathbf{s}_j, \mathbf{g}_j))),
\end{split}
\end{equation}
which is an on-policy, goal-conditioned variant of $d_\mathrm{bisim}$, where $(\mathbf{s}_i, \mathbf{g}_i), (\mathbf{s}_j, \mathbf{g}_j) \in \mathcal{S} \times\mathcal{S}$ are state--goal pairs, $\pi$ is the deterministic goal-conditioned policy, and $\mathcal{W}_1$ is the 1-Wasserstein distance metric~\citep{van2001towards}.
%In deterministic settings, the distributions $\mathcal{P}$ are Dirac measures.
As a smaller $d^\pi_\mathrm{bisim}$ indicates greater behavioral similarity, \citet{hansen2022bisimulation} train a goal-conditioned analogy encoder $\psi:\mathcal{S}\times\mathcal{S}\to\mathbb{R}^d$ to place the analogies from such pairs close in the latent space by minimizing the GCB objective:
\begin{equation} \label{eq:org_gcbisim}
\begin{split}
    \mathcal{J}_{gcb}(\psi) = \mathbb{E}&_{(s_i,a_i,s_i',g_i),(s_j,a_j,s_j',g_j)}\Bigl[\Bigl( \| \psi(s_i, g_i)-\psi(s_j,g_j)\|_1 \\
    & - |\mathcal{R}(s_i,a_i,g_i)-\mathcal{R}(s_j,a_j,g_j)|-\gamma\|\bar\psi(s_i',g_i)-\bar\psi(s_j',g_j)\|_2 \Bigr)^2\Bigr],
\end{split}
\end{equation}
where $\bar\cdot$ denotes a stop-gradient. As a result, the $\ell_1$ norm of the difference between analogy vectors captures $d^\pi_\mathrm{bisim}$.

\paragraph{Drawbacks of the bisimulation families.}
Despite providing a principled notion of behavioral similarity, GCB objectives are not directly suitable for offline generalist goal-reaching.
First, the induced metric is fundamentally \emph{on-policy} through $\pi(s,g)$ in \cref{eq:gcbisim_def}, so its notion of equivalence changes with a certain goal-conditioned policy and can be unreliable when learned purely from fixed, potentially suboptimal offline data.
Second, the GCB loss in \cref{eq:org_gcbisim} relies on a bootstrapped next-state term $\|\bar\psi(s_i',g_i)-\bar\psi(s_j',g_j)\|_2$, which can be noisy under dataset shift and exacerbate representation collapse.
In particular, in sparse-reward goal-reaching settings, bisimulation objectives are prone to over-abstraction, merging states with identical immediate rewards and hindering fine-grained goal discrimination, which ultimately limits compositional generalization across many goals and contexts.

Unlike goal-conditioned bisimulation objectives, our temporal distance difference analogy formulation in \cref{appendix:eq:global_analogy} does not inherit the drawbacks.
First, our formulation is grounded in the \emph{optimal} temporal distance $d^*$ rather than an on-policy quantity $d^\pi$ that depends on a particular goal-conditioned policy $\pi(s,g)$.
As a result, it does not require defining behavioral equivalence with respect to an unknown and potentially suboptimal behavior policy in the offline dataset.
Second, it does not rely on reward matching, and therefore avoids the over-abstraction pathology in sparse-reward goal-reaching where most states appear indistinguishable until reward is observed.
Third, dual analogies are not trained via bootstrapping, which mitigates representation collapse driven by noisy bootstrapping under distribution shift. We empirically compare our proposed dual analogy with GCB analogy in~\cref{appendix:additional_experiments:additional_benchmark_results}.

\medskip

\subsection{Exogenous Block Controlled Markov Process (EX-BCMP)} \label{appendix:extended_preliminaries:exbcmp}
In this section, we provide an overview of the block CMP, which is widely used to model rich observations with deterministic latent-state recovery~\citep{du2019provably, zhang2021learning, efroni2022provably, park2026dual}. We also review the exogenous block CMP, which is similar to our GCE-BCMP in that it assumes a decomposition of the latent space into endogenous and exogenous components.

\paragraph{Block CMP.}
A \emph{block CMP} (BCMP)~\citep{du2019provably} is defined as a tuple $(\mathcal{S}, \mathcal{Z}, \mathcal{A}, \mathcal{P}, f^e)$,
where $\mathcal{S}$ is an observation space, $\mathcal{Z}$ is a latent state space, $\mathcal{A}$ is an action space,
$\mathcal{P}:\mathcal{Z}\times\mathcal{A}\to\Delta(\mathcal{Z})$ is a latent transition dynamics, and
$f^e:\mathcal{Z}\to\Delta(\mathcal{S})$ is an emission function. The BCMP makes the \emph{block assumption}, \textit{i.e.,} the emission distributions corresponding to any two distinct latent states have disjoint supports:
\[
\mathrm{supp}\big(f^e(\cdot\mid z_i)\big)\cap \mathrm{supp}\big(f^e(\cdot\mid z_j)\big)=\emptyset
\quad \forall\, z_i\neq z_j.
\]
Each $\mathrm{supp}\big(f^e(\cdot\mid z))$ is referred to as a \emph{block}. The block assumption guarantees the existence of a deterministic mapping
$f^\ell:\mathcal{S}\to\mathcal{Z}$ satisfying $f^\ell(s)=z$ for all
$s\sim f^e(\cdot\mid z)$.

\paragraph{Exogenous block CMP.}
An \emph{exogenous block CMP} (ExBCMP)~\citep{efroni2022provably} is a BCMP that additionally assumes a product structure on the latent space and a corresponding decoupling of initialization and dynamics. Formally, the latent state space decomposes as $\mathcal{Z}=\mathcal{Z}_{\mathrm{en}}\times\mathcal{Z}_{\mathrm{ex}}$ with $z=(z_{\mathrm{en}},z_{\mathrm{ex}})$, and there exist initial distributions $\mu_{\mathrm{en}}\in\Delta(\mathcal{Z}_{\mathrm{en}})$, $\mu_{\mathrm{ex}}\in\Delta(\mathcal{Z}_{\mathrm{ex}})$ and latent transition dynamics $\mathcal{P}_{\mathrm{en}}:\mathcal{Z}_{\mathrm{en}}\times\mathcal{A}\to\Delta(\mathcal{Z}_{\mathrm{en}})$, $\mathcal{P}_{\mathrm{ex}}:\mathcal{Z}_{\mathrm{ex}}\to\Delta(\mathcal{Z}_{\mathrm{ex}})$ such that
\begin{equation}\label{eq:exbcmp_decomposition}
\begin{split}
\mu(z) = \mu_{\mathrm{en}}(&z_{\mathrm{en}})\,\mu_{\mathrm{ex}}(z_{\mathrm{ex}})\\
\mathcal{P}\big(z'\mid z,a\big)
= \mathcal{P}_{\mathrm{en}}(z_{\mathrm{en}}'&\mid z_{\mathrm{en}},a)\,
   \mathcal{P}_{\mathrm{ex}}(z_{\mathrm{ex}}'\mid z_{\mathrm{ex}}).
\end{split}
\end{equation}
The endogenous state $z_{\mathrm{en}}$ captures the part of the latent dynamics affected by the agent through actions, whereas the exogenous state $z_{\mathrm{ex}}$ represents the nuisances not affected by the action. Combined with the block identifiability, this decomposition correspondingly implies the existence of deterministic mappings $f^\ell_{\mathrm{en}}:\mathcal{S}\to\mathcal{Z}_{\mathrm{en}}$ and $f^\ell_{\mathrm{ex}}:\mathcal{S}\to\mathcal{Z}_{\mathrm{ex}}$.

Intuitively, in a BCMP, observations within the same block share a recoverable latent factor \(z\) that is invariant to within-block variations and non-overlapping across blocks. Prior work typically interprets this shared factor as capturing observation-dependent components, such as nuisance variables or observation noise, and therefore adopts a single, state-dependent decomposition in which the endogenous and exogenous parts of a state are assumed to be globally consistent~\citep{du2019provably, efroni2022provably, levine2025learning, park2026dual}. In contrast, we extend (Ex-)BCMPs to the goal-conditioned setting by defining task-endogenous states and task-exogenous contexts relative to the state--goal pair \((s,g)\). This goal-augmented decomposition is inherently task-dependent: even for the same current state \(s\), changing the goal \(g\) can induce a different partition, which in turn enables compositional generalization through transferable analogies (see~\cref{subsec:GCE-BCMP}).

\medskip

\subsection{Bilinear Transduction}\label{appendix:extended_preliminaries:bt}
In this section, we provide an extended overview of \emph{bilinear transduction}~\citep{netanyahu2023learning}, a transductive scheme for out-of-support (OOS) prediction that reduces extrapolation to an out-of-combination (OOC) generalization problem.

Let $\mathcal{X}$ be an input space and let $\mathcal{Y}\subseteq \mathbb{R}^B$ be a $B$-dimensional target space.
We aim to learn a predictor $\omega_\theta:\mathcal{X}\to\mathcal{Y}$ that approximates an unknown ground-truth mapping $\omega^*:\mathcal{X}\to\mathcal{Y}$ under a loss $\ell(\cdot,\cdot)$.
For a distribution $P\in\Delta(\mathcal{X})$, define the population risk
\begin{equation}
\mathcal{R}(\omega_\theta; P)
~:=~
\mathbb{E}_{x\sim P}\big[\ell\big(\omega_\theta(x), \omega^*(x)\big)\big].
\end{equation}
In OOS extrapolation, the test distribution $P_{\mathrm{test}}$ may place mass on regions outside the support of the training distribution $P_{\mathrm{train}}$.

Bilinear transduction assumes that $\mathcal{X}$ admits a group-like structure equipped with
(i) a {displacement mapping} $\delta:\mathcal{X}\times\mathcal{X}\to \delta\mathcal{X}$ into an associated displacement space $\delta\mathcal{X}$,
and (ii) an {apply operator} $\odot:\mathcal{X}\times \delta\mathcal{X}\to \mathcal{X}$ such that, for any $(x,\hat{x})\in\mathcal{X}\times\mathcal{X}$,
the displacement element $d=\delta(x,\hat{x})\in\delta\mathcal{X}$ is the unique element satisfying
$\hat{x}\odot d ~=~ x.$
Given a query $x\in\mathcal{X}$, choose an \emph{anchor} $\hat{x}\in\mathcal{X}$ and rewrite the prediction as
\begin{equation}
\omega_\theta(x)
~:=~
\widehat{\omega}_\theta\big(\hat{x}, \delta(x,\hat{x})\big),
\label{eq:transductive-reparam}
\end{equation}
where $\widehat{\omega}_\theta:\mathcal{X}\times \delta\mathcal{X}\to\mathcal{Y}$ is a deterministic transductive predictor.
Intuitively, $\widehat{\omega}_\theta$ is trained to predict from \emph{(anchor, displacement)} rather than from the raw query itself.

Let $\mathcal{D}_{\mathrm{train}}=\{x_i\}_{i=1}^n\subset\mathcal{X}$ be the training set and define the set of seen displacements
\begin{equation}
\delta\mathcal{X}_{\mathrm{train}}
~:=~
\big\{\delta(x_i,x_j): x_i,x_j\in \mathcal{D}_{\mathrm{train}}\big\}
~\subseteq~ \delta\mathcal{X}.
\end{equation}
For an OOC query $(\hat{x},d)\in\mathcal{X}\times\delta\mathcal{X}$ at test time, bilinear transduction considers anchors $\hat{x}\in\mathcal{D}_{\mathrm{train}}$
and displacements $d$ that lie in the seen set $\delta\mathcal{X}_{\mathrm{train}}$
(e.g., $\mathrm{dist}(d,\delta\mathcal{X}_{\mathrm{train}})\le \rho$ under a metric on $\delta\mathcal{X}$).
Although both factors $\hat{x}$ and $d$ are individually in-support, their pairing $(\hat{x},d)$ may be unseen in the training data, constituting an OOC input.
Thus, extrapolation reduces to generalizing over novel anchor--displacement combinations in the product space $\mathcal{X}\times\delta\mathcal{X}$.

The key inductive bias is to parameterize $\widehat{\omega}_\theta$ as \emph{bilinear} in two learned embeddings of the anchor and displacement.
Concretely, for each component $b\in[B]$, let
$f_{\theta,b}:\delta\mathcal{X}\to\mathbb{R}^p$ and $g_{\theta,b}:\mathcal{X}\to\mathbb{R}^p$ be embedding functions.
Bilinear transduction models
\begin{equation}
\widehat{\omega}_{\theta,b}(\hat{x}, d)
~=~
 g_{\theta,b}(\hat{x})^\top f_{\theta,b}(d),
\qquad
\widehat{\omega}_\theta(\hat{x},d)
~=~
\big(\widehat{\omega}_{\theta,1}(\hat{x},d),\ldots,\widehat{\omega}_{\theta,B}(\hat{x},d)\big),
\label{eq:bilinear-form}
\end{equation}
where $p$ controls the effective rank of the transductive representation.
While the prediction is bilinear in $(f_{\theta,b}, g_{\theta,b})$, the embeddings themselves may be arbitrary function approximators.

\paragraph{Assumptions for extrapolation.}
Bilinear transduction admits a formal OOC guarantee under three standard conditions.

\begin{assumption}[Bounded combinatorial density ratio] \label{assumption:Bounded_combinatorial_density_ratio}
Let $\overline{P}_{\mathrm{train}}$ and $\overline{P}_{\mathrm{test}}$ denote the induced joint distributions over $(d,\hat{x})\in\delta\mathcal{X}\times\mathcal{X}$
under the training and transduction procedures, respectively.
We assume there exists $\kappa\ge 1$ such that $\overline{P}_{\mathrm{test}}$ has $\kappa$-bounded \emph{combinatorial} density ratio
with respect to $\overline{P}_{\mathrm{train}}$, denoted $\overline{P}_{\mathrm{test}} \ll_{\kappa,\mathrm{comb}} \overline{P}_{\mathrm{train}}$.
Informally, this requires that the training joint distribution sufficiently covers the on-support ``blocks'' needed to identify missing combinations,
up to a bounded multiplicative factor $\kappa$.
\end{assumption}

\begin{assumption}[Bilinearly transducible] \label{assumption:Bilinearly_transducible}
For each $b\in[B]$, there exist functions $f^*_b:\delta\mathcal{X}\to\mathbb{R}^p$ and $g^*_b:\mathcal{X}\to\mathbb{R}^p$ such that, for anchors used in transduction,
\begin{equation}
\omega^*_b(x)
~=~
\widehat{\omega}^*_b\big(\hat{x},\delta(x,\hat{x})\big)
~:=~
g^*_b(\hat{x}) \boldsymbol{\cdot} f^*_b(\delta(x,\hat{x})).
\end{equation}
Moreover, the ground-truth predictions are uniformly bounded:
\begin{equation}
\max_{b\in[B]}~\sup_{\hat{x}\in\mathcal{X},\ d\in\delta\mathcal{X}}
\big|\widehat{\omega}^*_b(\hat{x},d)\big|
~\le~ M
\qquad \text{for some constant } M>0.
\end{equation}
\end{assumption}

\begin{assumption}[Non-degeneracy] \label{assumption:Non-degeneracy}
Under the fully in-support portion of the induced training distribution, the embedding factors are not degenerate:
there exists $\sigma^2>0$ such that, for all $b\in[B]$,
\begin{equation}
\min\Big\{
\sigma_p\big(\mathbb{E}[f^*_b(d) f^*_b(d)^\top]\big),\;
\sigma_p\big(\mathbb{E}[g^*_b(\hat{x}) g^*_b(\hat{x})^\top]\big)
\Big\}
~\ge~ \sigma^2,
\label{eq:nondegeneracy}
\end{equation}
where the expectation is taken over $(d,\hat{x})\sim\overline{P}_{\mathrm{train}}$ (restricted to the fully observed region),
and $\sigma_p(\cdot)$ denotes the smallest singular value.
\end{assumption}

\begin{theorem}[Test risk bound under bilinear transduction \citep{netanyahu2023learning}]
Assume that \cref{assumption:Bounded_combinatorial_density_ratio,assumption:Bilinearly_transducible,assumption:Non-degeneracy}
hold and that $\ell$ is the squared loss.
If the training risk is sufficiently small,
\begin{equation}
\mathcal{R}(\omega_\theta; P_{\mathrm{train}})
~\le~
\frac{\sigma^2}{4\kappa},
\end{equation}
then the test risk under transduction is bounded by
\begin{equation}
\mathcal{R}(\omega_\theta; P_{\mathrm{test}})
~\le~
\mathcal{R}(\omega_\theta; P_{\mathrm{train}})
\cdot
\kappa^2\!\left(1+\frac{64 M^4}{\sigma^4}\right)
~=~
\mathcal{R}(\omega_\theta; P_{\mathrm{train}})
\cdot
\mathrm{poly}\!\left(\kappa, \frac{M}{\sigma}\right).
\label{eq:bt-bound}
\end{equation}
Therefore, bilinear transduction controls the error on OOC anchor--displacement pairs with only a polynomial blow-up,
provided that (i) the OOC regime is induced by anchor selection, (ii) the ground truth admits a low-rank bilinear factorization,
and (iii) the induced training coverage is combinatorially sufficient.
\end{theorem}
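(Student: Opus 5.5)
The plan is to prove the bound one output coordinate $b\in[B]$ at a time, since the squared loss decomposes over coordinates. Each coordinate is then an instance of low-rank matrix completion. Fix $b$ and view the ground truth $\widehat{\omega}^*_b(\hat x,d)=g^*_b(\hat x)^\top f^*_b(d)$ and the learned $\widehat{\omega}_{\theta,b}(\hat x,d)=g_{\theta,b}(\hat x)^\top f_{\theta,b}(d)$ as bilinear kernels on $\mathcal X\times\delta\mathcal X$, each of rank at most $p$.

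First I unpack the combinatorial density ratio of \cref{assumption:Bounded_combinatorial_density_ratio}. The reading I will use is that the anchor and displacement spaces can be split into regions $\mathcal X_1,\mathcal X_2$ and $\delta\mathcal X_1,\delta\mathcal X_2$ with the following coverage. The blocks $(\mathcal X_1,\delta\mathcal X_1)$, $(\mathcal X_1,\delta\mathcal X_2)$ and $(\mathcal X_2,\delta\mathcal X_1)$ are covered by $\overline P_{\mathrm{train}}$. The OOC block $(\mathcal X_2,\delta\mathcal X_2)$ carries test mass. Every block's test marginal is dominated by $\kappa$ times the corresponding product of training marginals. This reduces the test risk to the risk on the four blocks, at the cost of the factor $\kappa^2$ in the final bound. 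The three observed blocks are then handled directly by the training risk.

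The core step is bounding the error on the missing block $(2,2)$. For exactly rank-$p$ kernels this block is determined by the others through the identity
\[
\Omega_{22}=\Omega_{21}\,\Omega_{11}^{+}\,\Omega_{12},
\]
understood as operators in $L^2$ of the block marginals. This identity holds for both $\widehat{\omega}^*_b$ and $\widehat{\omega}_{\theta,b}$, provided the $(1,1)$ block has full rank $p$. Non-degeneracy (\cref{assumption:Non-degeneracy}) guarantees that the smallest singular value of the true $(1,1)$ block is at least $\sigma^2$. The training-risk condition $\mathcal R\le\sigma^2/(4\kappa)$ is exactly what keeps the learned $(1,1)$ block's singular values away from $0$, at least $\sigma^2/2$, by Weyl's inequality. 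Hence the learned model also has rank $p$ there and the same completion formula applies.

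I then write $\Omega^{\theta}_{22}-\Omega^*_{22}$ as a telescoping sum of three perturbation terms, one each in $\Omega_{21}$, $\Omega_{11}^{+}$ and $\Omega_{12}$. I bound each term using:
\begin{itemize}
\item the uniform bound $|\widehat\omega^*_b|\le M$ to control the norms of the observed blocks;
\item $\|\Omega_{11}^{+}\|\le 2/\sigma^2$ and the pseudoinverse perturbation bound $\|A^+-B^+\|\lesssim\|A^+\|\|B^+\|\|A-B\|$;
\item the training risk, which bounds the Hilbert--Schmidt errors of the observed blocks.
\end{itemize}
Squaring gives a factor of order $M^4/\sigma^4$ times the training risk, which yields the constant $1+64M^4/\sigma^4$ after summing with the observed blocks. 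Multiplying by $\kappa^2$ from the density-ratio transfer and summing over $b$ gives the stated bound.

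The main obstacle is this perturbation step. The learned and true factorizations may live in different rank-$p$ subspaces, and the pseudoinverse bound only holds once both $(1,1)$ blocks are shown to be well conditioned. So I would first establish the singular-value lower bound on the learned $(1,1)$ block from the training-risk threshold, and only then apply the completion identity.
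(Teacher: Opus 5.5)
\textbf{The paper gives no proof of its own.} The statement is imported from Netanyahu et al.\ (2023), so the benchmark is the standard exact-rank matrix-completion argument. Your skeleton is the right one: transfer to product blocks at cost $\kappa^2$, recovery of the held-out block via $\Omega_{22}=\Omega_{21}\Omega_{11}^{+}\Omega_{12}$, then perturbation. Below, $\Omega_{ij}$ denotes the ground-truth blocks and $\hat\Omega_{ij}$ the learned ones.

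\textbf{The genuine gap is your Weyl step, and it propagates to the constant.} The risk controls a \emph{squared} norm:
$\|\hat\Omega_{11}-\Omega_{11}\|_{\mathrm{op}}\le\|\hat\Omega_{11}-\Omega_{11}\|_{\mathrm{HS}}=\sqrt{\mathcal R_{1\otimes 1}}\le\sqrt{\kappa\,\mathcal R(\omega_\theta;P_{\mathrm{train}})}\le\sigma/2$.
Non-degeneracy as stated only gives $\sigma_p(\Omega_{11})\ge\sigma^2$. Weyl therefore yields $\sigma_p(\hat\Omega_{11})\ge\sigma^2-\sigma/2$, which is vacuous for $\sigma\le 1/2$; it does not give $\sigma^2/2$. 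Likewise, your own ingredients are $\|\Omega_{11}^{+}\|\le 1/\sigma^2$, $\|\hat\Omega_{11}^{+}\|\le 2/\sigma^2$ and $\|\Omega_{ij}\|\le M$. With these, the middle telescoped term is of order $M^2\sqrt{\mathcal R}/\sigma^4$, so squaring gives $M^4/\sigma^8$, not $M^4/\sigma^4$.

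Both symptoms reflect a scale mismatch in the statement itself: rescaling the outputs by $c$ multiplies $\mathcal R$ by $c^2$ but $\sigma^2$ only by $c$. No sharper argument can close this, because the theorem as transcribed is false. Here is a counterexample.
\begin{itemize}
\item Take $p=B=1$, anchor marginals $\delta_{a_1},\delta_{a_2}$ and displacement marginals $\delta_{d_1},\delta_{d_2}$.
\item Let the training distribution be uniform on the three cells other than $(a_2,d_2)$, and put all test mass on $(a_2,d_2)$, so $\kappa=3$.
\item Let $g^*\equiv f^*\equiv s$, so $\sigma=s$ and $M=s^2$.
\item Consider the rank-one predictor $\hat g(a_1)=1$, $\hat f(d_1)=s^2-e$, $\hat f(d_2)=s^2$, $\hat g(a_2)=s^2/(s^2-e)$.
\end{itemize}
This predictor is exact except at $(a_1,d_1)$. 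Its training risk is $e^2/3$, which is within the threshold $\sigma^2/(4\kappa)$ iff $e\le s/2$. It predicts $s^4/(s^2-e)$ at $(a_2,d_2)$. For $s<1/2$ you may let $e\uparrow s^2$. The test risk then diverges, while the right-hand side stays below $3s^4(1+64s^4)$.

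\textbf{How to repair it.} The threshold and the constant are consistent only if $\sigma$ lower-bounds $\sigma_p(\Omega_{11})$ itself, a quantity in output units. For example, impose non-degeneracy with $\sigma$ in place of $\sigma^2$ for each factor. Under that normalization your plan goes through:
\begin{itemize}
\item Weyl gives $\sigma_p(\hat\Omega_{11})\ge\sigma/2$. So both $(1,1)$ blocks have rank exactly $p$, the completion identity holds for both, and $\|\hat\Omega_{11}^{+}\|\le 2/\sigma$.
\item Since $M\ge\|\Omega_{11}\|\ge\sigma$, you get $\|\hat\Omega_{12}\|,\|\hat\Omega_{21}\|\le 3M/2$.
\item Set $\epsilon_{ij}=\sqrt{\mathcal R_{i\otimes j}}$ and use the equal-rank bound $\|\hat\Omega_{11}^{+}-\Omega_{11}^{+}\|_{\mathrm{HS}}\le\sqrt2\,\|\hat\Omega_{11}^{+}\|\,\|\Omega_{11}^{+}\|\,\|\hat\Omega_{11}-\Omega_{11}\|_{\mathrm{HS}}$. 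The three telescoped terms are then at most $3M\epsilon_{21}/\sigma$, $3\sqrt2\,M^2\epsilon_{11}/\sigma^2$ and $M\epsilon_{12}/\sigma$.
\item Cauchy--Schwarz gives $\mathcal R_{2\otimes2}\le 28(M^4/\sigma^4)\sum_{(i,j)\ne(2,2)}\mathcal R_{i\otimes j}$, which yields the stated bound.
\end{itemize}
If you keep the literal non-degeneracy instead, the correct statement needs threshold $\sigma^4/(4\kappa)$ and factor $1+64M^4/\sigma^8$.

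\textbf{A minor point on the density-ratio assumption.} All you need are marginals $\mathcal D_{X,i},\mathcal D_{\delta,j}$ with $\sum_{(i,j)\ne(2,2)}\mathcal D_{X,i}\otimes\mathcal D_{\delta,j}\le\kappa\overline P_{\mathrm{train}}$ and $\overline P_{\mathrm{test}}\le\kappa\sum_{i,j}\mathcal D_{X,i}\otimes\mathcal D_{\delta,j}$. Disjoint regions are unnecessary. The shared product marginals, however, are exactly what makes $\Omega_{21}\Omega_{11}^{+}\Omega_{12}$ well defined.
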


\paragraph{Relevance to the analogy--context composition.}
In our setting, bilinear transduction provides a principled mechanism for OOC generalization over two factors:
the \emph{anchor} captures task-exogenous context, while the \emph{displacement} captures the task-endogenous analogy, a well-defined displacement of the task-endogenous components.

Note that the above three assumptions are not overly restrictive in our regime.
First, the anchor is chosen from $\mathcal{D}_{\mathrm{train}}$ by construction, and the displacement is derived from state transitions observed in the offline data, so the individual marginals of $\hat{x}$ and $d$ are naturally in-support, making \cref{assumption:Bounded_combinatorial_density_ratio} plausible in practice.
Second, our value and policy are explicitly parameterized via a low-rank bilinear form between anchor-dependent and displacement-dependent embeddings,
which directly aligns with the bilinear transducibility requirement in \cref{assumption:Bilinearly_transducible}.
Finally, \cref{assumption:Non-degeneracy} corresponds to preventing representation collapse of the anchor or displacement embeddings; this is encouraged by the diversity of contexts in offline datasets
and by standard normalization or regularization used in neural approximation.

Equation~\eqref{eq:bilinear-form} then implements a structured composition rule that enables the value function and the policies in \cref{eq:bilinear-value}, \cref{eq:bilinear-policy-h} and \cref{eq:bilinear-policy-l} extrapolate to unseen analogy--context pairings, turning analogy transduction into a well-posed OOC inference problem under the conditions above.

\newpage
\section{Goal-Conditioned Endogenous Block Controlled Markov Process}\label{appendix:GCE-BCMP}
\begin{definition}[GCE-BCMP] \label{def:GCE-BCMP}
A \emph{goal-conditioned endogenous block controlled Markov process} (GCE-BCMP) is specified by a tuple
$(\bar{\mathcal{S}},\bar{\mathcal{Z}},\mathcal{A},\mathcal{P},f^e)$, where
$\bar{\mathcal{S}}:=\mathcal{S}\times\mathcal{S}$ is the product observation space,
$\bar{\mathcal{Z}}:=\mathcal{Z}\times\mathcal{Z}$ is a product latent state space,
$\mathcal{A}$ is an action space,
$\mathcal{P}:\bar{\mathcal{Z}}\times\mathcal{A}\to\Delta(\bar{\mathcal{Z}})$ is a latent transition dynamics on $\bar{\mathcal{Z}}$,
and $f^e:\bar{\mathcal{Z}}\to\Delta(\bar{\mathcal{S}})$ is an emission function from latent abstractions to distributions over $\bar{\mathcal{S}}$.

We write a state--goal observation pair as $u=(s,g)\in\bar{\mathcal{S}}$. The GCE-BCMP contains the following assumptions.

\medskip
\noindent
\textbf{(\emph{Block assumption})}
The emission distributions corresponding to any two distinct latent states have disjoint supports:
\[
\mathrm{supp}\big(f^e(\cdot\mid \bar z_i)\big)\cap \mathrm{supp}\big(f^e(\cdot\mid \bar z_j)\big)=\emptyset,
\quad \forall\, \bar z_i\neq \bar z_j\in\bar{\mathcal{Z}}.
\]
Let $\mathrm{supp}(f^e):=\bigcup_{\bar z\in\bar{\mathcal Z}}\mathrm{supp}\big(f^e(\cdot\mid \bar z)\big)\subseteq \bar{\mathcal S}$.
This assumption implies the existence of a deterministic decoding function
$f^\ell:\mathrm{supp}(f^e)\to\bar{\mathcal{Z}}$ such that
$f^\ell(u)=\bar z$ for all $u \in \mathrm{supp}\big(f^e(\cdot\mid \bar z)\big)$.
Since $\bar{\mathcal Z}=\mathcal Z\times\mathcal Z$, the decoder induces deterministic families
$\{f^\ell_g:\mathcal S\to\mathcal Z\}_{g\in\mathcal S}$ and
$\{f^\ell_s:\mathcal S\to\mathcal Z\}_{s\in\mathcal S}$ such that for all $(s,g)\in \mathrm{supp}(f^e)$,
\begin{equation}\label{eq:bar_z}
    \bar z=f^\ell(u)=f^\ell(s,g)=\big(f^\ell_g(s),\,f^\ell_s(g)\big):=(z_{s\mid g}, z_{g\mid s}).
\end{equation}
Here each $f^\ell_g$ and $f^\ell_s$ is only used on the relevant domain induced by $\mathrm{supp}(f^e)$:
\[
f^\ell_g:\{s\in\mathcal S:(s,g)\in\mathrm{supp}(f^e)\}\to\mathcal Z,
\qquad
f^\ell_s:\{g\in\mathcal S:(s,g)\in\mathrm{supp}(f^e)\}\to\mathcal Z.
\]

The GCE-BCMP further assumes that each latent state admits an endogenous--exogenous factorization~\citep{efroni2022provably}
as $\mathcal Z=\mathcal Z^{\mathrm{en}}\times\mathcal Z^{\mathrm{ex}}$, and accordingly
$\bar{\mathcal Z}=(\mathcal Z^{\mathrm{en}}\times\mathcal Z^{\mathrm{ex}})\times(\mathcal Z^{\mathrm{en}}\times\mathcal Z^{\mathrm{ex}})$.
Under this factorization, for each $(s,g)\in \mathrm{supp}(f^e)$ we can uniquely write
\[
z_{s\mid g}=f^\ell_g(s)=\big(\nu_g(s),\,\xi_g(s)\big),
\qquad
z_{g\mid s}=f^\ell_s(g)=\big(\nu_s(g),\,\xi_s(g)\big),
\]
where $\nu_g,\nu_s,\xi_g,\xi_s$ are deterministic maps defined on the relevant domains induced by $\mathrm{supp}(f^e)$.
For brevity, we define
\begin{align}\nonumber
z_{s\mid g}^{\mathrm{en}} \coloneqq \nu_g(s),\qquad
z_{s\mid g}^{\mathrm{ex}} \coloneqq \xi_g(s),\qquad
z_{g\mid s}^{\mathrm{en}} \coloneqq \nu_s(g),\qquad
z_{g\mid s}^{\mathrm{ex}} \coloneqq \xi_s(g),
\end{align}
so that
\begin{equation}\label{GCE_BCMP:z_notation}
z_{s\mid g}= \big(z_{s\mid g}^{\mathrm{en}},z_{s\mid g}^{\mathrm{ex}}\big),
\qquad
z_{g\mid s}= \big(z_{g\mid s}^{\mathrm{en}},z_{g\mid s}^{\mathrm{ex}}\big).
\end{equation}

For a given pair $(s,g)$ we define $z_{s\mid g}^{\mathrm{en}}$ as the \textbf{\emph{task-endogenous state}} and $z_{s\mid g}^{\mathrm{ex}}$ as the \textbf{\emph{task-exogenous context}} of $s$ (relative to $g$),
and analogously $z_{g\mid s}^{\mathrm{en}}$ and $z_{g\mid s}^{\mathrm{ex}}$ as those of $g$ (relative to $s$), where this terminology is motivated by the following assumption.

\medskip
\noindent
\textbf{(\emph{Task-endogenous abstraction})}
For any $u=(s,g)\in\mathrm{supp}(f^e)$ and any $a\in\mathcal A$, let $\mathbf{u}'=(\mathbf{s}',g)$ denote the next observation pair after applying $a$,
and define the decoded next latent pair by $\bar{\mathbf{z}}'\coloneqq f^\ell(\mathbf{u}')\in\bar{\mathcal Z}$. Assume that $\mathbf{u}'\in \mathrm{supp}(f^e)$ almost surely, so that $f^\ell(\mathbf{u}')$ is well-defined.

Recall that $\bar z=f^\ell(u)=(z_{s\mid g},z_{g\mid s})$ (\cref{eq:bar_z}), and using the decomposition in \cref{GCE_BCMP:z_notation}, define
\[
\bar z^{\mathrm{en}}_u\coloneqq (z_{s\mid g}^{\mathrm{en}},\,z_{g\mid s}^{\mathrm{en}})\in\mathcal Z^{\mathrm{en}}\times\mathcal Z^{\mathrm{en}},
\qquad
\bar z^{\mathrm{ex}}_u\coloneqq (z_{s\mid g}^{\mathrm{ex}},\,z_{g\mid s}^{\mathrm{ex}})\in\mathcal Z^{\mathrm{ex}}\times\mathcal Z^{\mathrm{ex}},
\]
and similarly define $\bar{\mathbf{z}}'^{\mathrm{en}}_u$ and $\bar{\mathbf{z}}'^{\mathrm{ex}}_u$ component-wise from $\bar{\mathbf{z}}'$.

GCE-BCMP assumes that there exists a Markov kernel
$\mathcal P^{\mathrm{en}}:(\mathcal Z^{\mathrm{en}}\times\mathcal Z^{\mathrm{en}})\times\mathcal A
\to \Delta(\mathcal Z^{\mathrm{en}}\times\mathcal Z^{\mathrm{en}})$
such that for all $u=(s,g)\in\mathrm{supp}(f^e)$ and all $a\in\mathcal A$,
\[
\bar{\mathbf{z}}' \sim \mathcal P(\cdot\mid (\bar z^{\mathrm{en}}_u,\bar z^{\mathrm{ex}}_u),a)
\quad\Longrightarrow\quad
\bar{\mathbf{z}}'^{\mathrm{en}}_u \sim \mathcal P^{\mathrm{en}}(\cdot\mid \bar z^{\mathrm{en}}_u,a).
\]

We refer to $\bar z^{\mathrm{en}}_{u}$ as the \textbf{\emph{task}} associated with $u=(s,g)$.
Equivalently, a \textbf{\emph{task}} is an equivalence class of pairs in $\mathrm{supp}(f^e)$ that share the same endogenous pair:
\[
u_i \sim u_j
\quad \Longleftrightarrow \quad
\bar z^{\mathrm{en}}_{u_i}=\bar z^{\mathrm{en}}_{u_j}.
\]
For any $\bar z^{\mathrm{en}}\in \mathcal Z^{\mathrm{en}}\times \mathcal Z^{\mathrm{en}}$, we define the corresponding \textbf{\emph{task block}} by
\[
\mathcal B_{\bar z^{\mathrm{en}}}
\;\coloneqq\;
\big\{u\in \mathrm{supp}(f^e)\,:\, \bar z^{\mathrm{en}}_{u}=\bar z^{\mathrm{en}}\big\}.
\]
\end{definition}

\medskip
\section{Algorithm Details} \label{appendix:algorithm_detail}
\subsection{Details of the Dual Analogies}\label{appendix:algorithm_detail:analogies}
\paragraph{Learning temporal distances and extracting dual analogies.}
We learn a temporal-distance surrogate via goal-conditioned IQL.
Under the goal-reaching reward $r(s,g)=\mathbf 1_{\{s=g\}}$, we define the temporal distance by
\begin{equation}\label{appendix:eq:temporal_distance_def}
    d^{*}(s,g) := \log_{\gamma} V^{*}(s,g),
\end{equation}
which reduces to the shortest path length from $s$ to $g$ in deterministic environments.
In practice, we use the modified sparse reward $\tilde r(s,g)=-\mathbf 1_{\{s\neq g\}}$, under which the optimal return is a monotone function of the temporal distance. In other words,
\begin{equation}\label{appendix:eq:modified_return}
    \tilde V^{*}(s,g)
    =
    \sum_{t=0}^{\infty}\gamma^{t}\tilde r_t
    =
    -\sum_{t=0}^{d^{*}(s,g)-1}\gamma^{t}
    =
    -\frac{1-\gamma^{d^{*}(s,g)}}{1-\gamma},
\end{equation}
which is strictly monotone in $d^{*}(s,g)$ for $\gamma\in(0,1)$, making $\tilde r$ a valid surrogate signal for temporal-distance learning \citep{park2026dual}.
We also define a goal-conditioned Q function induced by the modified reward $\tilde r$ as
\begin{equation}\label{appendix:eq:tilde_Q_def}
    \tilde Q^{\pi}(s,a,g)
    :=
    \mathbb E^{\pi}\!\left[
        \sum_{t=0}^{\infty}\gamma^{t}\,\tilde r(s_t,g)
        \ \middle|\ s_0=s,\ a_0=a
    \right],
\end{equation}
where $\mathbb E^{\pi}$ denotes the expectation over trajectories generated by the environment dynamics and subsequent actions sampled from $\pi(\cdot\mid s_t,g)$ for $t\ge 1$.

To obtain a practical approximation of temporal-distance relations, we parameterize the goal-conditioned value with an inner-product aggregation,
\begin{equation}\label{appendix:eq:innerprod_value}
    \tilde V(s,g)
    =
    f\!\left(\phi(s),\varphi(g)\right)
    =
    \phi(s)^{\top}\varphi(g),
\end{equation}
where $\phi,\varphi:\mathcal S\to\mathbb R^{d}$ are learnable state and goal encoders, respectively.
The encoders $(\phi,\varphi)$ are trained jointly with a parametric goal-conditioned critic $Q(s,a,g):\mathcal{S}\times\mathcal A\times \mathcal S\to \mathbb R$ using the IQL objectives \citep{kostrikov2022offline}:
\begin{equation}\label{appendix:eq:dual_iql_objective}
\begin{aligned}
    \mathcal L(\phi,\varphi)
    :=
    \mathbb E_{(s,a)\sim\mathcal D_{\mathrm{train}},\, g\sim\rho(g)}
    &\Big[
        \ell^{\iota}_{2}\big(
            \phi(s)^{\top}\varphi(g)-\bar Q(s,a,g)
        \big)
    \Big],\\
    \mathcal L(Q)
    :=
    \mathbb E_{(s,a,s')\sim\mathcal D_{\mathrm{train}},\, g\sim\rho(g)}
    \Big[
        \big(
            Q(s,&a,g)-\tilde r(s,g)-\gamma\,\bar\phi(s')^{\top}\bar\varphi(g)
        \big)^2
    \Big],
\end{aligned}
\end{equation}
where $\rho(g)$ is a hindsight goal relabeling~\citep{andrychowicz2017hindsight, park2025ogbench} distribution that samples $g$ from a mixture of the current state, future states along the same trajectory, and random states from $\mathcal D_{\mathrm{train}}$,
$\ell^{\iota}_{2}(u)=|\iota-\mathbf 1\{u<0\}|\,u^2$ is the expectile loss \citep{newey1987asymmetric} with $\iota\in(0,1)$, and $\bar{\cdot}$ denotes target networks updated by exponential moving average (EMA).

All parameters $( Q,\phi,\varphi)$ are optimized using a single Adam optimizer \citep{kingma2015adam} on the summed objective
\begin{equation}
    \min_{\phi,\varphi,Q}\ \mathcal L_{\mathrm{analogy}}(\phi,\varphi, Q)
    :=
    \mathcal L(\phi,\varphi)+\mathcal L(Q).
\end{equation}
After training, the \emph{dual analogy} is extracted as the displacement in the learned goal embedding space,
\begin{equation}\label{appendix:eq:dual_analogy_def}
    \alpha^\vee(s,g) := \varphi(g)-\varphi(s)\in\mathbb R^{d},
\end{equation}
so that for any probe state $x$,
$
 \tilde V(x,g)- \tilde V(x,s)=\phi(x)^{\top}\alpha^\vee(s,g).
$

\medskip

\subsection{Details of the CTA}\label{appendix:algorithm_detail:CTA}
\paragraph{Analogy compression for practical deployment.}
The dual analogy $\alpha^\vee(s,g)=\varphi(g)-\varphi(s)\in\mathbb R^{d}$ is most informative when the embedding dimension $d$ is sufficiently large, as it increases the expressivity of the inner-product temporal-distance model $f(\phi(s),\varphi(g))=\phi(s)^\top\varphi(g)$ and enriches the representational capacity of $\alpha^\vee$ itself \citep{park2026dual}.
However, CTA requires the high-level policy to directly output a $k$-step analogy as its action, and producing a $d$-dimensional output introduces a severe bottleneck.
To enable stable control while preserving the task-endogenous displacement signal, we introduce a projection network $\eta:\mathbb R^{d}\to\mathbb R^{e}$ and perform control in the compressed analogy space.

Accordingly, the bilinear value function in \eqref{eq:bilinear-value} is modified as
\begin{equation}\label{appendix:eq:bilinear_value}
    V(s,g)
    :=
    \Omega_1(s)\boldsymbol{\cdot}\Omega_2\!\left(\eta(\alpha^\vee(s,g))\right),
\end{equation}
where $\Omega_1:\mathcal S\to\mathbb R^{b}$ is an anchor module and $\Omega_2:\mathbb R^{e}\to\mathbb R^{b}$ is a displacement module.
The value parameters $\Omega$ together with the projection $\eta$ are trained by minimizing the same action-free IQL objective:
\begin{equation}\label{appendix:eq:cta_value_objective}
    \mathcal{L}(\Omega_1, \Omega_2,\eta)
    =
    \mathbb E_{(s,s',g)}
    \Big[
        \ell^\kappa_2\big(
            \tilde r^\ell(s,g) + \gamma \bar V(s',g) - V(s,g)
        \big)
    \Big],
\end{equation}
where $\kappa\in(0,1)$ and $\bar{\cdot}$ denotes the target network.

The high-level policy treats the $k$-step analogy as its action, but predicts it in the compressed space:
\begin{equation} \label{appendix:eq:bilinear_high_policy}
    \pi_h(\,\cdot\mid s,g)
    =
    \mathcal N\!\big(\mu_h(s,g),\Sigma_h\big),
    \qquad
    \mu_h(s,g)
    =
    \omega_{h1}(s)\boldsymbol{\cdot}\omega_{h2}\!\left(\eta(\alpha^\vee(s,g))\right),
\end{equation}
where $\omega_{h1}:\mathcal S\to\mathbb R^{b\times e}$ and $\omega_{h2}:\mathbb R^{e}\to\mathbb R^{b\times e}$ are learnable anchor and displacement encoders, respectively.
Given a transition segment $(s_t,s_{t+k},g)$ from the dataset, the supervision target is the compressed $k$-step analogy $\eta(\alpha^\vee(s_t,s_{t+k}))$.
The high-level actor is trained by maximizing the advantage-weighted regression objective:
\begin{equation}\label{appendix:eq:cta_high_objective}
    \mathcal{L}(\omega_{h1}, \omega_{h2})
    =
    \mathbb E_{(s_t,s_{t+k},g)\sim\mathcal D_{\mathrm{train}}}
    \Big[
        \exp\!\big(\beta_h A(s_t,s_{t+k},g)\big)\,
        \log \pi_h\big(\sg[\eta](\alpha^\vee(s_t,s_{t+k}))\mid s_t,g\big)
    \Big],
\end{equation}
where $\sg[\cdot]$ denotes the stop-gradient and $A(s,s',g):=V(s',g)-V(s,g)$ is computed from the compressed-analogy value $V$.
Also, the low-level policy conditions on the proposed compressed analogy and outputs primitive actions:
\begin{equation} \label{appendix:eq:bilinear_low_policy}
    \pi_\ell(\,\cdot\mid s,\eta(\alpha^\vee))
    =
    \mathcal N\!\big(\mu_\ell(s,\eta(\alpha^\vee)),\Sigma_\ell\big),
    \qquad
    \mu_\ell(s,\eta(\alpha^\vee(s,g)))
    =
    \omega_{\ell1}(s)\boldsymbol{\cdot}\omega_{\ell2}\!\left(\eta(\alpha^\vee(s,g))\right),
\end{equation}
where $\omega_{\ell1}:\mathcal S\to\mathbb R^{b\times \dim(\mathcal A)}$ and $\omega_{\ell2}:\mathbb R^{e}\to\mathbb R^{b\times \dim(\mathcal A)}$ are learnable anchor and displacement encoders.
For a dataset tuple $(s_t,a_t,s_{t+1},s_{t+k})$, the conditioning signal is $\eta(\alpha^\vee(s_t,s_{t+k}))$, and the low-level actor is trained by maximizing
\begin{equation} \label{appendix:eq:cta_low_objective}
    \mathcal{L}(\omega_{\ell1}, \omega_{\ell2})
    =
    \mathbb E_{(s_t,a_t,s_{t+1},s_{t+k})\sim\mathcal D_{\mathrm{train}}}
    \Big[
        \exp\!\big(\beta_\ell A(s_t,s_{t+1},s_{t+k})\big)\,
        \log \pi_\ell\big(a_t\mid s_t,\sg[\eta](\alpha^\vee(s_t,s_{t+k}))\big)
    \Big],
\end{equation}
where $A(s_t,s_{t+1},s_{t+k}) := V(s_{t+1},s_{t+k}) - V(s_t,s_{t+k})$ uses the same compressed-analogy value function.

All parameters are optimized using a single Adam optimizer~\citep{kingma2015adam} on the summed objective,
\begin{equation}
    \min_{\Omega_1,\Omega_2, \omega_{h1},\omega_{h2},\omega_{\ell1},\omega_{\ell2},\eta}\ 
    \mathcal L_{\mathrm{CTA}}
    :=
    \mathcal{L}(\Omega_1, \Omega_2,\eta)
    -\mathcal{L}(\omega_{h1}, \omega_{h2})
    -\mathcal{L}(\omega_{\ell1}, \omega_{\ell2}),
\end{equation}
where the negative signs reflect that the actor objectives are maximized and $\eta$ is updated only through the value objective $\mathcal{L}(\Omega_1, \Omega_2,\eta)$.

\paragraph{Bilinear architecture of the value and policy functions.}
Applying bilinear transduction requires departing from a monolithic MLP and adopting a structured bilinear parameterization for both the value and policy functions.
Motivated by prior work~\citep{song2024compositional}, we implement each of $V$, $\pi_{\omega_h}$, and $\pi_{\omega_\ell}$ using three components: an \emph{anchor module}, a \emph{displacement module}, and a lightweight 2-layer MLP backbone.
The anchor and displacement modules map the anchor state and the analogy into $b\times p$ feature matrices, whose column-wise inner products yield a $p$-dimensional bilinear transduction feature.
This feature is then processed by the backbone MLP to produce the final scalar value or the policy mean vector.

Concretely, for the value function, the bilinear transduction in \eqref{appendix:eq:bilinear_value} is realized as
\begin{equation}
\begin{split}
\label{appendix:eq:impl_bilinear_value}
V(s,g)
=
\mathrm{MLP}_v &\bigg(
\Omega_1(s)\boldsymbol{\cdot}\Omega_2\! \big( \eta(\alpha^\vee(s,g)) \big)
\bigg)
\in\mathbb{R}, \\
\Omega_1(s) \ \boldsymbol{\cdot} \ &\Omega_2\! \big( \eta(\alpha^\vee(s,g)) \big) \in\mathbb R^p,
\end{split}
\end{equation}
where $\Omega_1(s)\in\mathbb{R}^{b\times p}$ and $\Omega_2(\eta(\alpha^\vee(s,g)))\in\mathbb{R}^{b\times p}$ denote the outputs of the anchor and displacement modules, respectively.
For $i=1,\ldots,p$, let $\Omega_1(s)_i\in\mathbb{R}^{b}$ and $\Omega_2(\eta(\alpha^\vee(s,g)))_i\in\mathbb{R}^{b}$ be their $i$-th column vectors.
The bilinear transduction feature is computed by column-wise inner products as
\begin{equation}
\Big[
\Omega_1(s)\boldsymbol{\cdot}\Omega_2\!\big(\eta(\alpha^\vee(s,g))\big)
\Big]_i
=
\Omega_1(s)_i^\top
\Omega_2(\eta(\alpha^\vee(s,g)))_i
\in\mathbb{R}.
\end{equation}

Similarly, the high-level policy in \eqref{appendix:eq:bilinear_high_policy} is implemented as a Gaussian actor $\pi_h(\,\cdot\mid s,g)=\mathcal N\!\big(\mu_h(s,g),\Sigma_h\big)$ such that
\begin{equation}
\begin{split}
\label{appendix:eq:impl_bilinear_high_policy}
\mu_h(s,g) = \mathrm{MLP}_h &\bigg(
\omega_{h1}(s)\boldsymbol{\cdot}\omega_{h2}\!\big(\eta(\alpha^\vee(s,g))\big)
\bigg) \in\mathbb{R}^{e},\\
\omega_{h1}(s) \ \boldsymbol{\cdot} \ &\omega_{h2}\!\big(\eta(\alpha^\vee(s,g))\big)
\in\mathbb R^p,
\end{split}
\end{equation}
where $\omega_{h1}(s)\in\mathbb{R}^{b\times p}$ and $\omega_{h2}(\eta(\alpha^\vee(s,g)))\in\mathbb{R}^{b\times p}$ denote the outputs of the anchor and displacement modules, respectively.
For $i=1,\ldots,p$, let $\omega_{h1}(s)_i\in\mathbb{R}^{b}$ and $\omega_{h2}(\eta(\alpha^\vee(s,g)))_i\in\mathbb{R}^{b}$ be their $i$-th column vectors.
The underlying bilinear transduction feature is computed by
\begin{equation}
\Big[
\omega_{h1}(s)\boldsymbol{\cdot}\omega_{h2}\!\big(\eta(\alpha^\vee(s,g))\big)
\Big]_i
=
\omega_{h1}(s)_i^\top
\omega_{h2}(\eta(\alpha^\vee(s,g)))_i
\in\mathbb{R},
\end{equation}
and the backbone $\mathrm{MLP}_h(\cdot)$ maps this $p$-dimensional feature to the mean vector in $\mathbb{R}^{e}$.

Similarly, the low-level policy in \eqref{appendix:eq:bilinear_low_policy} is implemented as $\pi_\ell(\,\cdot\mid s,\eta(\alpha^\vee))
=
\mathcal N\!\big(\mu_\ell(s,\eta(\alpha^\vee)),\Sigma_\ell\big)$ such that
\begin{equation}
\begin{split}
\label{appendix:eq:impl_bilinear_low_policy}
\mu_\ell(s,\eta(\alpha^\vee(s,g)))
=
\mathrm{MLP}_\ell &\bigg(
\omega_{\ell1}(s)\boldsymbol{\cdot}\omega_{\ell2}\!\big(\eta(\alpha^\vee(s,g))\big)
\bigg)
\in\mathbb{R}^{\dim(\mathcal A)},\\
\omega_{\ell1}(s) \ \boldsymbol{\cdot} \ &\omega_{\ell2}\!\big(\eta(\alpha^\vee(s,g))\big)
\in\mathbb R^p,
\end{split}
\end{equation}
where $\omega_{\ell1}(s)\in\mathbb{R}^{b\times p}$ and $\omega_{\ell2}(\eta(\alpha^\vee(s,g)))\in\mathbb{R}^{b\times p}$ are the anchor and displacement module outputs.
For $i=1,\ldots,p$, letting $\omega_{\ell1}(s)_i,\omega_{\ell2}(\eta(\alpha^\vee(s,g)))_i\in\mathbb{R}^{b}$ denote the $i$-th columns, the bilinear feature is given by
\begin{equation}
\Big[
\omega_{\ell1}(s)\boldsymbol{\cdot}\omega_{\ell2}\!\big(\eta(\alpha^\vee(s,g))\big)
\Big]_i
=
\omega_{\ell1}(s)_i^\top
\omega_{\ell2}(\eta(\alpha^\vee(s,g)))_i
\in\mathbb{R},
\end{equation}
which is then processed by the backbone $\mathrm{MLP}_\ell(\cdot)$ to output the policy mean in $\mathbb{R}^{\dim(\mathcal A)}$. 

\medskip

\subsection{Full algorithm}
The training procedures for dual analogy and CTA are provided in \cref{alg:dual-analogy} and \cref{alg:cta}, respectively.
\newpage
\begin{algorithm}[t]
\caption{Extracting dual analogies}
\label{alg:dual-analogy}
\begin{algorithmic}[1]
\REQUIRE Offline dataset $\mathcal D_{\mathrm{train}}$, hindsight goal relabeling distribution $\rho(g)$,
discount $\gamma$, expectile $\iota$, EMA rate $\tau$
\STATE Initialize parameters $(\phi,\varphi,Q)$ and target networks
$(\bar\phi,\bar\varphi,\bar Q)\leftarrow(\phi,\varphi,Q)$
\FOR{each gradient step}
    \STATE Sample a minibatch $\{(s,a,s')\}\sim\mathcal D_{\mathrm{train}}$ and sample goals $g\sim\rho(g)$
    \STATE Compute losses $\mathcal L(\phi,\varphi)$ and $\mathcal L(Q)$ in \eqref{appendix:eq:dual_iql_objective}
    \STATE Update $(\phi,\varphi,Q)$ minimizing $\mathcal L(\phi,\varphi)+\mathcal L(Q)$
    \STATE Update target networks by EMA:
    \STATE $\quad\bar Q \leftarrow \tau Q + (1-\tau)\bar Q$
    \STATE $\quad\bar\phi \leftarrow \tau \phi + (1-\tau)\bar\phi$
    \STATE $\quad\bar\varphi \leftarrow \tau \varphi + (1-\tau)\bar\varphi$
\ENDFOR
\STATE \textbf{return} $\alpha^\vee(s,g)=\varphi(g)-\varphi(s)$
\end{algorithmic}
\end{algorithm}
\begin{algorithm}[H]
\caption{Training CTA}
\label{alg:cta}
\begin{algorithmic}[1]
\REQUIRE Offline dataset $\mathcal D_{\mathrm{train}}$, dual analogy $\alpha^\vee(s,g)$, subgoal steps $k$, discount $\gamma$, expectile $\kappa$, temperatures $(\beta_h,\beta_\ell)$, EMA rate $\tau$
\ENSURE Value $V$ in \eqref{appendix:eq:bilinear_value}, high-level policy $\pi_h$ in \eqref{appendix:eq:bilinear_high_policy}, and low-level policy $\pi_\ell$ in \eqref{appendix:eq:bilinear_low_policy}
\STATE Initialize parameters $(\Omega_1,\Omega_2,\omega_{h1},\omega_{h2},\omega_{\ell1},\omega_{\ell2}, \eta)$ and target value network $\bar V \leftarrow V$
\FOR{each gradient step}
    \STATE {\color{juncolorblue}{{\# \ Value update}}}
    \STATE Sample $(s,s',g)\sim\mathcal D_{\mathrm{train}}$
    \STATE Compute $V(s,g)=\Omega_1(s)\boldsymbol{\cdot}\Omega_2\!\left(\eta(\alpha^\vee(s,g))\right)$
    \STATE Update $(\Omega_1,\Omega_2,\eta)$ minimizing $\mathcal L(\Omega_1,\Omega_2,\eta)$ in \eqref{appendix:eq:cta_value_objective}
    \STATE Update target value network by EMA:
    \STATE $\quad \bar V \leftarrow \tau V + (1-\tau)\bar V$
    \STATE {\color{juncolorblue}{{\# \ High-level actor update}}}
    \STATE Sample $(s_t,s_{t+k},g)\sim\mathcal D_{\mathrm{train}}$
    \STATE Compute $A_h \leftarrow V(s_{t+k},g)-V(s_t,g)$
    \STATE Maximize the actor objective $\mathcal L(\omega_{h1},\omega_{h2})$ in \eqref{appendix:eq:cta_high_objective}
    \STATE {\color{juncolorblue}{{\# \ Low-level actor update}}}
    \STATE Sample $(s_t,a_t,s_{t+1},s_{t+k})\sim\mathcal D_{\mathrm{train}}$
    \STATE Compute $A_\ell \leftarrow V(s_{t+1},s_{t+k})-V(s_t,s_{t+k})$
    \STATE Maximize the actor objective $\mathcal L(\omega_{\ell1},\omega_{\ell2})$ in \eqref{appendix:eq:cta_low_objective}
\ENDFOR
\STATE \textbf{return} $(\Omega_1,\Omega_2,\omega_{h1},\omega_{h2},\omega_{\ell1},\omega_{\ell2},\eta)$
\end{algorithmic}
\end{algorithm}

\newpage
\section{Experimental Details} \label{appendix:experimental_detail}

\subsection{OGBench Benchmark}
\paragraph{Environments.} \label{appendix:experimental_detail:environments}
Our main experiments in \cref{sec:main_experiments} are conducted on the OGBench~\citep{park2025ogbench} benchmark manipulation suite, which consists of the following three environments: \texttt{cube}, \texttt{scene}, and \texttt{puzzle}.
These tasks are built on MuJoCo with a 6-DoF UR5e robot arm, and are explicitly designed to probe object manipulation, sequential (long-horizon) reasoning, and combinatorial generalization---making them a natural testbed for compositional generalization. 
\texttt{cube} requires arranging cube blocks into target configurations via multi-object pick-and-place behaviors.
\texttt{scene} involves interacting with multiple everyday objects (e.g., drawer/window and button locks), where evaluation goals often require composing a sequence of atomic behaviors such as unlocking, opening, placing, and closing.
\texttt{puzzle} instantiates a ``Lights Out'' task, whose enormous button-state space demands strong combinatorial generalization in addition to precise low-level control.
Following the OGBench protocol, performance is evaluated by the average success rate over five pre-defined evaluation tasks, using 50 rollouts per task under slight randomization of the initial and goal states, and the final score is reported as the mean over the last three evaluation checkpoints during training.

We additionally evaluate CTA on OGBench maze navigation environments, focusing on \texttt{AntMaze} and \texttt{HumanoidMaze} in the \texttt{medium, large} and \texttt{giant} variants.
These tasks require controlling an agent to reach a goal location in a maze, coupling long-horizon navigation with low-level locomotion learned purely from offline trajectories.
The \texttt{large} mazes are more challenging than the \texttt{medium} mazes and are designed to stress long-horizon reasoning under limited offline coverage.

\paragraph{Datasets.}
For all environments, we use the standard OGBench offline datasets following the benchmark protocol.
For the manipulation suites (\texttt{cube}, \texttt{scene}, and \texttt{puzzle}), we use the standard \texttt{play} and \texttt{noisy} datasets provided by OGBench.
In OGBench, \texttt{play} trajectories are collected by open-loop, non-Markovian expert policies with temporally correlated noise, whereas \texttt{noisy} datasets are collected by closed-loop, Markovian expert policies with larger, uncorrelated Gaussian noise; consequently, \texttt{play} often appears more natural, while \texttt{noisy} typically attains higher state coverage.

OGBench supports both state-based and pixel-based observations.
In the default state-based dataset, the agent observes the full low-dimensional state.
In the \texttt{pixel} dataset, the agent receives only $64\times64\times3$ RGB images rendered from a third-person camera and no additional low-dimensional proprioceptive features (e.g., joint angles) are provided.

\begin{figure*}[h]
    \centering
    \makebox[\textwidth][c]{%
    \begin{minipage}{0.9\textwidth}
        \centering
        % Row 1
        \includegraphics[width=0.118\linewidth]{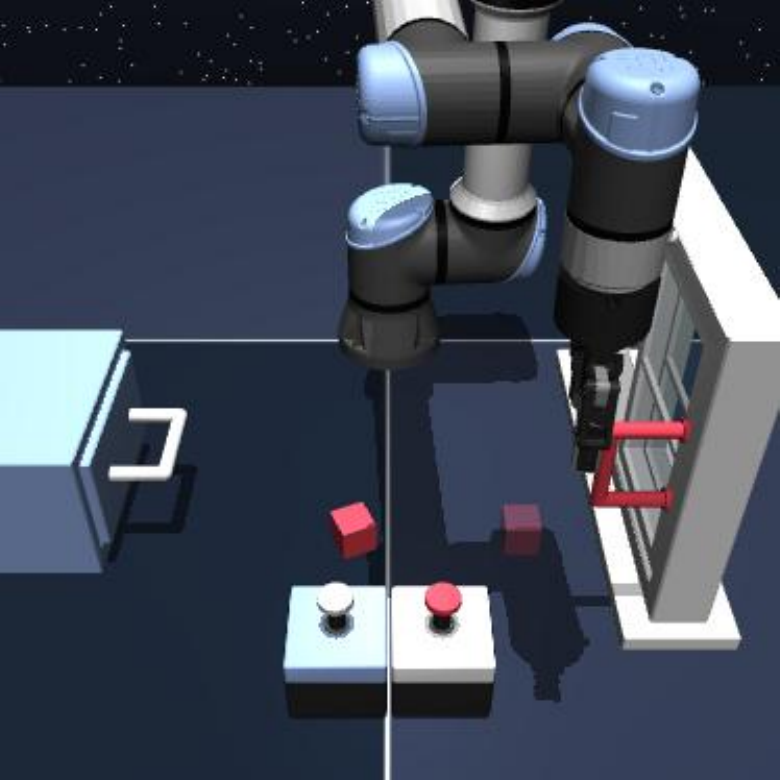}\hfill
        \includegraphics[width=0.118\linewidth]{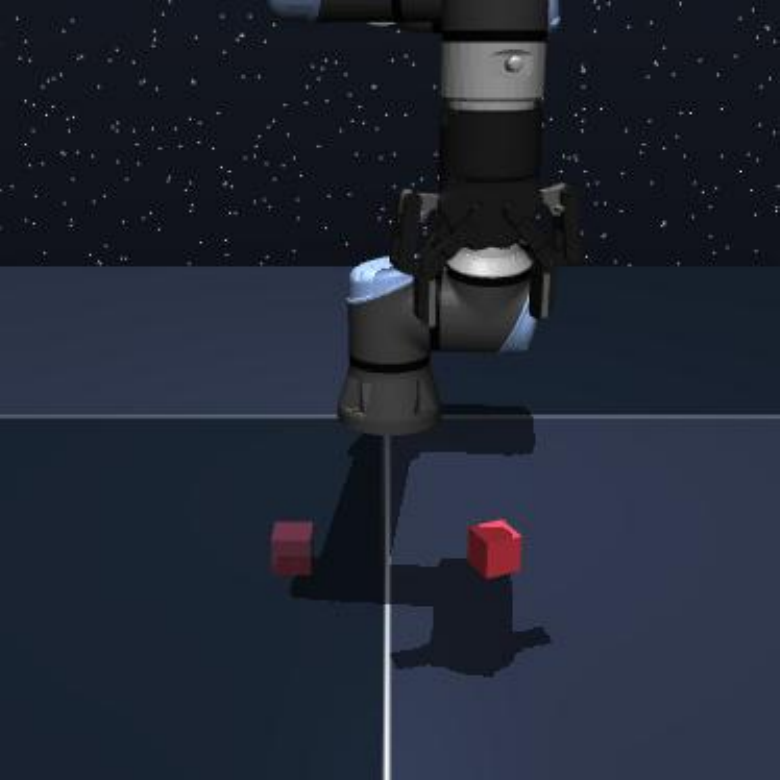}\hfill
        \includegraphics[width=0.118\linewidth]{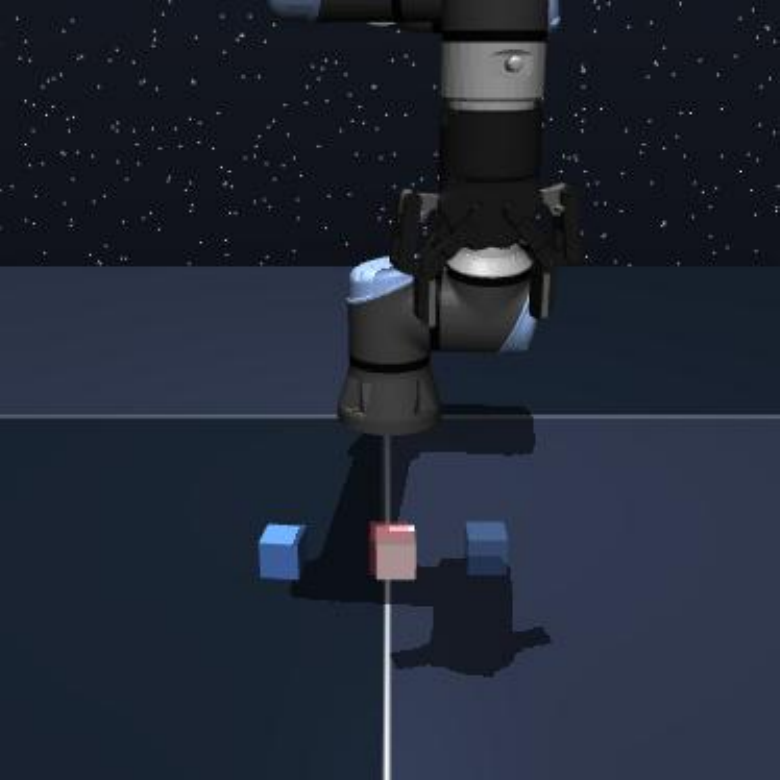}\hfill
        \includegraphics[width=0.118\linewidth]{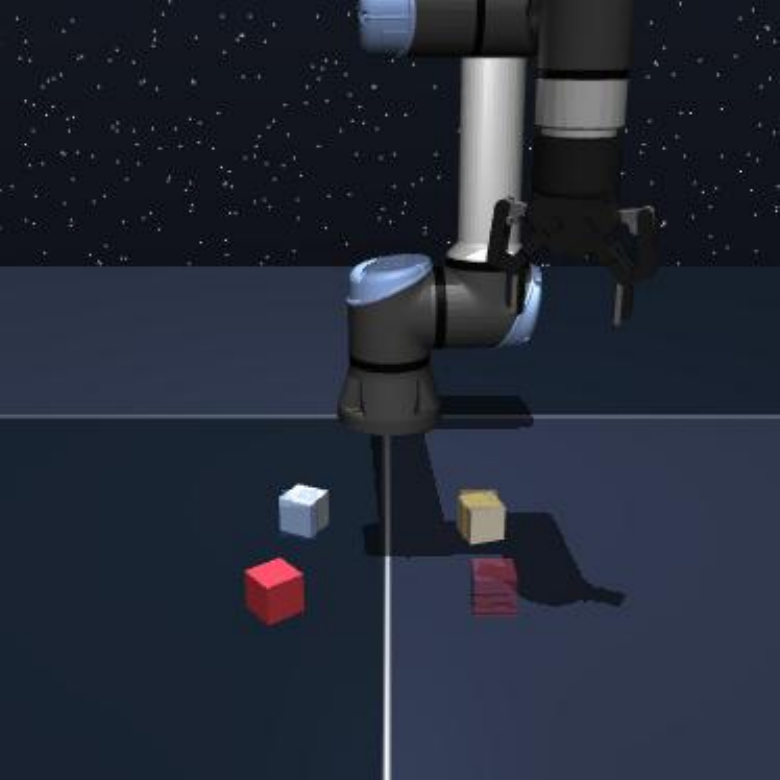}\hfill
        \includegraphics[width=0.118\linewidth]{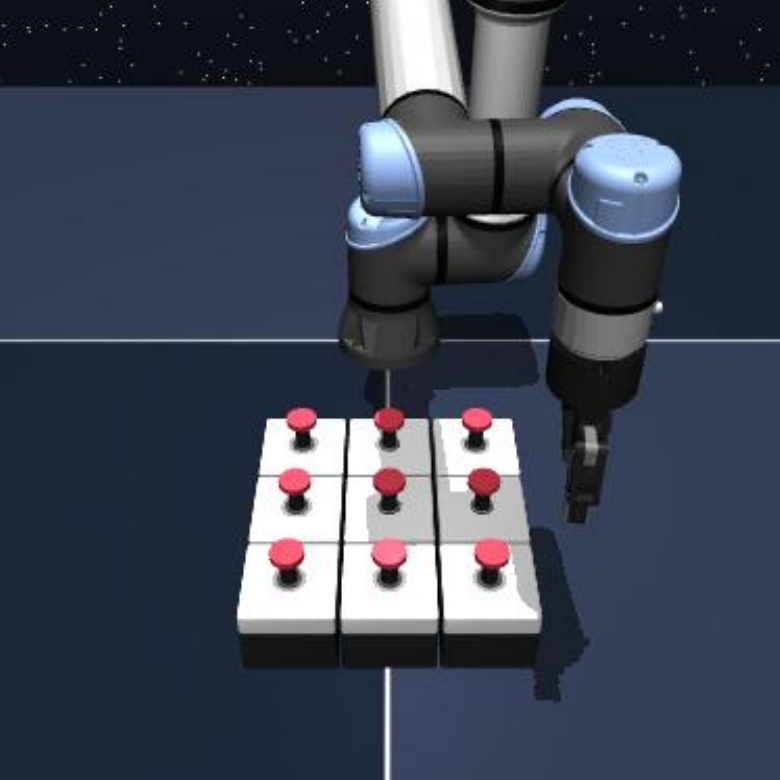}\hfill
        \includegraphics[width=0.118\linewidth]{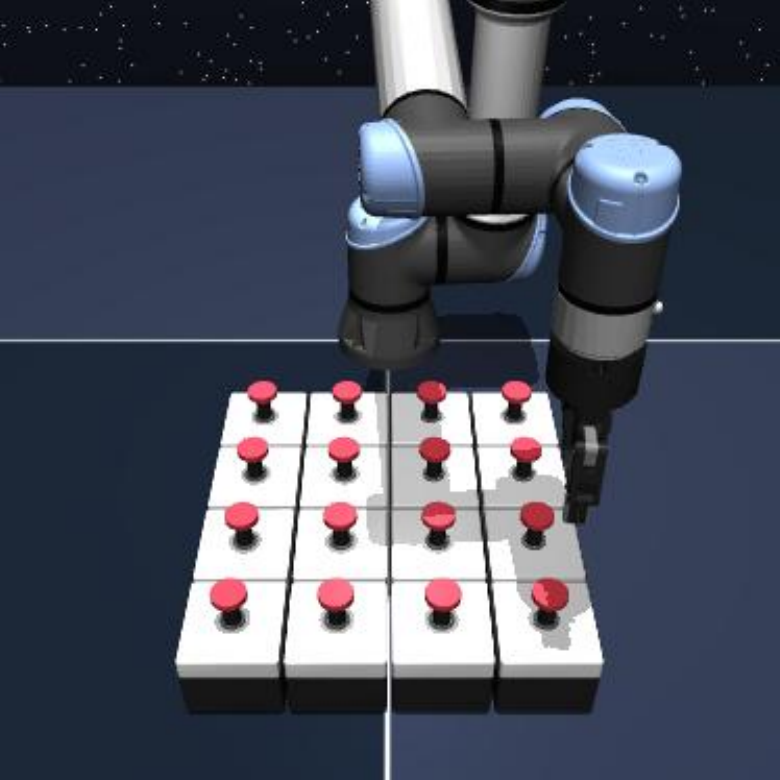}\hfill
        \includegraphics[width=0.118\linewidth]{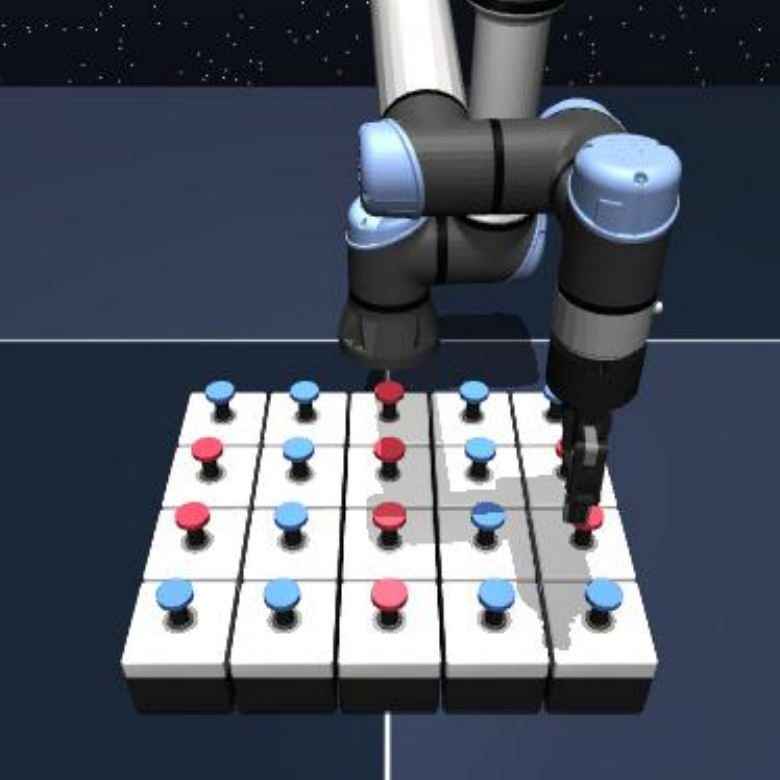}\hfill
        \includegraphics[width=0.118\linewidth]{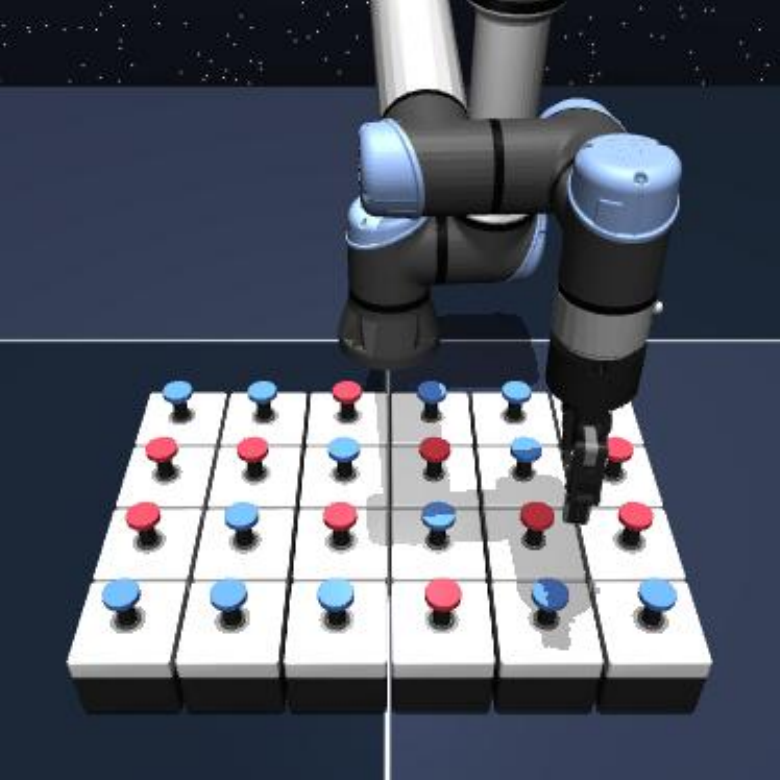}

        \par\vspace{1mm}

        % Row 2
        \includegraphics[width=0.118\linewidth]{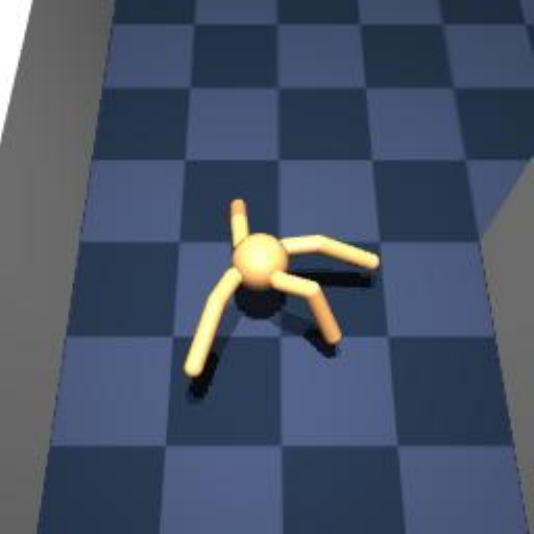}\hfill
        \includegraphics[width=0.118\linewidth]{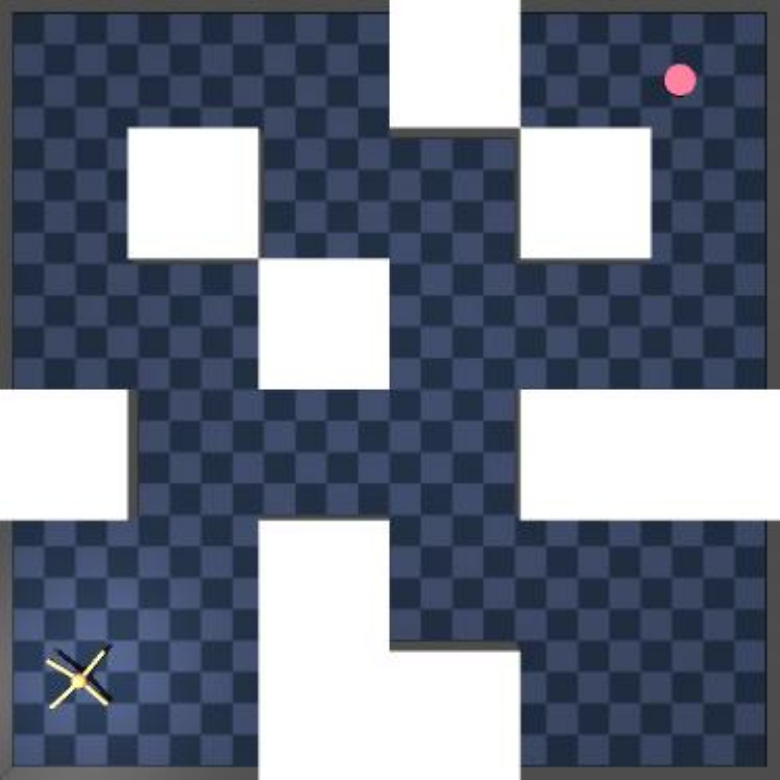}\hfill
        \includegraphics[width=0.118\linewidth]{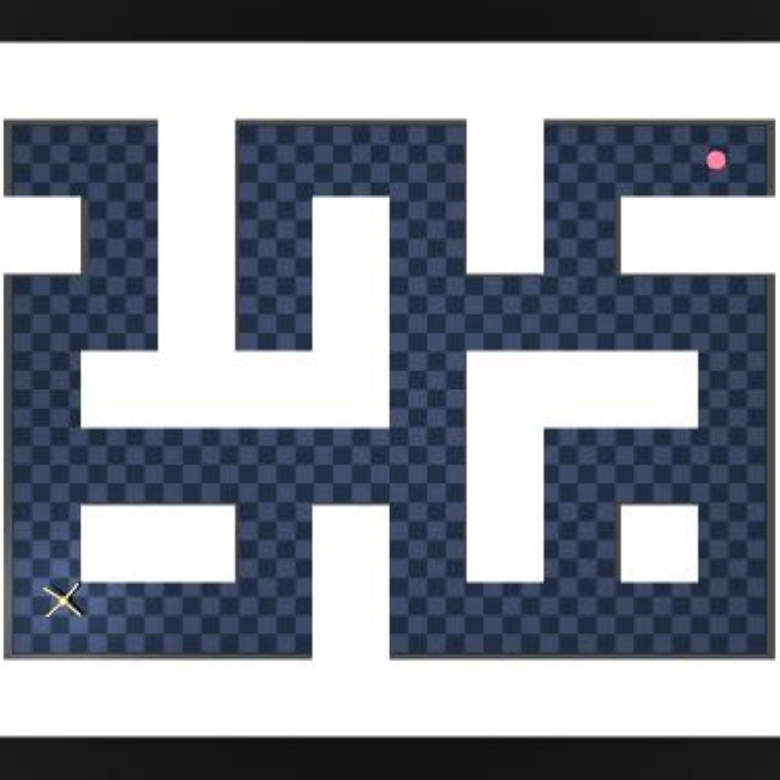}\hfill
        \includegraphics[width=0.118\linewidth]{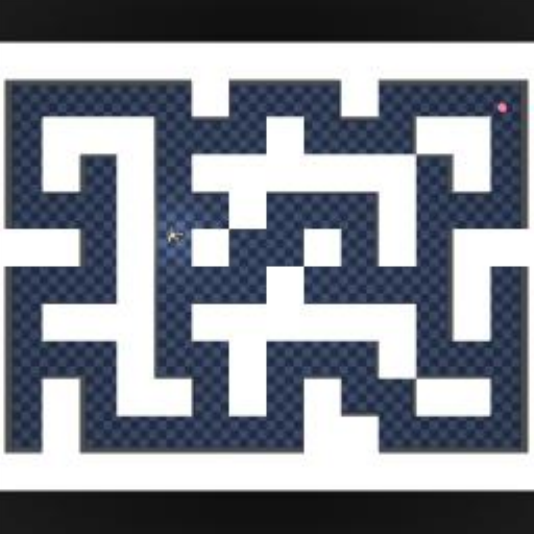}\hfill
        \includegraphics[width=0.118\linewidth]{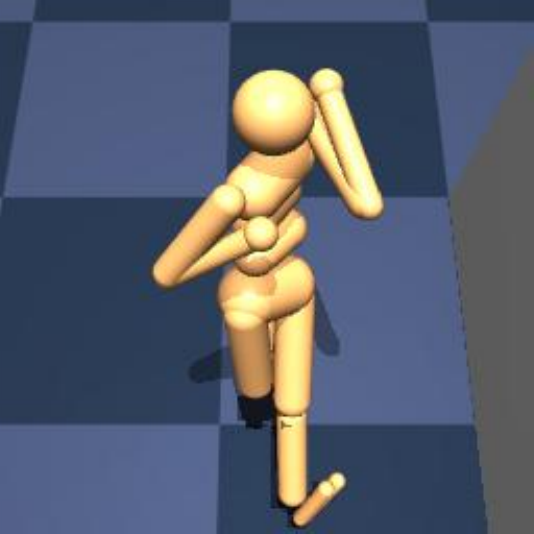}\hfill
        \includegraphics[width=0.118\linewidth]{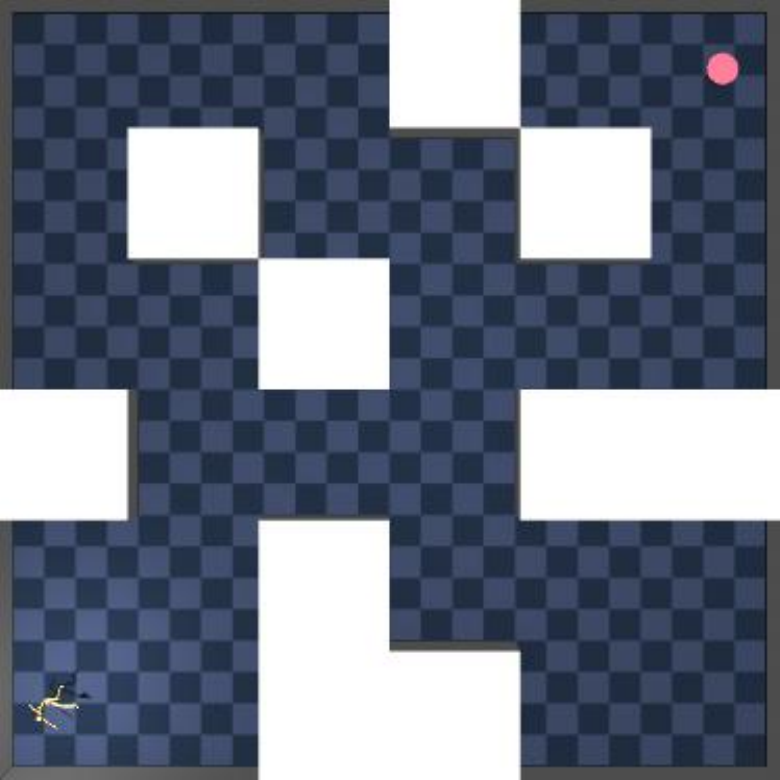}\hfill
        \includegraphics[width=0.118\linewidth]{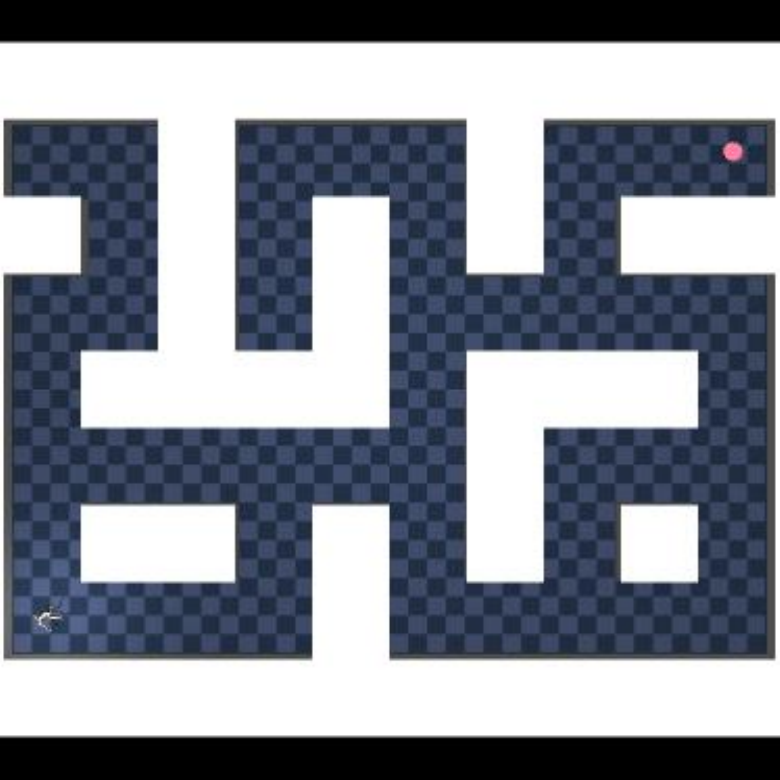}\hfill
        \includegraphics[width=0.118\linewidth]{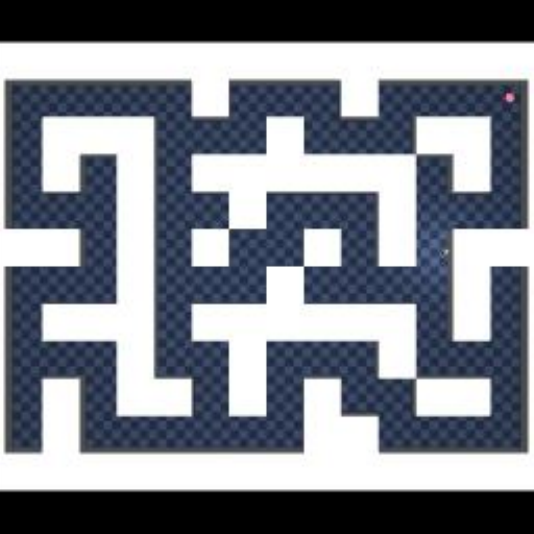}
    \end{minipage}%
    }
    \caption{\textbf{Environments.} (Top row) From left to right: \texttt{scene}, \texttt{cube-single}, \texttt{cube-double}, \texttt{cube-triple}, \texttt{puzzle-3x3}, \texttt{puzzle-4x4}, \texttt{puzzle-4x5}, and \texttt{puzzle-4x6}. (Bottom row) From left to right: ant, \texttt{antmaze-medium}, \texttt{antmaze-large}, \texttt{antmaze-giant}, humanoid, \texttt{humanoidmaze-medium}, \texttt{humanoidmaze-large}, and \texttt{humanoidmaze-giant}.}
    \label{fig:environments}
\end{figure*}

\subsection{Baselines} \label{appendix:experimental_detail:baselines}
We compare CTA against prior methods that have reported strong performance on the OGBench manipulation environments. Specifically, \cref{tab:benchmark} includes \textbf{GCBC}~\citep{ghosh2021learning, park2025ogbench}, \textbf{QRL}~\citep{wang2023optimal}, \textbf{CRL}~\citep{eysenbach2022contrastive}, \textbf{GCIVL}~\citep{kostrikov2022offline, park2025ogbench}, \textbf{GCIQL}~\citep{kostrikov2022offline}, and \textbf{HIQL}~\citep{park2023hiql}; we refer readers to the original papers for detailed descriptions. 
We also include baselines equipped with the dual goal representation~\citep{park2026dual} and refer readers to \citet{park2026dual} for details. 
For \cref{tab:benchmark} and \cref{tab:maze_result}, whenever results for a given environment are reported in OGBench~\citep{park2025ogbench} or the dual goal representation paper~\citep{park2026dual}, we use those reported numbers; all remaining results are obtained from our own experiments.
In particular, we implement \textbf{GCIQL}$^{\vee}$ in the same manner by replacing the TD update with the IQL update while keeping the representation module identical to \textbf{GCIVL}$^{\vee}$ in the original implementation of \citet{park2026dual}. 
Finally, we report two representation-augmented hierarchical baselines that we implement, namely \textbf{HIQL}$^{\vee}$ and \textbf{HIQL}$^{\vee}_{+\alpha^\vee}$.

\paragraph{HIQL$^\vee$.}
HIQL$^\vee$ augments HIQL with the dual goal representation $\varphi(\cdot)$ and conditions the goal-conditioned value and hierarchical policies on $\varphi(g)$.
To match CTA’s practical deployment setting, we apply the same projection network $\eta:\mathbb R^{d}\to\mathbb R^{e}$ and condition all modules on the compressed goal embedding $\eta(\varphi(g))$.
The goal-conditioned value function is parameterized as $V(s,g)\coloneqq V(s,\eta(\varphi(g)))$ and trained with the same action-free IQL objective:
\[
\mathcal L(V,\eta)
=
\mathbb E_{(s,s',g)\sim\mathcal D_{\mathrm{train}}}
\Big[
\ell_2^\kappa\big(
\tilde r^\ell(s,g) + \gamma \bar V(s',g) - V(s,g)
\big)
\Big],
\]
where $\kappa\in(0,1)$ and $\bar{\cdot}$ denotes the target network.
The high- and low-level actors are defined as Gaussian policies with fixed covariances $\Sigma_h$ and $\Sigma_\ell$:
\begin{align}
\pi_h(\cdot\mid s_t,g)=\mathcal N(\mu_h(s_t,g),\Sigma_h),
&\qquad
\pi_\ell(\cdot\mid s_t,s_{t+k})=\mathcal N(\mu_\ell(s_t,s_{t+k}),\Sigma_\ell),\nonumber\\
\mu_h(s_t,g)=\mu_h\big(s_t,\eta(\varphi(g))\big),
&\qquad
\mu_\ell(s_t,s_{t+k})=\mu_\ell\big(s_t,\eta(\varphi(s_{t+k}))\big).\nonumber
\end{align}
Both actors are trained by maximizing the following advantage-weighted regression objectives:
\begin{align}
\mathcal L(\pi_h)
&=
\mathbb E_{(s_t,s_{t+k},g)\sim\mathcal D_{\mathrm{train}}}
\Big[
\exp\!\big(\beta_h A(s_t,s_{t+k},g)\big)\,
\log \pi_h\big(\sg[\eta](\varphi(s_{t+k})) \mid s_t, \sg[\eta](\varphi(g))\big)
\Big],\nonumber\\
\mathcal L(\pi_\ell)
&=
\mathbb E_{(s_t,a_t,s_{t+1},s_{t+k})\sim\mathcal D_{\mathrm{train}}}
\Big[
\exp\!\big(\beta_\ell A(s_t,s_{t+1},s_{t+k})\big)\,
\log \pi_\ell\big(a_t \mid s_t, \sg[\eta](\varphi(s_{t+k}))\big)
\Big],\nonumber
\end{align}
where $A(s,s',g)\coloneqq V(s',g)-V(s,g)$ is the advantage computed from the value function.
Finally, all parameters are optimized using a single optimizer on the summed objective
\[
\min_{V,\pi_h,\pi_\ell,\eta}\ 
\mathcal L_{\mathrm{HIQL}^\vee}
:=
\mathcal L(V,\eta)
-\mathcal L(\pi_h)
-\mathcal L(\pi_\ell),
\]
and $\eta$ is updated only through $\mathcal L(V,\eta)$.

\medskip

\paragraph{HIQL$^{\vee}\!\!_{+\!\alpha^\vee}$.}
HIQL$^{\vee}\!\!_{+\!\alpha^\vee}$ replaces the dual goal representation $\varphi(g)$ in HIQL$^\vee$ with our dual analogy
\[
\alpha^\vee(s,g) \coloneqq \varphi(g)-\varphi(s)\in\mathbb R^{d},
\]
and conditions the goal-conditioned value and hierarchical policies on $\alpha^\vee(s,g)$ through the same projection $\eta$.
The goal-conditioned value function is parameterized as $V(s,g)\coloneqq V(s,\eta(\alpha^\vee(s,g)))$ and trained with the same action-free IQL objective:
\[
\mathcal L(V,\eta)
=
\mathbb E_{(s,s',g)\sim\mathcal D_{\mathrm{train}}}
\Big[
\ell_2^\kappa\big(
\tilde r^\ell(s,g) + \gamma \bar V(s',g) - V(s,g)
\big)
\Big],
\]
where $\kappa\in(0,1)$ and $\bar{\cdot}$ denotes the target network.
The high- and low-level actors are defined as Gaussian policies with fixed covariances $\Sigma_h$ and $\Sigma_\ell$:
\begin{align}
\pi_h(\cdot\mid s_t,g)=\mathcal N(\mu_h(s_t,g),\Sigma_h),
&\qquad
\pi_\ell(\cdot\mid s_t,s_{t+k})=\mathcal N(\mu_\ell(s_t,s_{t+k}),\Sigma_\ell),\nonumber\\
\mu_h(s_t,g)=\mu_h\big(s_t,\eta(\alpha^\vee(s_t,g))\big),
&\qquad
\mu_\ell(s_t,s_{t+k})=\mu_\ell\big(s_t,\eta(\alpha^\vee(s_t,s_{t+k}))\big).\nonumber
\end{align}
Both actors are trained by maximizing the following advantage-weighted regression objectives:
\begin{align}
\mathcal L(\pi_h)
&=
\mathbb E_{(s_t,s_{t+k},g)\sim\mathcal D_{\mathrm{train}}}
\Big[
\exp\!\big(\beta_h A(s_t,s_{t+k},g)\big)\,
\log \pi_h\big(\sg[\eta](\alpha^\vee(s_t,s_{t+k})) \mid s_t, \sg[\eta](\alpha^\vee(s_t,g))\big)
\Big],\nonumber\\
\mathcal L(\pi_\ell)
&=
\mathbb E_{(s_t,a_t,s_{t+1},s_{t+k})\sim\mathcal D_{\mathrm{train}}}
\Big[
\exp\!\big(\beta_\ell A(s_t,s_{t+1},s_{t+k})\big)\,
\log \pi_\ell\big(a_t \mid s_t, \sg[\eta](\alpha^\vee(s_t,s_{t+k}))\big)
\Big],\nonumber
\end{align}
where $A(s,s',g)\coloneqq V(s',g)-V(s,g)$ is the advantage computed from the value function.
Finally, all parameters are optimized using a single optimizer on the summed objective
\[
\min_{V,\pi_h,\pi_\ell,\eta}\ 
\mathcal L_{\mathrm{HIQL}^{\vee}_{+\alpha^\vee}}
:=
\mathcal L(V,\eta)
-\mathcal L(\pi_h)
-\mathcal L(\pi_\ell),
\]
and $\eta$ is updated only through $\mathcal L(V,\eta)$.

The key distinction from CTA is that HIQL$^{\vee}\!\!_{+\!\alpha^\vee}$ simply substitutes the conditioning signal in the original goal-conditioned HIQL structure, using $\alpha^\vee(s,g)$ in place of $\varphi(g)$, but does not introduce a transduction mechanism for inferring out-of-combination (OOC) analogy--context compositions.
In contrast, CTA explicitly adopts an anchor--displacement parameterization, separating the anchor state $s$ from the displacement $\alpha^\vee(s,g)$ and enforcing the low-rank structure required for bilinear transduction via a shared bottleneck.
This design targets OOC extrapolation to novel analogy--context combinations absent from the training data, even when the underlying analogy representation is shared.

\subsection{Detailed explanations of the main experiments in \cref{sec:main_experiments}}\label{appendix:main_exp_details}
\paragraph{OOC case study.}

To provide stronger evidence that CTA is indeed performing OOC extrapolation, we constructed additional experiments. 
In \texttt{scene-play-v0}, where task information and context can be separated most intuitively, and in \texttt{puzzle-4x4-play-v0}, which is specifically designed to evaluate compositional generalization, we designed experiments in which demonstrations of a single task under a particular context were completely removed from the training dataset. 
We then measured the success rate of inference on the removed context--task pair.

Specifically, in \texttt{scene-play-v0}, we removed the following three context--task pairs from the training data and evaluated the corresponding inference success rates:
\begin{figure}[h]
    \centering
    \captionsetup[subfigure]{font=tiny}
    \begin{minipage}{0.6\textwidth}
        \centering
        \begin{subfigure}[t]{0.32\textwidth}
            \centering
            \includegraphics[width=\linewidth]{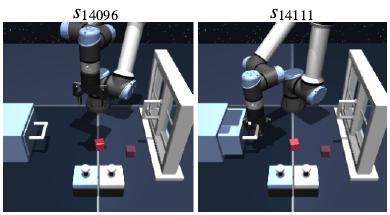}
            \caption{\textbf{Pair 1}}
        \end{subfigure}
        \hfill
        \begin{subfigure}[t]{0.32\textwidth}
            \centering
            \includegraphics[width=\linewidth]{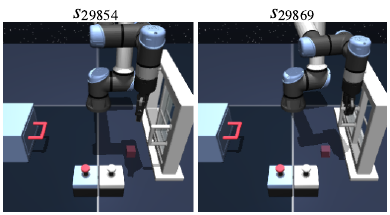}
            \caption{\textbf{Pair 2}}
        \end{subfigure}
        \hfill
        \begin{subfigure}[t]{0.32\textwidth}
            \centering
            \includegraphics[width=\linewidth]{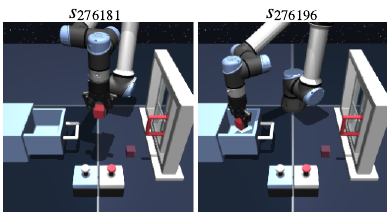}
            \caption{\textbf{Pair 3}}
        \end{subfigure}
    \end{minipage}
    \caption{Examples of the removed context--task pairs in \texttt{scene-play-v0}.}
    \label{fig:ooc_scene_pairs}
    \vspace{-1em}
\end{figure}
\begin{enumerate}[label=$\circ$, leftmargin=1.2em, itemsep=0em, topsep=0.1em, parsep=0em]
    \item \textbf{Pair 1:} context: \texttt{window closed}, \texttt{window unlocked}, \texttt{drawer closed} / task: \texttt{open drawer}
    \item \textbf{Pair 2:} context: \texttt{drawer closed}, \texttt{drawer locked}, \texttt{window open} / task: \texttt{close window}
    \item \textbf{Pair 3:} context: \texttt{window open}, \texttt{window locked}, \texttt{drawer open}, \texttt{cube not in drawer} / task: \texttt{put the cube into the drawer}
\end{enumerate}
The 15 timesteps preceding each task completion event were removed whenever the corresponding context was satisfied, and the resulting split segments were treated as separate episodes. 
After dataset reprocessing, 5070, 6960, and 1110 transitions were removed from \texttt{scene-play-v0} for the three context--task pairs, respectively. Examples for each pair are shown in \cref{fig:ooc_scene_pairs}.

In \texttt{puzzle-4x4-play-v0}, we removed the following five context--task pairs from the training data and evaluated the corresponding inference success rates:
\begin{figure}[h]
    \centering
    \captionsetup[subfigure]{font=tiny}
    \begin{subfigure}[t]{0.19\textwidth}
        \centering
        \includegraphics[width=\linewidth]{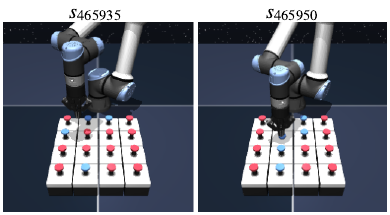}
        \caption{\textbf{Pair 1}}
    \end{subfigure}
    \hfill
    \begin{subfigure}[t]{0.19\textwidth}
        \centering
        \includegraphics[width=\linewidth]{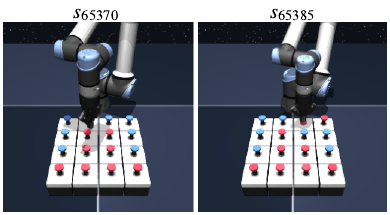}
        \caption{\textbf{Pair 2}}
    \end{subfigure}
    \hfill
    \begin{subfigure}[t]{0.19\textwidth}
        \centering
        \includegraphics[width=\linewidth]{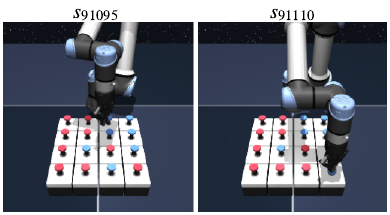}
        \caption{\textbf{Pair 3}}
    \end{subfigure}
    \hfill
    \begin{subfigure}[t]{0.19\textwidth}
        \centering
        \includegraphics[width=\linewidth]{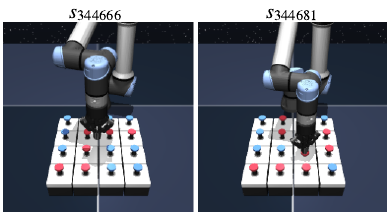}
        \caption{\textbf{Pair 4}}
    \end{subfigure}
    \hfill
    \begin{subfigure}[t]{0.19\textwidth}
        \centering
        \includegraphics[width=\linewidth]{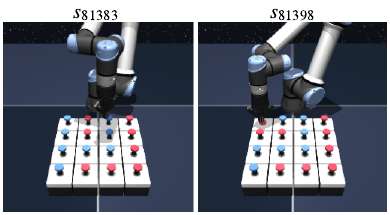}
        \caption{\textbf{Pair 5}}
    \end{subfigure}
    \caption{Examples of the removed context--task pairs in \texttt{puzzle-4x4-play-v0}.}
    \label{fig:ooc_puzzle_pairs}
    \vspace{-1em}
\end{figure}
\begin{enumerate}[label=$\circ$, leftmargin=1.2em, itemsep=0em, topsep=0.1em, parsep=0em]
    \item \textbf{Pair 1:} context: \texttt{button5 = 0}, \texttt{button1 = 1}, \texttt{button9 = 0}, \texttt{button4 = 1}, \texttt{button6 = 0} / task: \texttt{press button5}
    \item \textbf{Pair 2:} context: \texttt{button2 = 1}, \texttt{button1 = 0}, \texttt{button3 = 1}, \texttt{button6 = 0} / task: \texttt{press button2}
    \item \textbf{Pair 3:} context: \texttt{button15 = 0}, \texttt{button11 = 1}, \texttt{button14 = 1} / task: \texttt{press button15}
    \item \textbf{Pair 4:} context: \texttt{button10 = 1}, \texttt{button6 = 0}, \texttt{button14 = 1}, \texttt{button9 = 0}, \texttt{button11 = 1} / task: \texttt{press button10}
    \item \textbf{Pair 5:} context: \texttt{button0 = 1}, \texttt{button1 = 0}, \texttt{button4 = 1} / task: \texttt{press button0}
\end{enumerate}
The 20 timesteps preceding each task completion event were removed whenever the corresponding context was satisfied, and the resulting split segments were treated as separate episodes. 
After dataset reprocessing, 1080, 2740, 4680, 1080, and 4320 transitions were removed from \texttt{puzzle-4x4-play-v0} for the five context--task pairs, respectively. Examples for each pair are shown in \cref{fig:ooc_puzzle_pairs}.

The removed context--task pairs become OOC pairs at inference time. 
When evaluating state--goal pairs that require solving the task under the corresponding context, we report two evaluation metrics. 
In addition to the standard success rate, we also measure a \emph{direct success rate}, which checks whether the agent solves the given single task directly, rather than reaching the goal by detouring through in-distribution context--task combinations. 
This distinction is important because the standard success rate alone may overestimate OOC generalization. 
For example, in Pair 1 of \texttt{scene-play-v0}, instead of directly opening the drawer under the held-out context, the agent could first open the window, then open the drawer, and finally close the window again to reach the same goal. 
Such a trajectory clearly has a longer temporal distance than directly performing the intended \texttt{open drawer} action, and therefore does not correspond to the higher-value optimal path. 
Accordingly, it is not counted in the direct success rate.
The full results are shown in Tables~\ref{tab:ooc_scene} and~\ref{tab:ooc_puzzle}. 
Each entry is reported in the form of \emph{direct success rate} (\emph{success rate}). 
Following the OGBench evaluation protocol, evaluation was performed every 100,000 training steps, and we report the mean ($\pm$ std) over the final three evaluations during training.

\begin{table}[t]
    \centering
    \caption{OOC case study results on \texttt{scene-play-v0} with 4 seeds. Each entry is reported as direct success rate (success rate). \textbf{Bold} indicates the best score, and values within 95\% of the best are also bold.}
    \label{tab:ooc_scene}
    \resizebox{\textwidth}{!}{
    \begin{tabular}{llccccc}
        \toprule
        Dataset & Pair & HIQL & GCIQL$^{\vee}$ & HIQL$^{\vee}$ & HIQL$^{\vee}_{+\alpha^{\vee}}$ & CTA \\
        \midrule
        \multirow{4}{*}{\texttt{scene-play-v0}} 
        & Pair 1 
        & $8\pm2$ ($39\pm10$)
        & $53\pm17$ ($81\pm16$)
        & $21\pm14$ ($91\pm6$)
        & $29\pm12$ ($99\pm1$)
        & $\mathbf{57\pm12}$ ($98\pm2$) \\

        & Pair 2 
        & $23\pm19$ ($48\pm18$)
        & $79\pm4$ ($84\pm6$)
        & $42\pm9$ ($92\pm8$)
        & $64\pm14$ ($99\pm1$)
        & $\mathbf{83\pm7}$ ($98\pm2$) \\

        & Pair 3 
        & $26\pm8$ ($40\pm8$)
        & $22\pm8$ ($25\pm10$)
        & $71\pm11$ ($79\pm7$)
        & $50\pm15$ ($60\pm15$)
        & $\mathbf{80\pm8}$ ($86\pm7$) \\

        \cmidrule(lr){2-7}
        & Overall 
        & $19\pm10$ ($42\pm12$)
        & $51\pm10$ ($63\pm11$)
        & $45\pm11$ ($87\pm7$)
        & $48\pm14$ ($86\pm6$)
        & $\mathbf{73\pm9}$ ($94\pm4$) \\
        \bottomrule
    \end{tabular}
    }
\end{table}

\begin{table}[t]
    \centering
    \caption{OOC case study results on \texttt{puzzle-4x4-play-v0} with 4 seeds. Each entry is reported as direct success rate (success rate). \textbf{Bold} indicates the best score, and values within 95\% of the best are also bold.}
    \label{tab:ooc_puzzle}
    \resizebox{\textwidth}{!}{
    \begin{tabular}{llccccc}
        \toprule
        Dataset & Pair & HIQL & GCIQL$^{\vee}$ & HIQL$^{\vee}$ & HIQL$^{\vee}_{+\alpha^{\vee}}$ & CTA \\
        \midrule
        \multirow{6}{*}{\texttt{puzzle-4x4-play-v0}} 
        & Pair 1 
        & $38\pm11$ ($70\pm3$)
        & $21\pm8$ ($29\pm9$)
        & $28\pm11$ ($60\pm9$)
        & $\mathbf{74\pm11}$ ($94\pm5$)
        & $\mathbf{74\pm11}$ ($100\pm0$) \\

        & Pair 2 
        & $39\pm12$ ($72\pm7$)
        & $53\pm7$ ($74\pm9$)
        & $43\pm22$ ($70\pm12$)
        & $64\pm9$ ($96\pm3$)
        & $\mathbf{80\pm14}$ ($100\pm0$) \\

        & Pair 3 
        & $37\pm16$ ($64\pm14$)
        & $57\pm27$ ($64\pm26$)
        & $42\pm7$ ($68\pm3$)
        & $57\pm11$ ($96\pm3$)
        & $\mathbf{81\pm7}$ ($100\pm1$) \\

        & Pair 4 
        & $37\pm13$ ($74\pm14$)
        & $24\pm10$ ($34\pm11$)
        & $35\pm25$ ($56\pm23$)
        & $72\pm9$ ($95\pm4$)
        & $\mathbf{79\pm7}$ ($100\pm1$) \\

        & Pair 5 
        & $36\pm5$ ($66\pm9$)
        & $66\pm2$ ($74\pm4$)
        & $27\pm19$ ($54\pm20$)
        & $61\pm13$ ($94\pm5$)
        & $\mathbf{85\pm3}$ ($100\pm1$) \\

        \cmidrule(lr){2-7}
        & Overall 
        & $37\pm11$ ($69\pm9$)
        & $44\pm11$ ($55\pm12$)
        & $35\pm17$ ($62\pm13$)
        & $66\pm11$ ($95\pm4$)
        & $\mathbf{80\pm8}$ ($100\pm1$) \\
        \bottomrule
    \end{tabular}
    }
\end{table}

CTA consistently outperformed the baselines in terms of direct success rate. Interestingly, many baselines showed low direct success rates despite achieving relatively high success rates. This implies that, even when directly performing the task is clearly more efficient, those baselines assign higher value to behaviors that detour through in-distribution context--task pairs. By contrast, CTA, through its bilinear value and policy structure, correctly assigns high value to behaviors that directly solve the task in previously unseen situations via extrapolation, and indeed executes the action sequence corresponding to the intended direct task. We believe this provides clear evidence that CTA is indeed performing OOC extrapolation as intended, and that this capability leads to stronger generalization performance.

\paragraph{Additional qualitative visualization of the dual analogies.}
To qualitatively verify that the dual analogy captures task-endogenous analogies, we
visualize the top-10 nearest analogies for each query corresponding to OOC pairs
used in the direct case study: three pairs from \texttt{scene} and three pairs
from \texttt{puzzle-4x4}. Results for \texttt{scene} are shown in \cref{fig:dual_analogy_scene},
and results for \texttt{puzzle-4x4} are shown in \cref{fig:dual_analogy_puzzle}.

\newpage

\section{Additional Experiments} \label{appendix:additional_experiments}
\subsection{Additional Benchmark Results}  \label{appendix:additional_experiments:additional_benchmark_results}
\paragraph{Comparison to the GCB analogy.}
We compare the dual analogy with the GCB analogy~\citep{hansen2022bisimulation} to assess the utility of the dual analogy in the offline GCRL setting. Table~\ref{tab:noisy_gcb_vs_cta} reports results obtained by replacing only the analogy in the CTA architecture with the GCB analogy $\psi$ in~\eqref{eq:org_gcbisim}. Here, CTA w/ $\psi$ and CTA w/ $\alpha^\vee$ denote variants that keep the CTA structure fixed while using the GCB analogy and the dual analogy, respectively. We evaluate this comparison on the \texttt{play} dataset across eight OGBench manipulation tasks.
CTA with dual analogies succeeds where GCB analogies struggle because GCB defines behavioral equivalence through on-policy and reward-based matching, which is brittle and noise-amplifying under suboptimal offline data and distribution shift, whereas dual analogies leverage temporal-distance structure independent of policy and reward, yielding more reliable transduction and OOC extrapolation.

\begin{table}[t]
\centering
\captionsetup{skip=12pt, font=small, width=\columnwidth}
\caption{\textbf{GCB analogy vs. dual analogy (4 seeds).} GCB analogy fails in offline GCRL. \textbf{Bold} indicates the best score, and values within 95\% of the best are also bold.}
\label{tab:noisy_gcb_vs_cta}
\renewcommand{\arraystretch}{1.08}
\small
\begin{tabularx}{0.5\columnwidth}{l *{2}{>{\centering\arraybackslash}X}}
\toprule
\textbf{Environment} &
\textbf{CTA w/ $\psi$} &
\textbf{CTA w/ $\alpha^\vee$} \\
\midrule
\texttt{scene-play}
& $3{\scalebox{0.6}{$\pm1$}}$
& $\mathbf{90}{\scalebox{0.6}{$\pm4$}}$ \\
\texttt{cube-single-play}
& $8{\scalebox{0.6}{$\pm1$}}$
& $\mathbf{86}{\scalebox{0.6}{$\pm3$}}$ \\
\texttt{cube-double-play}
& $0{\scalebox{0.6}{$\pm0$}}$
& $\mathbf{50}{\scalebox{0.6}{$\pm5$}}$ \\
\texttt{cube-triple-play}
& $0{\scalebox{0.6}{$\pm0$}}$
& $\mathbf{17}{\scalebox{0.6}{$\pm1$}}$ \\
\texttt{puzzle-3x3-play}
& $0{\scalebox{0.6}{$\pm0$}}$
& $\mathbf{94}{\scalebox{0.6}{$\pm11$}}$ \\
\texttt{puzzle-4x4-play}
& $0{\scalebox{0.6}{$\pm0$}}$
& $\mathbf{84}{\scalebox{0.6}{$\pm3$}}$ \\
\texttt{puzzle-4x5-play}
& $0{\scalebox{0.6}{$\pm0$}}$
& $\mathbf{17}{\scalebox{0.6}{$\pm1$}}$ \\
\texttt{puzzle-4x6-play}
& $0{\scalebox{0.6}{$\pm0$}}$
& $\mathbf{12}{\scalebox{0.6}{$\pm2$}}$ \\
\midrule
\multicolumn{1}{c}{\textbf{Average}}
& $1.4$
& $\mathbf{56.3}$ \\
\bottomrule
\end{tabularx}
\end{table}

\paragraph{Results with maze environments.}
Since our dual analogy grounded in GCE-BCMP is designed to isolate task-endogenous states, CTA can be less effective in environments where task-endogenous factors and task-exogenous contexts are not cleanly separable. \texttt{Maze} environments are representative examples, where the task-endogenous component relevant to reward is the agent's global $(x,y)$ position, while the task-exogenous context involves the underlying joint configuration that realizes transitions in the global space.
In such settings, even if the analogy abstracts away joint states and depends only on the global $(x,y)$ coordinates, analogy transduction largely reduces to in-distribution trajectory stitching.
Nevertheless, as shown in \cref{tab:maze_result}, CTA remains competitive with strong maze baselines in this regime.

\begin{table}[t]
\centering
\captionsetup{skip=12pt, font=small, width=\columnwidth}
\caption{\textbf{OGBench Maze results (4 seeds).} CTA remains competitive even under in-distribution evaluation. \textbf{Bold} indicates the best score, and values within 95\% of the best are also bold.}
\label{tab:maze_result}
\renewcommand{\arraystretch}{1.08}
\small
\begin{tabularx}{\columnwidth}{l *{7}{>{\centering\arraybackslash}X}}
\toprule
\textbf{Dataset (-navigate)} &
\textbf{GCIVL} &
\textbf{CRL} &
\textbf{HIQL} &
\textbf{GCIVL}$^{\vee}$ &
\textbf{CRL}$^{\vee}$ &
\textbf{HIQL}$^{\vee}$ &
\textbf{CTA} \\
\midrule
\texttt{antmaze-medium}
& $71{\scalebox{0.6}{$\pm4$}}$
& $\mathbf{95}{\scalebox{0.6}{$\pm1$}}$
& $\mathbf{96}{\scalebox{0.6}{$\pm1$}}$
& $75{\scalebox{0.6}{$\pm4$}}$
& $\mathbf{93}{\scalebox{0.6}{$\pm3$}}$
& $\mathbf{96}{\scalebox{0.6}{$\pm1$}}$
& $\mathbf{96}{\scalebox{0.6}{$\pm1$}}$ \\
\texttt{antmaze-large}
& $16{\scalebox{0.6}{$\pm3$}}$
& $83{\scalebox{0.6}{$\pm4$}}$
& $\mathbf{91}{\scalebox{0.6}{$\pm2$}}$
& $28{\scalebox{0.6}{$\pm11$}}$
& $\mathbf{87}{\scalebox{0.6}{$\pm2$}}$
& $75{\scalebox{0.6}{$\pm2$}}$
& $85{\scalebox{0.6}{$\pm3$}}$ \\
\texttt{antmaze-giant}
& $0{\scalebox{0.6}{$\pm0$}}$
& $16{\scalebox{0.6}{$\pm3$}}$
& $\mathbf{65}{\scalebox{0.6}{$\pm5$}}$
& $0{\scalebox{0.6}{$\pm0$}}$
& $21{\scalebox{0.6}{$\pm4$}}$
& $44{\scalebox{0.6}{$\pm3$}}$ 
& $54{\scalebox{0.6}{$\pm4$}}$ \\
\texttt{humanoidmaze-medium}
& $27{\scalebox{0.6}{$\pm3$}}$
& $60{\scalebox{0.6}{$\pm4$}}$
& $\mathbf{89}{\scalebox{0.6}{$\pm2$}}$
& $29{\scalebox{0.6}{$\pm3$}}$
& $57{\scalebox{0.6}{$\pm4$}}$
& $\mathbf{89}{\scalebox{0.6}{$\pm3$}}$ 
& $\mathbf{90}{\scalebox{0.6}{$\pm2$}}$ \\
\texttt{humanoidmaze-large}
& $3{\scalebox{0.6}{$\pm0$}}$
& $24{\scalebox{0.6}{$\pm4$}}$
& $49{\scalebox{0.6}{$\pm4$}}$
& $3{\scalebox{0.6}{$\pm2$}}$
& $18{\scalebox{0.6}{$\pm4$}}$
& $48{\scalebox{0.6}{$\pm3$}}$ 
& $\mathbf{60}{\scalebox{0.6}{$\pm3$}}$ \\
\texttt{humanoidmaze-giant}
& $0{\scalebox{0.6}{$\pm0$}}$
& $3{\scalebox{0.6}{$\pm2$}}$
& $\mathbf{12}{\scalebox{0.6}{$\pm4$}}$
& $0{\scalebox{0.6}{$\pm0$}}$
& $3{\scalebox{0.6}{$\pm1$}}$
& $10{\scalebox{0.6}{$\pm4$}}$ 
& $5{\scalebox{0.6}{$\pm1$}}$ \\
\midrule
\multicolumn{1}{c}{\textbf{Average}}
& $19.5$
& $46.8$
& $\mathbf{67.0}$
& $22.5$
& $46.5$
& $60.3$
& $\mathbf{65.0}$ \\
\bottomrule
\end{tabularx}
\end{table}

\paragraph{Results in pixel-based environments.}
CTA can be brittle in pixel-based environments because it inherits the same failure mode identified for dual goal representations~\citep{park2026dual} (see \cref{tab:visual_subset}).
In particular, \citet{park2026dual} argue that representation-conditioned formulations cannot directly exploit early fusion of visual state and goal, since the goal must be processed separately before conditioning the policy.
This architectural constraint effectively enforces a late-fusion design, which is often weaker than early fusion in visual robotics.
Because CTA is likewise conditioned on a learned representation rather than the raw goal observation, it shares this fusion bottleneck, and its performance in pixel-based tasks can therefore be highly non-robust, exhibiting gains when the learned conditioning aligns with the task but collapsing when it does not.

\begin{table}[t]
\centering
\captionsetup{skip=12pt, font=small, width=\columnwidth}
\caption{\textbf{Results in pixel-based environments (4 seeds).} \textbf{Bold} indicates the best score, and values within 95\% of the best are also bold.}
\label{tab:visual_subset}
\renewcommand{\arraystretch}{1.08}
\small
\begin{tabularx}{\columnwidth}{l *{5}{>{\centering\arraybackslash}X}}
\toprule
\textbf{Environment} &
\textbf{HIQL} &
\textbf{GCIVL}$^{\vee}$ &
\textbf{HIQL}$^{\vee}$ &
\textbf{HIQL}$^{\vee}_{+\alpha^\vee}$ &
\textbf{CTA} \\
\midrule
\texttt{visual-scene-play}
& $49{\scalebox{0.6}{$\pm4$}}$
& $26{\scalebox{0.6}{$\pm5$}}$
& $39{\scalebox{0.6}{$\pm27$}}$
& $53{\scalebox{0.6}{$\pm8$}}$
& $\mathbf{59}{\scalebox{0.6}{$\pm11$}}$ \\
\texttt{visual-cube-single-play}
& $\mathbf{89}{\scalebox{0.6}{$\pm0$}}$
& $58{\scalebox{0.6}{$\pm5$}}$
& $\mathbf{88}{\scalebox{0.6}{$\pm1$}}$
& $\mathbf{87}{\scalebox{0.6}{$\pm1$}}$
& $\mathbf{89}{\scalebox{0.6}{$\pm2$}}$ \\
\texttt{visual-cube-double-play}
& $\mathbf{39}{\scalebox{0.6}{$\pm2$}}$
& $9{\scalebox{0.6}{$\pm2$}}$
& $11{\scalebox{0.6}{$\pm2$}}$
& $11{\scalebox{0.6}{$\pm1$}}$
& $8{\scalebox{0.6}{$\pm2$}}$ \\
\texttt{visual-puzzle-3x3-play}
& $\mathbf{73}{\scalebox{0.6}{$\pm8$}}$
& $0{\scalebox{0.6}{$\pm0$}}$
& $0{\scalebox{0.6}{$\pm0$}}$
& $0{\scalebox{0.6}{$\pm0$}}$
& $0{\scalebox{0.6}{$\pm0$}}$ \\
\texttt{visual-puzzle-4x4-play}
& $\mathbf{60}{\scalebox{0.6}{$\pm41$}}$
& $0{\scalebox{0.6}{$\pm0$}}$
& $0{\scalebox{0.6}{$\pm0$}}$
& $0{\scalebox{0.6}{$\pm0$}}$
& $0{\scalebox{0.6}{$\pm0$}}$ \\
\midrule
\multicolumn{1}{c}{\textbf{Average}}
& $\mathbf{62.0}$
& $18.6$
& $27.6$
& $30.2$
& $31.2$ \\
\bottomrule
\end{tabularx}
\end{table}

\paragraph{Results with noisy data.}
CTA is robust to noisy observations because it builds its analogy representation on the optimal temporal distance (see \cref{subsec:GCE-BCMP}). \cref{tab:noisy_subset} reports results on the \texttt{noisy} datasets in OGBench. Consistent with the BCMP-based perspective, both the dual goal representation and the dual analogy exhibit strong robustness to noise. Leveraging an analogy extracted from the optimal temporal distance, CTA continues to perform reliable analogy transduction via stable OOC extrapolation in noisy settings, and outperforms the competing baselines.

\begin{table}[t]
\centering
\captionsetup{skip=12pt, font=small, width=\columnwidth}
\caption{\textbf{Noisy OGBench results (8 seeds).} The dual analogy is robust to noise. \textbf{Bold} indicates the best score, and values within 95\% of the best are also bold.}
\label{tab:noisy_subset}
\renewcommand{\arraystretch}{1.08}
\small
\begin{tabularx}{\columnwidth}{l *{5}{>{\centering\arraybackslash}X}}
\toprule
\textbf{Environment} &
\textbf{GCIVL} &
\textbf{HIQL} &
\textbf{HIQL}$^{\vee}$ &
\textbf{HIQL}$^{\vee}_{+\alpha^\vee}$ &
\textbf{CTA} \\
\midrule
\texttt{scene-noisy}
& $26{\scalebox{0.6}{$\pm5$}}$
& $25{\scalebox{0.6}{$\pm4$}}$
& $46{\scalebox{0.6}{$\pm3$}}$
& $57{\scalebox{0.6}{$\pm2$}}$
& $\mathbf{71}{\scalebox{0.6}{$\pm4$}}$ \\
\texttt{cube-single-noisy}
& $71{\scalebox{0.6}{$\pm9$}}$
& $41{\scalebox{0.6}{$\pm6$}}$
& $80{\scalebox{0.6}{$\pm10$}}$
& $80{\scalebox{0.6}{$\pm14$}}$
& $\mathbf{91}{\scalebox{0.6}{$\pm11$}}$ \\
\texttt{cube-double-noisy}
& $14{\scalebox{0.6}{$\pm3$}}$
& $2{\scalebox{0.6}{$\pm1$}}$
& $\mathbf{15}{\scalebox{0.6}{$\pm2$}}$
& $15{\scalebox{0.6}{$\pm2$}}$
& $\mathbf{16}{\scalebox{0.6}{$\pm8$}}$ \\
\texttt{puzzle-3x3-noisy}
& $42{\scalebox{0.6}{$\pm19$}}$
& $\mathbf{51}{\scalebox{0.6}{$\pm11$}}$
& $39{\scalebox{0.6}{$\pm2$}}$
& $47{\scalebox{0.6}{$\pm12$}}$
& $42{\scalebox{0.6}{$\pm6$}}$ \\
\texttt{puzzle-4x4-noisy}
& $20{\scalebox{0.6}{$\pm3$}}$
& $16{\scalebox{0.6}{$\pm4$}}$
& $3{\scalebox{0.6}{$\pm1$}}$
& $40{\scalebox{0.6}{$\pm3$}}$
& $\mathbf{96}{\scalebox{0.6}{$\pm1$}}$ \\
\midrule
\multicolumn{1}{c}{\textbf{Average}}
& $27.3$
& $18.0$
& $24.1$
& $30.1$
& $\mathbf{42.3}$ \\
\bottomrule
\end{tabularx}
\end{table}

\subsection{Ablations} \label{appendix:additional_experiments:hierarchical_structure}
\paragraph{Hierarchical structure.}
\begin{table}[t]
\centering
\captionsetup{skip=12pt, font=small, width=\columnwidth}
\caption{\textbf{Importance of hierarchical structure (4 seeds).} Comparison between GCIVL$^{\vee}_{+\alpha^\vee}$ and CTA. \textbf{Bold} indicates the best score, and values within 95\% of the best are also bold.}
\label{tab:noisy_gcivlvee_alpha_vs_cta}
\renewcommand{\arraystretch}{1.08}
\small
\begin{tabularx}{0.6\columnwidth}{l *{2}{>{\centering\arraybackslash}X}}
\toprule
\textbf{Environment} &
\textbf{GCIVL}$^{\vee}_{+\alpha^\vee}$ &
\textbf{CTA} \\
\midrule
\texttt{scene-play}
& $74{\scalebox{0.6}{$\pm1$}}$
& $\mathbf{90}{\scalebox{0.6}{$\pm4$}}$ \\
\texttt{cube-single-play}
& $\mathbf{96}{\scalebox{0.6}{$\pm1$}}$
& $86{\scalebox{0.6}{$\pm3$}}$ \\
\texttt{cube-double-play}
& $40{\scalebox{0.6}{$\pm6$}}$
& $\mathbf{50}{\scalebox{0.6}{$\pm5$}}$ \\
\texttt{cube-triple-play}
& $6{\scalebox{0.6}{$\pm2$}}$
& $\mathbf{17}{\scalebox{0.6}{$\pm1$}}$ \\
\texttt{puzzle-3x3-play}
& $10{\scalebox{0.6}{$\pm1$}}$
& $\mathbf{94}{\scalebox{0.6}{$\pm11$}}$ \\
\texttt{puzzle-4x4-play}
& $3{\scalebox{0.6}{$\pm1$}}$
& $\mathbf{84}{\scalebox{0.6}{$\pm3$}}$ \\
\texttt{puzzle-4x5-play}
& $2{\scalebox{0.6}{$\pm1$}}$
& $\mathbf{17}{\scalebox{0.6}{$\pm1$}}$ \\
\texttt{puzzle-4x6-play}
& $5{\scalebox{0.6}{$\pm2$}}$
& $\mathbf{12}{\scalebox{0.6}{$\pm2$}}$ \\
\midrule
\multicolumn{1}{c}{\textbf{Average}}
& $29.5$
& $\mathbf{56.3}$ \\
\bottomrule
\end{tabularx}
\end{table}

To isolate the contribution of CTA's hierarchical structure, we construct a non-hierarchical baseline by applying the same bilinear transduction parameterization used in CTA to GCIVL$^{\vee}$, yielding GCIVL$^{\vee}_{+\alpha^\vee}$.
Empirically, as shown in \cref{tab:noisy_gcivlvee_alpha_vs_cta}, CTA substantially outperforms this baseline and exhibits markedly more reliable analogy transduction, indicating that bilinear transduction alone is insufficient without hierarchy.
The hierarchical structure improves compositional generalization by making analogy transduction both more effective and more stable.
Since long-horizon analogies are sparse in offline datasets~\citep{hong2023diffused, myers2025horizon}, we decompose behavior into $k$-step analogies for CTA, increasing trajectory overlap and the pool of reusable analogies.
Shorter-horizon analogies are also more feasible, and conditioning the low-level policy on proposed analogies stabilizes execution while avoiding out-of-distribution analogy queries outside the intended OOC regime.

\paragraph{Subgoal steps $k$.}
To assess the robustness of the hierarchical structure to the choice of the subgoal step $k$, we conduct an ablation study over $k=10, 20, 30, 40$.
We find that $k=10$ for \texttt{scene}, $k=30$ for \texttt{cube}, and $k=20$ for \texttt{puzzle} yield the most stable analogy transduction, achieving both higher mean performance and lower standard deviation (see \cref{fig:ss_ablation_grid_tall}).
We attribute this to the fact that these horizons align with the characteristic interaction timescales of each environment, so that the resulting $k$-step segments are more likely to induce meaningful task-endogenous displacements through object contact and manipulation.

\paragraph{Transductive feature dimension $b$.}
To examine how the low-rank bottleneck affects OOC extrapolation, we conduct an ablation study with $b=4,8,16,32$. Figure \ref{fig:b_ablation_grid_tall} suggests that a larger bottleneck helps in the more complex \texttt{scene} environment, where higher analogy expressivity appears beneficial, whereas smaller bottlenecks work better in the \texttt{puzzle} environments, where behavior is simple but combinatorial generalization is critical. Overall, these trends are consistent with the bilinear transduction theory, where there exists a tradeoff between expressivity (larger $b$) and OOC generalization (smaller $b$).

\section{Hyperparameters}
We report the hyperparameters for CTA, $\mathrm{HIQL}^\vee$, and $\mathrm{HIQL}^\vee\!+\!\alpha^\vee$ in \cref{tab:hyperparams} and \cref{tab:hyperparams_newdualhiql}.

% requires: \usepackage{booktabs,tabularx,array}
\begin{table}[t]
\centering
\caption{\textbf{CTA Hyperparameters.}}
\label{tab:hyperparams}
\renewcommand{\arraystretch}{1.08}
\scriptsize
\begin{tabularx}{0.8\textwidth}{>{\raggedright\arraybackslash}p{0.4\textwidth} >{\raggedright\arraybackslash}X}
\toprule
\textbf{Hyperparameter} & \textbf{Value} \\
\midrule
Learning rate & \texttt{3e-4} \\
Batch size &
\texttt{256 (cube-single, puzzle-3x3, puzzle-4x4)} \newline
\texttt{512 (cube-double, puzzle-4x5)} \newline
\texttt{1024 (cube-triple, puzzle-4x6)} \\
Analogy projection MLP size $\eta$ & \texttt{(256, 256)} \\
Dual representation MLP size $\varphi$ & \texttt{(512, 512, 512)} \\
Transductive anchor MLP size  & \texttt{(128, 128, 128)} \\
Transductive displacement MLP size & \texttt{(128, 128, 128)} \\
Transductive feature dimension $b$ & \texttt{8} \\
Actor backbone MLP size & \texttt{(128, 128)} \\
Value backbone MLP size & \texttt{(128, 128)} \\
Nonlinearity & \texttt{GELU} \citep{hendrycks2016gaussian} \\
Layer normalization \citep{ba2016layer} & \texttt{True} \\
Discount factor $\gamma$ & \texttt{0.99} \\
Target network update rate $\tau$ & \texttt{0.005} \\
Dual representation expectile $\iota$ & \texttt{0.7} \\
IQL expectile $\kappa$ & \texttt{0.7} \\
Low-level AWR temperature $\beta_{\ell}$ & \texttt{3.0} \\
High-level AWR temperature $\beta_{h}$ & \texttt{3.0} \\
Subgoal steps $k$ &
\texttt{10 (scene)} \newline
\texttt{30 (cube)} \newline
\texttt{20 (puzzle)} \newline
\texttt{25 (maze)} \\
Analogy projection $\eta$ representation dimension & \texttt{32} \\
Dual representation dimension $d$ & \texttt{256} \\
Visual encoder & \texttt{impala small} \citep{espeholt2018impala} \\
Value goal: current-state probability $p_{\mathrm{cur}}$ & \texttt{0.2} \\
Value goal: trajectory-future probability $p_{\mathrm{traj}}$ & \texttt{0.5} \\
Value goal: random probability $p_{\mathrm{rand}}$ & \texttt{0.3} \\
Value goal: geometric sampling & \texttt{True} \\
Actor goal: current-state probability $p_{\mathrm{cur}}$ & \texttt{0.0} \\
Actor goal: trajectory-future probability $p_{\mathrm{traj}}$ & \texttt{1.0} \\
Actor goal: random probability $p_{\mathrm{rand}}$ & \texttt{0.0} \\
Actor goal: geometric sampling & \texttt{False} \\
Image augmentation probability & \texttt{0.5} \\
\bottomrule
\end{tabularx}
\end{table}

\begin{table}[t]
\centering
\caption{\textbf{HIQL$^\vee$ and HIQL$^\vee_{+\alpha^\vee}$ Hyperparameters.}}
\label{tab:hyperparams_newdualhiql}
\renewcommand{\arraystretch}{1.08}
\scriptsize
\begin{tabularx}{0.8\textwidth}{>{\raggedright\arraybackslash}p{0.4\textwidth} >{\raggedright\arraybackslash}X}
\toprule
\textbf{Hyperparameter} & \textbf{Value} \\
\midrule
Learning rate & \texttt{3e-4} \\
Batch size &
\texttt{256 (cube-single, puzzle-3x3, puzzle-4x4)} \newline
\texttt{512 (cube-double, puzzle-4x5)} \newline
\texttt{1024 (cube-triple, puzzle-4x6)} \\
Goal projection MLP size $\eta$ & \texttt{(256, 256)} \\
Dual representation MLP size $\varphi$ & \texttt{(512, 512, 512)} \\
Actor MLP size & \texttt{(512, 512, 512)} \\
Value MLP size & \texttt{(512, 512, 512)} \\
Nonlinearity & \texttt{GELU} \citep{hendrycks2016gaussian} \\
Layer normalization \citep{ba2016layer} & \texttt{True} \\
Discount factor $\gamma$ & \texttt{0.99} \\
Target network update rate $\tau$ & \texttt{0.005} \\
Dual representation expectile $\iota$ & \texttt{0.7} \\
IQL expectile $\kappa$ & \texttt{0.7} \\
Low-level AWR temperature $\beta_{\ell}$ & \texttt{3.0} \\
High-level AWR temperature $\beta_{h}$ & \texttt{3.0} \\
Subgoal steps $k$ &
\texttt{10 (scene)} \newline
\texttt{30 (cube)} \newline
\texttt{20 (puzzle)} \newline
\texttt{25 (maze)} \\
Goal representation $\eta$ dimension & \texttt{32} \\
Dual representation dimension $d$ & \texttt{256} \\
Visual encoder & \texttt{impala small} \citep{espeholt2018impala}\\
Value goal: current-state probability $p_{\mathrm{cur}}$ & \texttt{0.2} \\
Value goal: trajectory-future probability $p_{\mathrm{traj}}$ & \texttt{0.5} \\
Value goal: random probability $p_{\mathrm{rand}}$ & \texttt{0.3} \\
Value goal: geometric sampling & \texttt{True} \\
Actor goal: current-state probability $p_{\mathrm{cur}}$ & \texttt{0.0} \\
Actor goal: trajectory-future probability $p_{\mathrm{traj}}$ & \texttt{1.0} \\
Actor goal: random probability $p_{\mathrm{rand}}$ & \texttt{0.0} \\
Actor goal: geometric sampling & \texttt{False} \\
Image augmentation probability & \texttt{0.5} \\
\bottomrule
\end{tabularx}
\end{table}

\begin{figure}[t]
    \centering
    \captionsetup[subfigure]{font=tiny}
    \newcommand{\colw}{0.20\textwidth}
    \newcommand{\colsep}{0.04\textwidth}

    % Row 1: query pairs
    \makebox[\textwidth][l]{%
    \hspace*{0.15\textwidth}%
    \begin{subfigure}[t]{\colw}
        \centering
        \includegraphics[width=\linewidth]{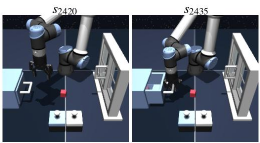}
        \caption*{Query 1}
    \end{subfigure}%
    \hspace{\colsep}%
    \begin{subfigure}[t]{\colw}
        \centering
        \includegraphics[width=\linewidth]{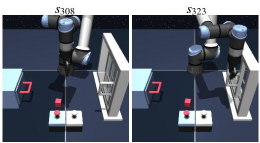}
        \caption*{Query 2}
    \end{subfigure}%
    \hspace{\colsep}%
    \begin{subfigure}[t]{\colw}
        \centering
        \includegraphics[width=\linewidth]{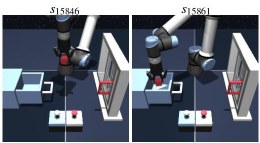}
        \caption*{Query 3}
    \end{subfigure}%
    }

    \vspace{2mm}

    % Row 2: nearest analogies
    \makebox[\textwidth][l]{%
    \hspace*{0.15\textwidth}%
    \begin{subfigure}[t]{\colw}
        \centering
        \includegraphics[width=\linewidth]{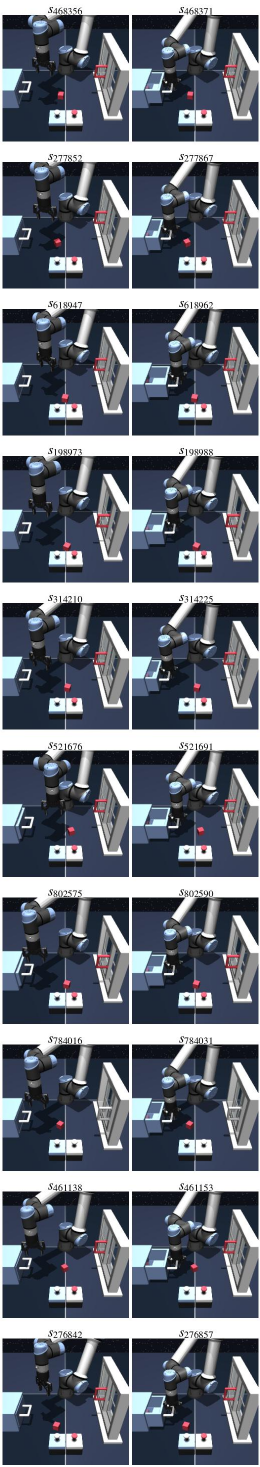}
        \caption*{Top-10 Nearest Analogy}
    \end{subfigure}%
    \hspace{\colsep}%
    \begin{subfigure}[t]{\colw}
        \centering
        \includegraphics[width=\linewidth]{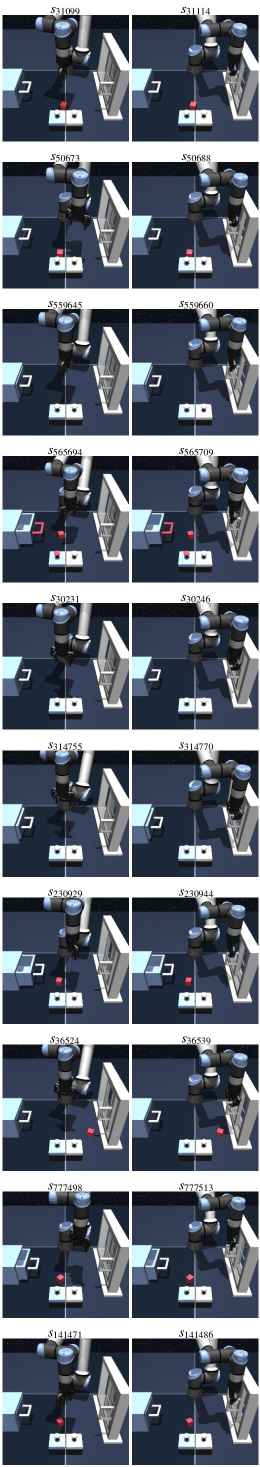}
        \caption*{Top-10 Nearest Analogy}
    \end{subfigure}%
    \hspace{\colsep}%
    \begin{subfigure}[t]{\colw}
        \centering
        \includegraphics[width=\linewidth]{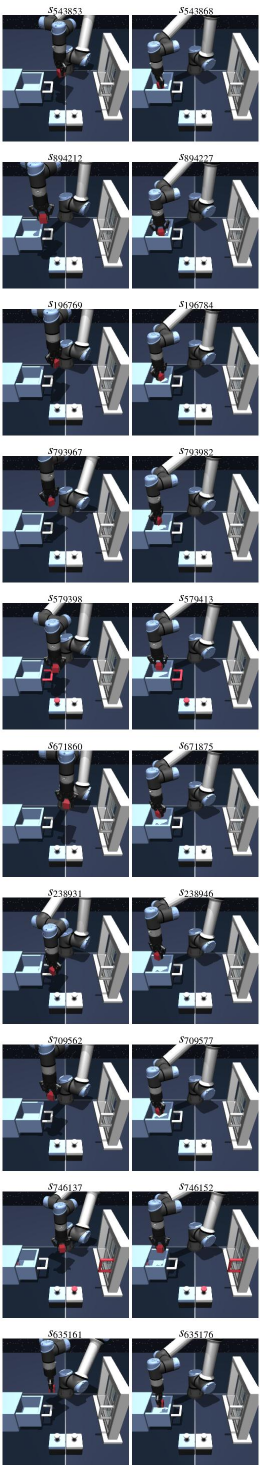}
        \caption*{Top-10 Nearest Analogy}
    \end{subfigure}%
    }

    \caption{\textbf{Qualitative visualization of dual analogies.}
    For each OOC query pair, we visualize the query and its top-10 nearest analogies.}
    \label{fig:dual_analogy_scene}
\end{figure}

\begin{figure}[t]
    \centering
    \captionsetup[subfigure]{font=tiny}
    \newcommand{\colw}{0.2\textwidth}
    \newcommand{\colsep}{0.04\textwidth}

    % Row 1: query pairs
    \makebox[\textwidth][l]{%
    \hspace*{0.15\textwidth}%
    \begin{subfigure}[t]{\colw}
        \centering
        \includegraphics[width=\linewidth]{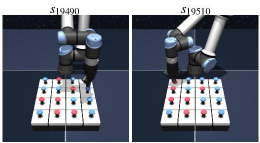}
        \caption*{Query 1}
    \end{subfigure}%
    \hspace{\colsep}%
    \begin{subfigure}[t]{\colw}
        \centering
        \includegraphics[width=\linewidth]{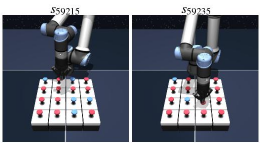}
        \caption*{Query 2}
    \end{subfigure}%
    \hspace{\colsep}%
    \begin{subfigure}[t]{\colw}
        \centering
        \includegraphics[width=\linewidth]{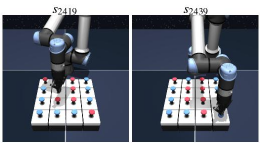}
        \caption*{Query 3}
    \end{subfigure}%
    }

    \vspace{2mm}

    % Row 2: nearest analogies
    \makebox[\textwidth][l]{%
    \hspace*{0.15\textwidth}%
    \begin{subfigure}[t]{\colw}
        \centering
        \includegraphics[width=\linewidth]{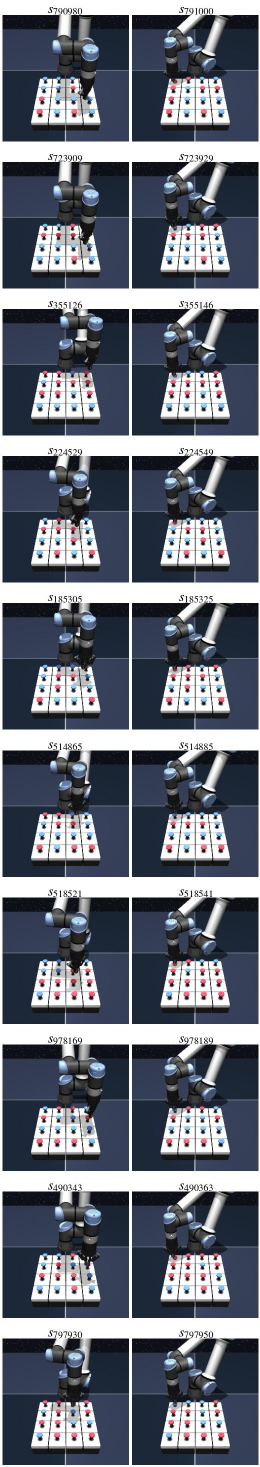}
        \caption*{Top-10 Nearest Analogy}
    \end{subfigure}%
    \hspace{\colsep}%
    \begin{subfigure}[t]{\colw}
        \centering
        \includegraphics[width=\linewidth]{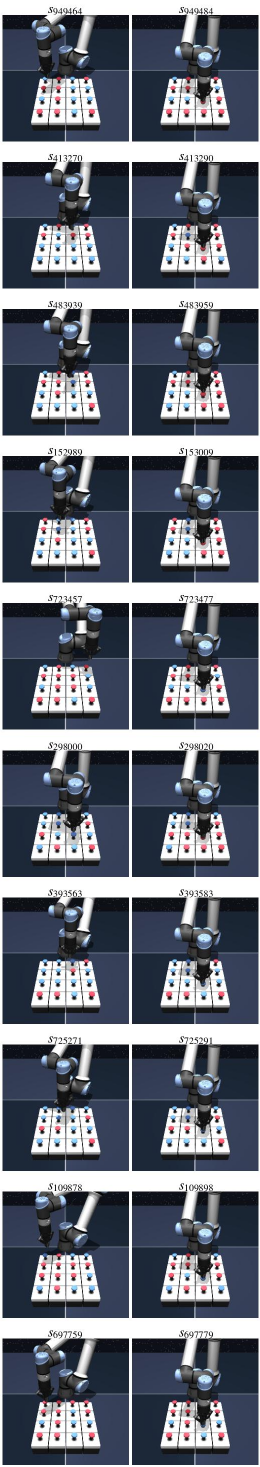}
        \caption*{Top-10 Nearest Analogy}
    \end{subfigure}%
    \hspace{\colsep}%
    \begin{subfigure}[t]{\colw}
        \centering
        \includegraphics[width=\linewidth]{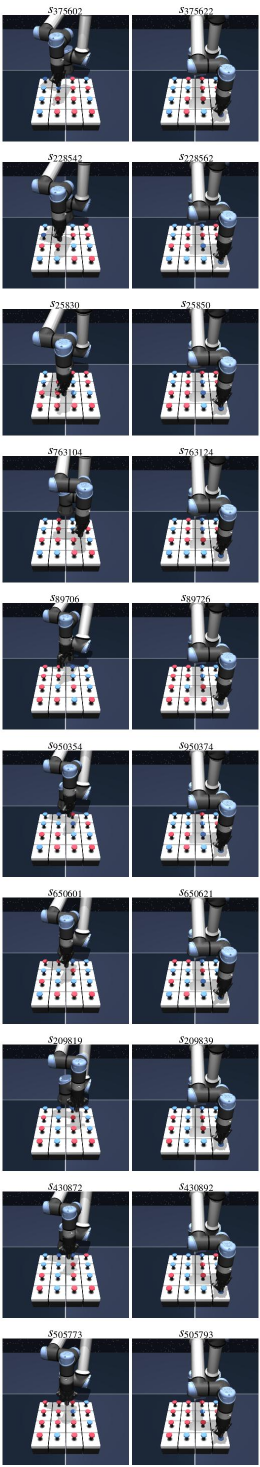}
        \caption*{Top-10 Nearest Analogy}
    \end{subfigure}%
    }

    \caption{\textbf{Qualitative visualization of dual analogies.}
    For each OOC query pair, we visualize the query and its top-10 nearest analogies.}
    \label{fig:dual_analogy_puzzle}
\end{figure}

\begin{figure*}[t]
    \centering
    \setlength{\tabcolsep}{3pt}
    \renewcommand{\arraystretch}{0}

    \begin{tabular}{@{}p{0.15\textwidth}ccp{0.15\textwidth}@{}}
        % scene
        &
        \includegraphics[width=0.34\textwidth]{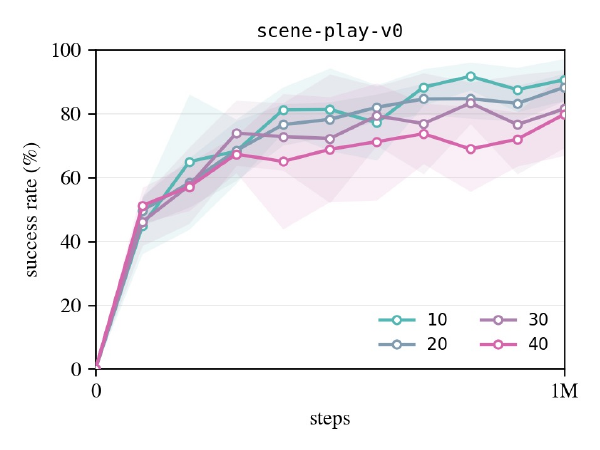} &
        \includegraphics[width=0.34\textwidth]{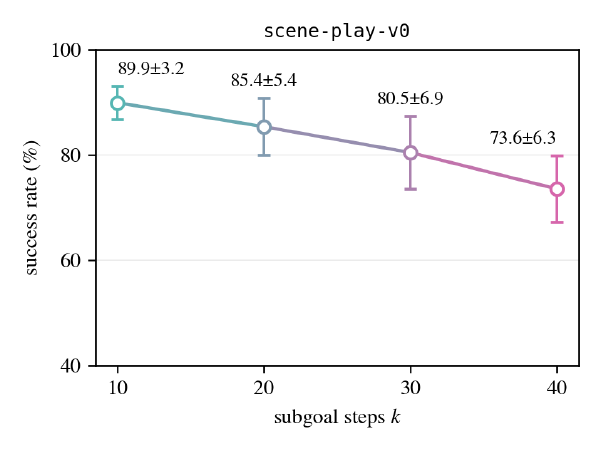} &
        \\
        
        % cube-single
        &
        \includegraphics[width=0.34\textwidth]{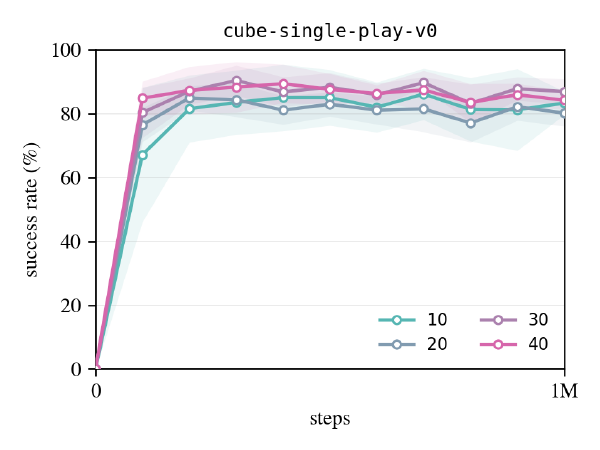} &
        \includegraphics[width=0.34\textwidth]{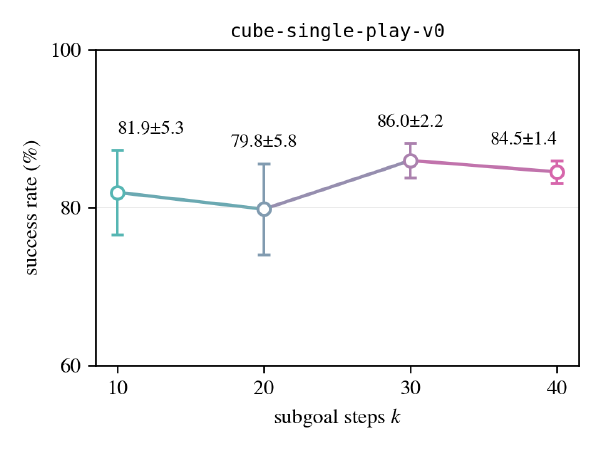} &
        \\

        % cube-double
        &
        \includegraphics[width=0.34\textwidth]{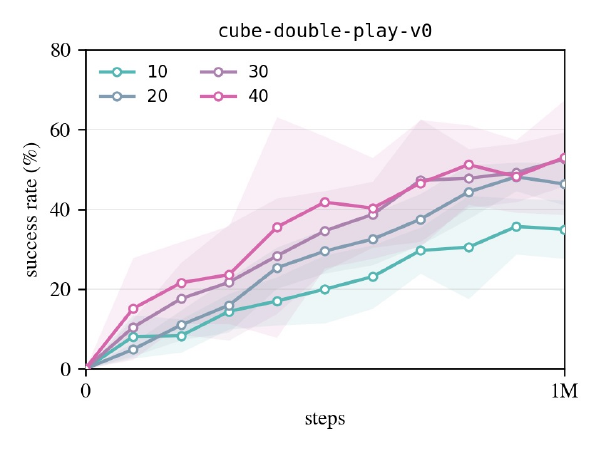} &
        \includegraphics[width=0.34\textwidth]{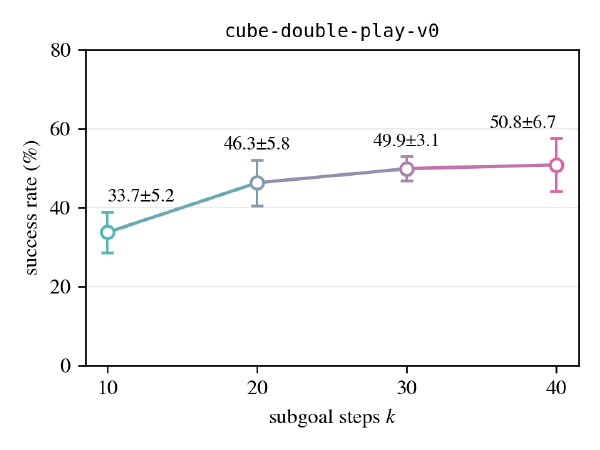} &
        \\

        % puzzle-3x3
        &
        \includegraphics[width=0.34\textwidth]{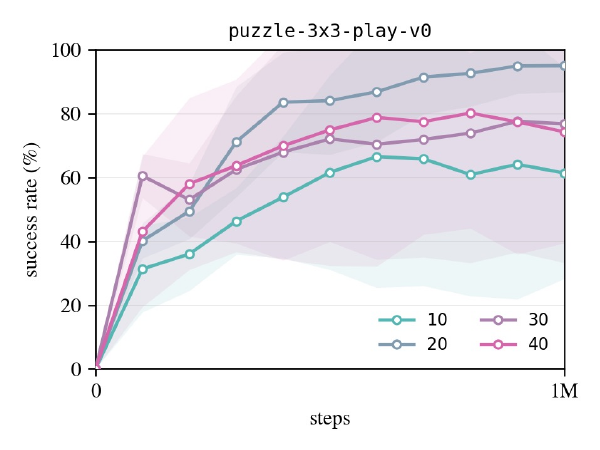} &
        \includegraphics[width=0.34\textwidth]{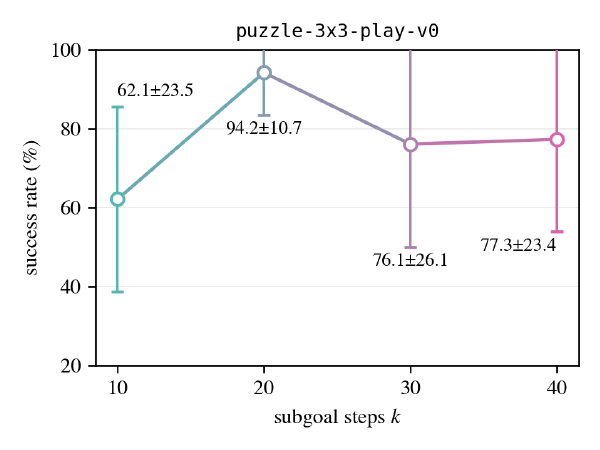} &
        \\

        % puzzle-4x4
        &
        \includegraphics[width=0.34\textwidth]{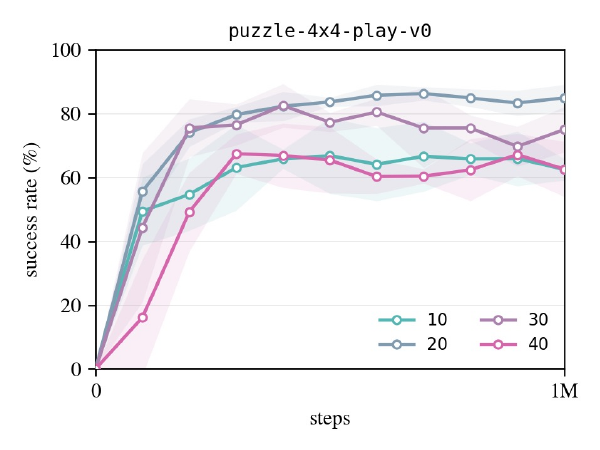} &
        \includegraphics[width=0.34\textwidth]{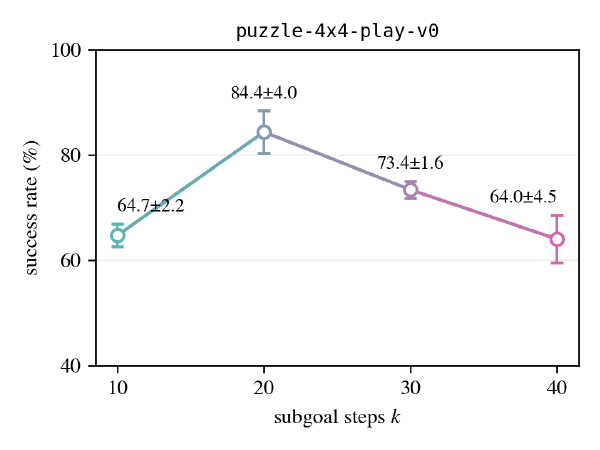} &
        \\
    \end{tabular}

    \captionsetup{skip=3pt, font=small, width=\textwidth}
    \caption{\textbf{Ablation results on subgoal steps $k$.}
    Left: step-wise success rate curves. Right: final performance aggregated over the last three evaluation steps.}
    \label{fig:ss_ablation_grid_tall}
\end{figure*}

\begin{figure*}[t]
    \centering
    \setlength{\tabcolsep}{3pt}
    \renewcommand{\arraystretch}{0}

    \begin{tabular}{@{}p{0.15\textwidth}ccp{0.15\textwidth}@{}}
        % scene
        &
        \includegraphics[width=0.34\textwidth]{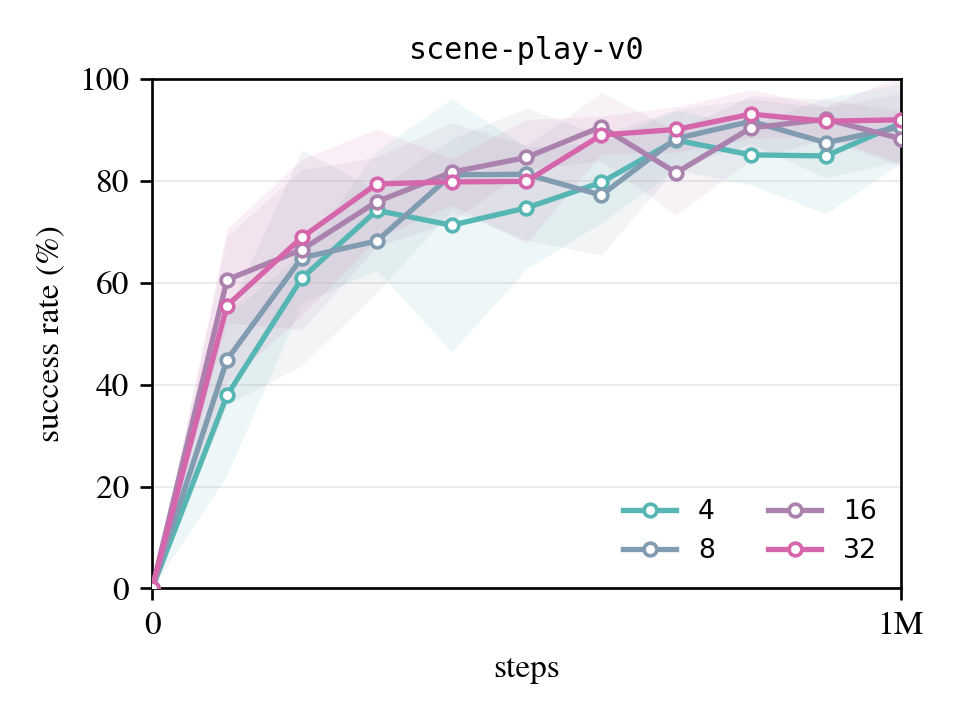} &
        \includegraphics[width=0.34\textwidth]{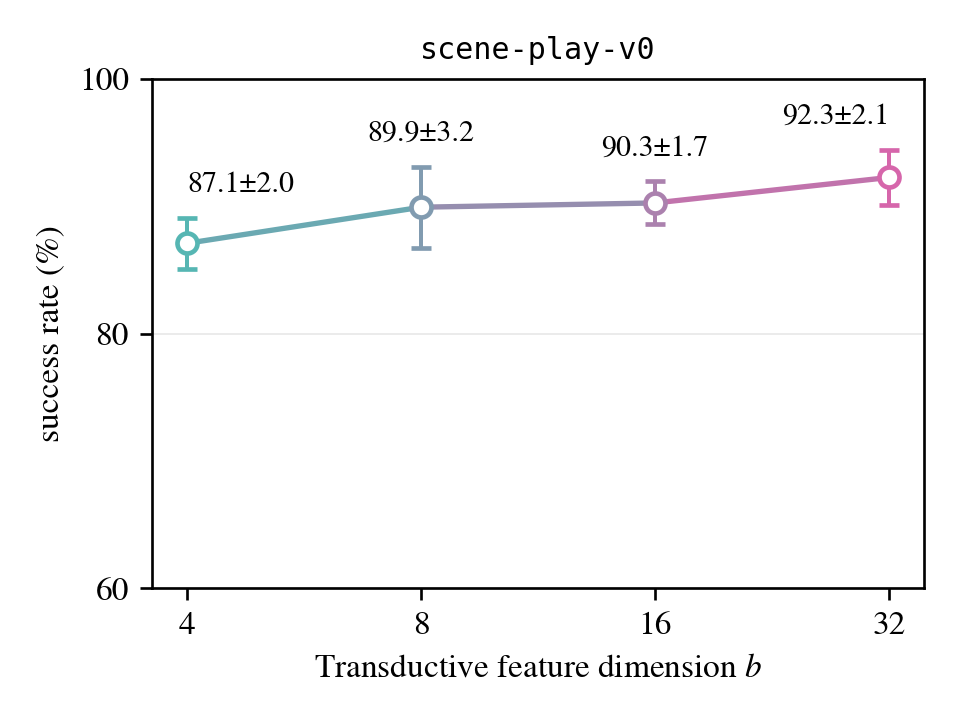} &
        \\
        
        % cube-single
        &
        \includegraphics[width=0.34\textwidth]{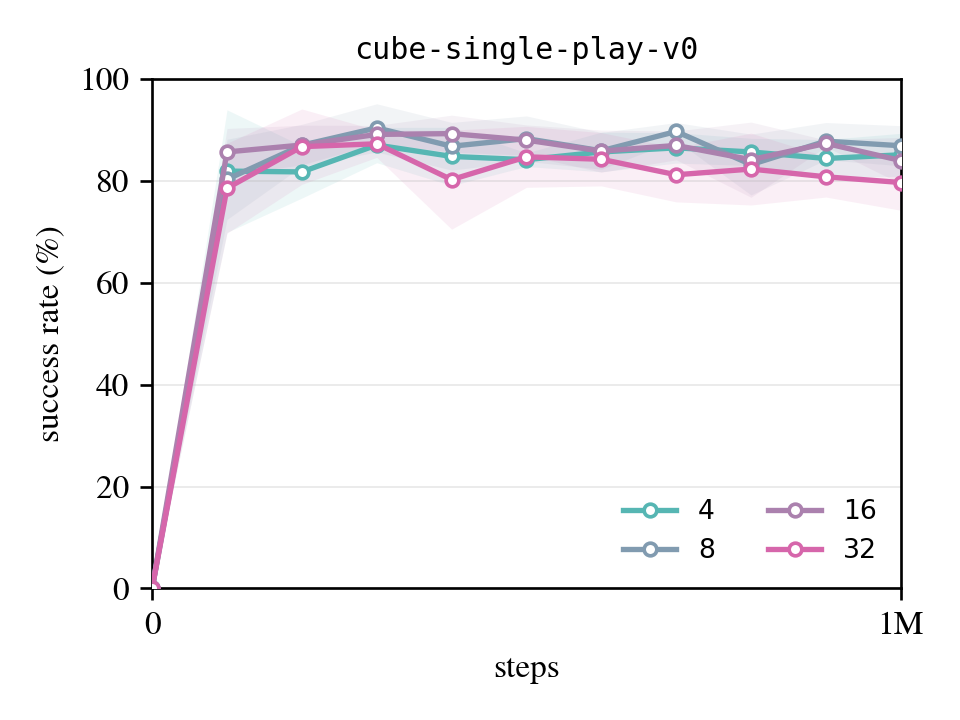} &
        \includegraphics[width=0.34\textwidth]{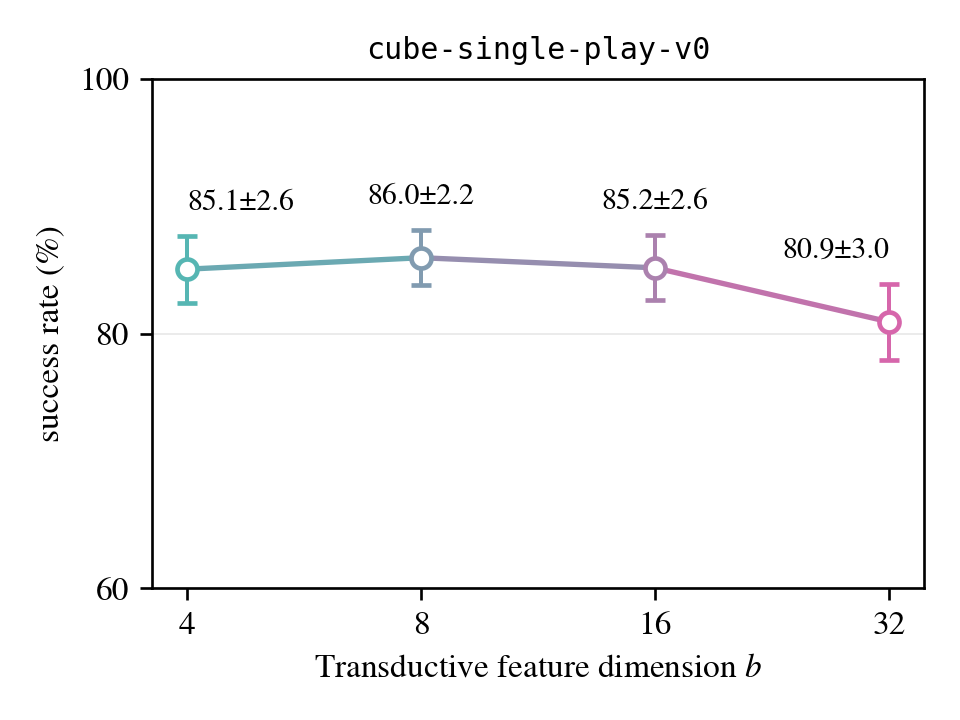} &
        \\

        % cube-double
        &
        \includegraphics[width=0.34\textwidth]{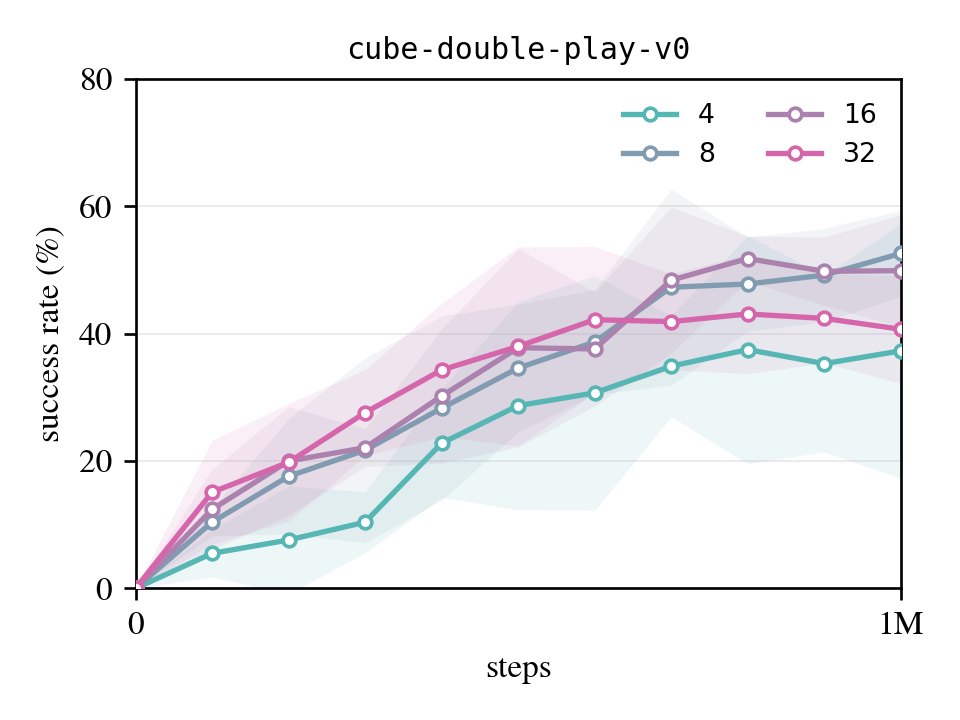} &
        \includegraphics[width=0.34\textwidth]{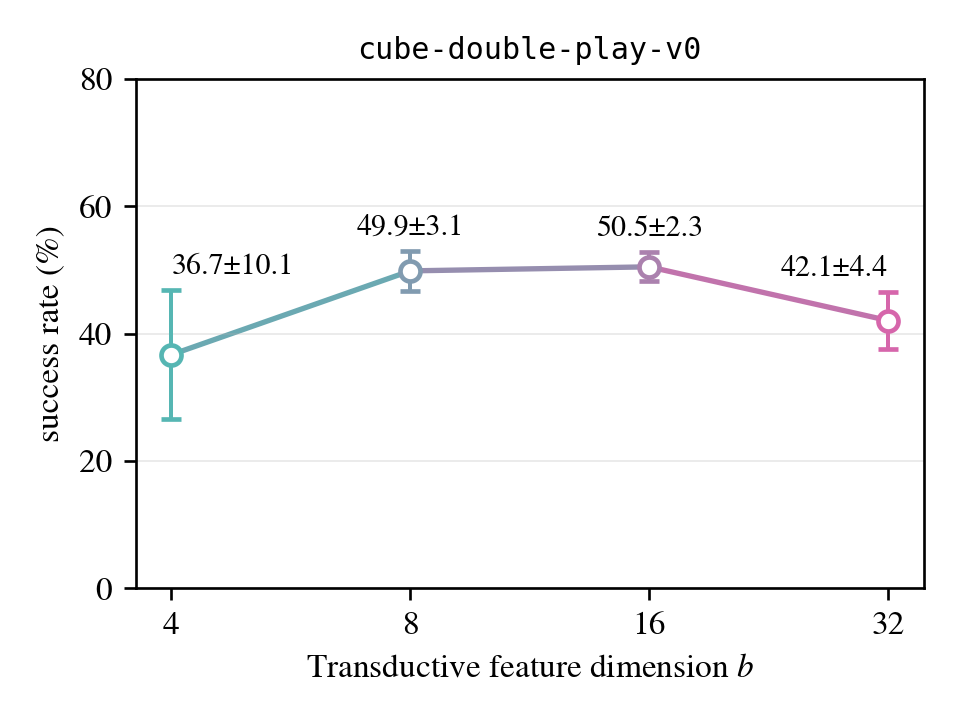} &
        \\

        % puzzle-3x3
        &
        \includegraphics[width=0.34\textwidth]{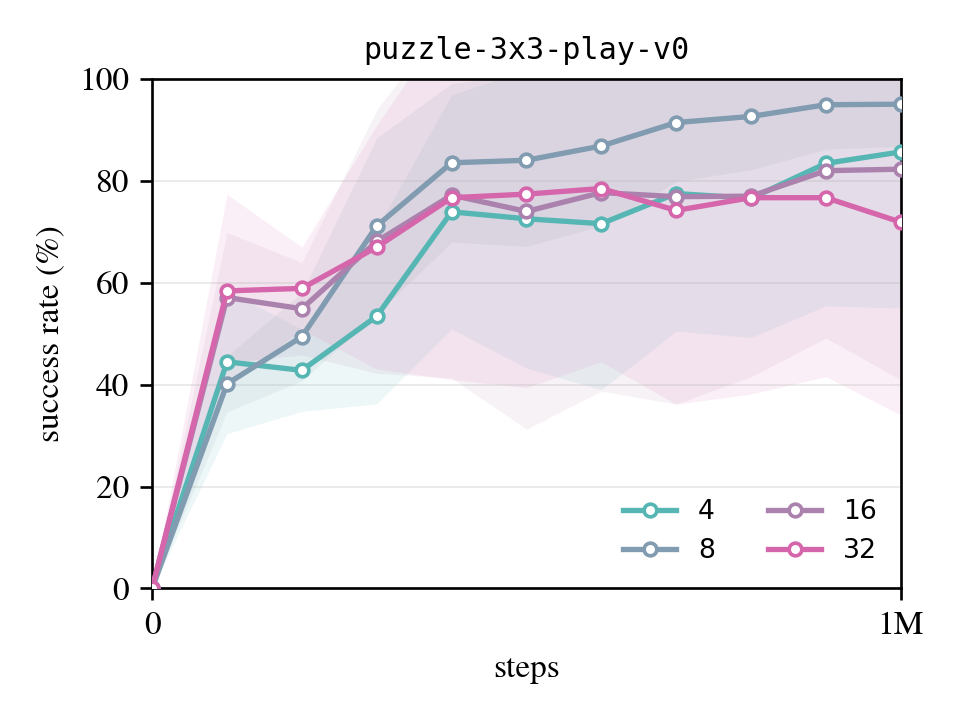} &
        \includegraphics[width=0.34\textwidth]{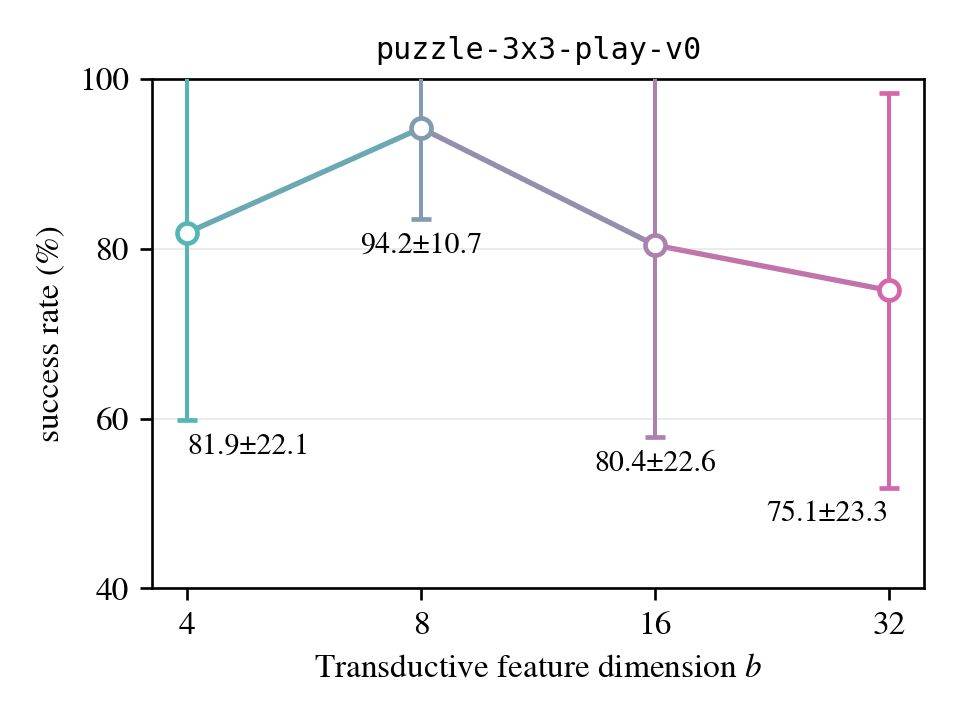} &
        \\

        % puzzle-4x4
        &
        \includegraphics[width=0.34\textwidth]{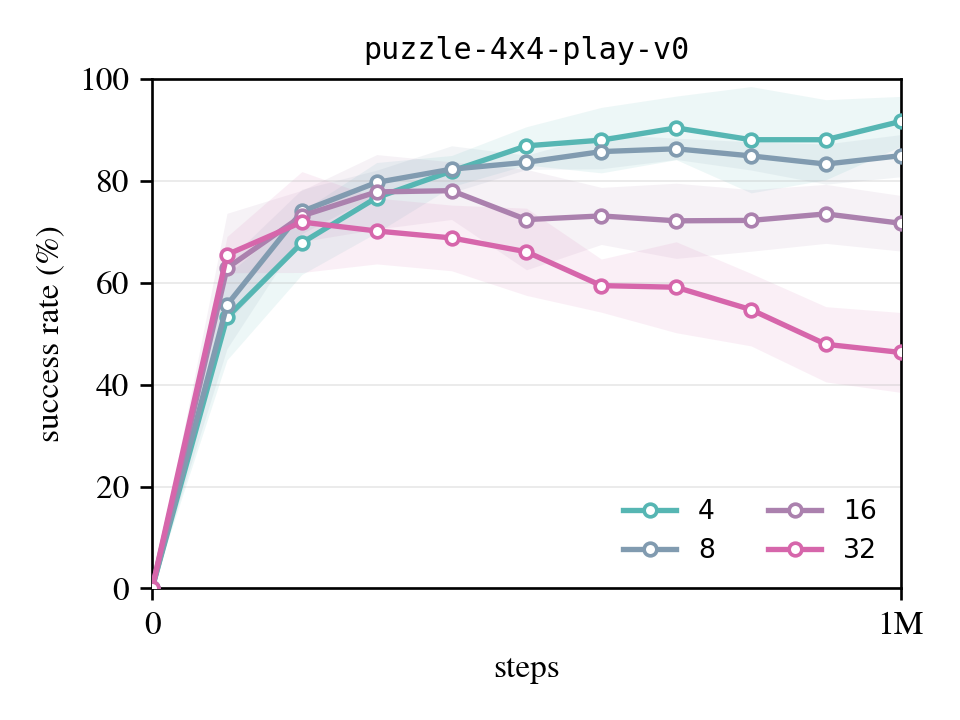} &
        \includegraphics[width=0.34\textwidth]{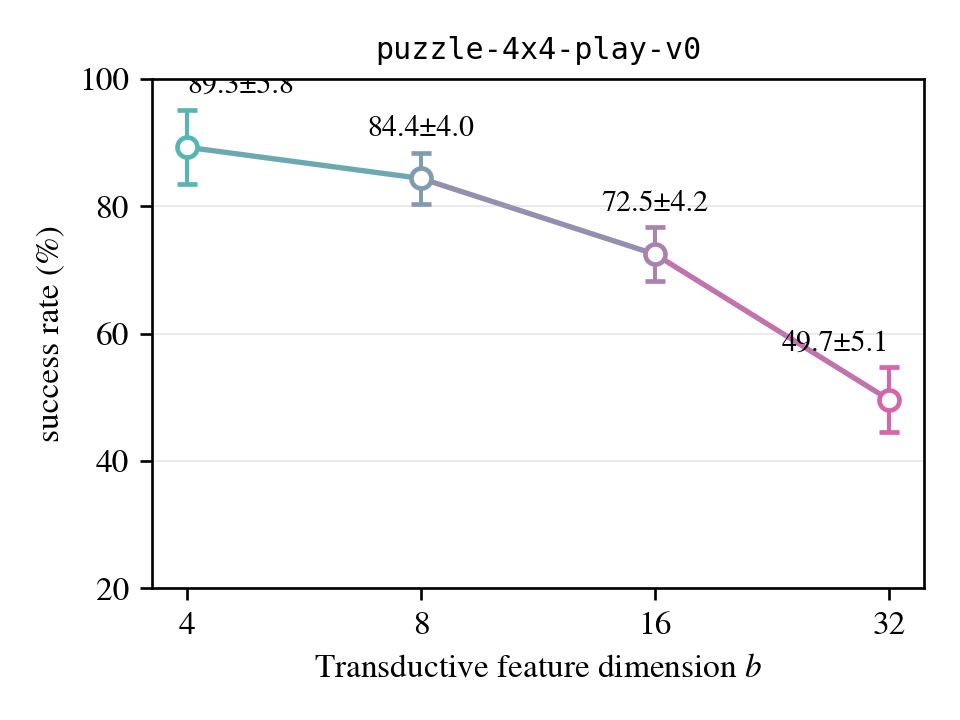} &
        \\
    \end{tabular}

    \captionsetup{skip=3pt, font=small, width=\textwidth}
    \caption{\textbf{Ablation results on transductive feature dimension $b$.}
    Left: step-wise success rate curves. Right: final performance aggregated over the last three evaluation steps.}
    \label{fig:b_ablation_grid_tall}
\end{figure*}
%%%%%%%%%%%%%%%%%%%%%%%%%%%%%%%%%%%%%%%%%%%%%%%%%%%%%%%%%%%%%%%%%%%%%%%%%%%%%%%
%%%%%%%%%%%%%%%%%%%%%%%%%%%%%%%%%%%%%%%%%%%%%%%%%%%%%%%%%%%%%%%%%%%%%%%%%%%%%%%

\end{document}